\newcommand{\calE}{\ensuremath{\mathcal{E}}}
\newcommand{\calF}{\ensuremath{\mathcal{F}}}
\newcommand{\calA}{\ensuremath{\mathcal{A}}}
\newcommand{\calI}{\ensuremath{\mathcal{I}}}
\newcommand{\calS}{\ensuremath{\mathcal{S}}}
\newcommand{\bP}{\ensuremath{\mathbf{P}}}
\newcommand{\bx}{\ensuremath{\mathbf{x}}}
\newcommand{\bw}{\ensuremath{\mathbf{w}}}
\newcommand{\bs}{\ensuremath{\mathbf{s}}}
\newcommand{\bSigma}{\ensuremath{\boldsymbol \Sigma}}
\newcommand{\balpha}{\ensuremath{\boldsymbol \alpha}}
\newcommand{\bdelta}{\ensuremath{\boldsymbol \delta}}
\newcommand{\bmu}{\ensuremath{\boldsymbol \mu}}
\newcommand{\bOmega}{\ensuremath{\boldsymbol \Omega}}
\newcommand{\btheta}{\ensuremath{\boldsymbol \theta}}
\newcommand{\B}{\ensuremath{\text{Beta}}}
\newcommand{\Dir}{\ensuremath{\text{Dir}}}
\newcommand{\var}{\ensuremath{\text{VaR}}}
\newcommand{\cvar}{\ensuremath{\text{CVaR}}}
\newcommand{\brho}{\ensuremath{\boldsymbol \rho}}
\newcommand{\E}{\mathbb{E}}
\newcommand{\V}{\text{Var}}
\newcommand{\argmax}{\mathop{\mathrm{arg\,max}{}}}
\newcommand{\tE}{\widetilde{\mathbb{E}}}
\newtheorem{theorem}{Theorem}[section]
\newtheorem{lemma}[theorem]{Lemma}
\newtheorem{proposition}[theorem]{Proposition}
\newtheorem{corollary}[theorem]{Corollary}
\newtheorem{assumption}[theorem]{Assumption}
\newenvironment{proof}[1][Proof]{\begin{trivlist}
\item[\hskip \labelsep {\bfseries #1}]}{\end{trivlist}}
\newenvironment{remark}[1][Remark]{\begin{trivlist}
\item[\hskip \labelsep {\bfseries #1}]}{\end{trivlist}}
\newcommand{\qed}{\nobreak \ifvmode \relax \else
      \ifdim\lastskip<1.5em \hskip-\lastskip
      \hskip1.5em plus0em minus0.5em \fi \nobreak
      \vrule height0.75em width0.5em depth0.25em\fi}
\title{Statistical Decision Making for Optimal Budget Allocation in Crowd Labeling}
\author{Xi Chen \\ UC Berkeley \\ xichen@cs.cmu.edu \and Qihang Lin \\ University of Iowa  \\qihang-lin@uiowa.edu \and Dengyong Zhou \\ Microsoft Research \\dengyong.zhou@microsoft.com }
\begin{document}

\date{}
\maketitle

\begin{abstract}
  There is an increasing popularity in crowdsourcing  data labeling tasks to non-expert workers or annotators recruited through commercial internet services such as  Amazon Mechanical Turk. Those crowdsourcing workers need to be paid for each label they provide, while a task  requester usually only has a limited amount of budget for data labeling. So it is desirable to have an optimal policy to wisely allocate the budget among workers and data instances which need to label by considering worker reliability and task difficulty such that the quality of the finally aggregated labels can be maximized. We formulate such a  budget allocation problem as a Bayesian Markov decision process (MDP) which simultaneously conducts learning and decision making. Under our framework,  the optimal allocation policy can be obtained  by applying dynamic programming (DP), but DP quickly becomes computationally intractable when the size of the problem increases. To solve this challenge, we propose a computationally efficient approximate policy called optimistic knowledge gradient policy. Experiments on both synthetic and real data show that at the same budget level our policy results in higher quality labels  than  existing policies. 
\end{abstract}

\section{Introduction}

In many machine learning applications, data are usually collected without labels. For example, a digital camera does not automatically tag a picture as a portrait or a landscape. 
A traditional way for data labeling is to hire a small group of experts to provide labels  for the entire set of data. However, for large-scale data, such an approach  becomes inefficient and very costly. Thanks to the advent of online crowdsourcing services such as Amazon Mechanical Turk, a much more efficient way is to post unlabeled data to a crowdsourcing marketplace, where a big crowd of low-paid workers can be hired instantaneously to perform labeling tasks.


Despite of its high efficiency and immediate availability, crowd labeling raises many new challenges. Since labeling tasks are tedious and workers are usually non-experts, labels generated by the crowd suffer from low quality. As a remedy,  most crowdsourcing services resort to labeling redundancy to reduce the labeling noise, which is achieved by collecting multiple labels from different workers for each data instance. In particular, a crowd labeling process can be described as a two phase procedure:
\begin{enumerate}
  \item Assign unlabeled data  to a crowd of workers and each data instance is asked to label multiple times;
  \item Aggregate the collected raw labels to infer the true labels.
\end{enumerate}
In principle, more raw labels will lead to a higher chance of recovering the true label. However, each raw label comes with a cost: the requester has to pay workers  pre-specified monetary reward for each label they provide, usually, regardless of the label's correctness. For example, a worker typically earns 10 cents by categorizing a website as porn or not. In practice, the requester has only a limited amount of budget which essentially restricts the total number of raw labels that he/she can collect. This raises a challenging question central in crowd labeling: \emph{What is the best way to allocate the budget among data instances and workers so that the overall accuracy of aggregated labels is maximized ? }


The most important factors that decide how to allocate the budget are the intrinsic characteristics of data instances and workers: \emph{labeling difficulty/ambiguity for each data instance} and \emph{reliability/quality of each worker}. In particular, an instance is less ambiguous if its label can be decided based on the common knowledge and a vast majority of reliable workers will provide the same label for it. In principle, we should avoid spending too much budget on those easy instances since excessive raw labels will not bring much additional information. In contrast, for an ambiguous instance which falls near the boundary of categories, even those reliable workers will still disagree with each other and generate inconsistent labels. For those ambiguous instances, we are facing a challenging decision problem on how much budget that we should spend on them. On one hand, it is worth to collect more labels to boost the accuracy of the aggregate label. On the other hand, since our goal is to  maximize the \emph{overall} labeling accuracy, when the budget is limited, we should simply put those few highly ambiguous instances aside to save budget for labeling less difficult instances. In addition to the ambiguity of data instances, the other important factor is the reliability of each worker and, undoubtedly, it is desirable to assign  more instances to those reliable workers. Despite of their importance in deciding how to allocate the budget, both the data ambiguity and workers' reliability are unknown parameters at the beginning and need to be updated based on the stream of collected raw labels in an online fashion. This further suggests that the budget allocation policy should be dynamic and simultaneously conduct parameter estimation and decision making.

To search for an optimal budget allocation policy, we model the data ambiguity and workers' reliability using two sets of random variables drawn from known prior distributions. Then, we formulate the problem into a finite-horizon Bayesian Markov Decision Process (MDP) \cite{Puterman:05}, whose state variables are the posterior distributions of these variables, which are updated by each new label. Here, the Bayesian setting is necessary. We will show that an optimal policy only exists in the Bayesian setting. Using the MDP formulation, the optimal budget allocation policy for any finite budget level can be readily obtained via the dynamic programming (DP). However, DP is computationally intractable for large-scale problems since the size of the state space grows exponentially in budget level. The existing widely-used approximate policies, such as approximate Gittins index rule \cite{Gittins:89} or knowledge gradient (KG) \cite{Gupta:96,Frazier:08}, either has a high computational cost or poor performance in our problem. In this paper, we propose a new policy, called \emph{optimistic knowledge gradient (Opt-KG)}. In particular, the Opt-KG policy dynamically chooses the next instance-worker pair  based on the optimistic outcome of the marginal improvement on the accuracy, which is a function of state variables. We further propose a more general Opt-KG policy using the conditional value-at-risk measure \cite{Rockafellar:02}. The Opt-KG is computationally efficient, achieves superior empirical performance and has some asymptotic theoretical guarantees.

To better present the main idea of our MDP formulation and the Opt-KG policy, we start from the binary labeling task (i.e., providing the category, either positive or negative, for each instance). We first consider the \emph{pull marketplace} (e.g., Amazon Mechanical Turk or Galaxy Zoo) , where the labeling requester can only post instances to the general worker pool with either anonymous or transient workers, but cannot assign to an identified worker. In a pull marketplace, workers are typically treated as \emph{homogeneous} and one models the entire worker pool instead of each individual worker.  We further assume that workers are fully reliable (or noiseless) such that the chance that they make an error only depend on instances' own ambiguity. At a first glance, such an assumption may seem oversimplified. In fact, it turns out that the budget-optimal crowd labeling under such an assumption has been highly non-trivial. We formulate this problem into a Bayesian MDP and propose the computational efficient Opt-KG policy. We further prove that the Opt-KG policy in such a setting is asymptotically consistent, that is,  when the budget goes to infinity, the accuracy converges to 100\% almost surely.

Then, we extend the MDP formulation to deal with \emph{push marketplaces} with \emph{heterogeneous} workers. In a push marketplace (e.g., data annotation team in   Microsoft Bing group), once an instance is allocated to an identified worker, the worker is required to finish the instance in a short period of time. Based on the previous model for fully reliable workers, we further introduce another set of parameters to characterize workers' reliability. Then our decision process simultaneously selects the next instance to label and the next worker for labeling the instance according to the optimistic knowledge gradient policy. 
In fact, the proposed MDP framework is so flexible that we can further extend it to incorporate  contextual information of instances whenever they are available (e.g., as in many web search and advertising applications \cite{Li:10}) and to handle multi-class labeling.

In summary, the main contribution of the paper consists of the three folds: (1) we formulate the budget allocation in crowd labeling into a MDP and characterize the \emph{optimal} policy using DP; (2) computationally, we propose an efficient approximate policy, optimistic knowledge gradient;  (3) the proposed MDP framework can be used as a general framework to address various budget allocation problems in crowdsourcing (e.g., rating and ranking tasks).

The rest of this paper is organized as follows. In Section \ref{sec:binary}, we first present the modeling of budget allocation process for binary labeling tasks with fully reliable workers and motivate our Bayesian modeling. In Section \ref{sec:MDP}, we present the Bayesian MDP and the optimal policy via DP. In Section \ref{sec:Opt-KG}, we propose a computationally efficient approximate policy, Opt-KG. In Section \ref{sec:worker}, we extend our MDP to model heterogeneous workers with different reliability.  In Section \ref{sec:extension}, we present other important extensions, including incorporating contextual information and multi-class labeling. In Section \ref{sec:related}, we discuss the related works. In Section \ref{sec:exp},  we present numerical
results on both simulated and real datasets, followed by conclusions in Section \ref{sec:conclusion}.

\section{Binary Labeling with Homogeneous Noiseless Workers}
\label{sec:binary}

We first consider the budget allocation problem in a pull marketplace with homogeneous noiseless workers for binary labeling tasks. We note that such a simplification is important for investigating this problem, since the incorporation of workers' reliability and extensions to multiple categories  become rather straightforward once this problem is correctly modeled (see Section \ref{sec:worker} and \ref{sec:extension}).


Suppose that there are $K$ instances and each one is associated with a latent true label $Z_i \in \{-1, 1\}$ for $1 \leq i \leq K$.  Our goal is to infer the set of  positive instances, denoted by $H^*=\{i: Z_i=1\}$. Here, we assume that the homogeneous worker pool is \emph{fully reliable} or \emph{noiseless}. We note that it does not mean that each worker knows the true label $Z_i$. Instead, it means that fully reliable workers will do their best to make judgements but their labels may be still incorrect due to the instance's ambiguity. Further, we model  the \emph{labeling difficulty/ambiguity} of each instance by a latent soft-label $\theta_i$, which can be interpreted as the percentage of workers in the homogeneous noiseless crowd who will label the $i$-th instance as positive. In other words, if we randomly choose a worker from a large crowd of fully reliable workers, we will receive a positive label for the $i$-th instance with probability $\theta_i$ and a negative label with probability $1-\theta_i$. In general, we assume the crowd is large enough so that the value of $\theta_i$ can be any value in $[0,1]$. To see how $\theta_i$ characterizes the labeling difficulty of the $i$-th instance, we consider a concrete example where a worker is asked to label a person as adult (positive) or not (negative) based on the photo of that person. If the person is more than 25 years old, most likely, the corresponding $\theta_i$ will be close to 1, generating positive labels consistently. On the other hand, if the person is younger than 15, she may be labeled as negative by almost all the reliable workers since $\theta_i$ is close to 0. In both of this cases, we regard the instance (person) easy to label since $Z_i$ can be inferred with a high accuracy based on only a few raw labels. On the contrary, for a person is one or two years below or above 18, the $\theta_i$ is near 0.5 and the numbers of positive and negative labels become relatively comparable so that the corresponding labeling task is very difficult. Given the definition of soft labels, we further make the following assumption:

\begin{assumption}
  We  assume that the soft-label $\theta_i$ is consistent with the true label in the sense that $Z_i=1$ if and only if $\theta_i \geq 0.5$, i.e., the majority of the crowd are correct, and hence $H^*= \{i: \theta_i \geq 0.5 \}$.
  \label{assump:sf_consistent}
\end{assumption}

Given the total budget, denoted by $T$, we suppose that each label costs one unit of budget. As discussed in the introduction, the crowd labeling has two phases. The first phase is the \emph{budget allocation phase}, which is a dynamic decision process with $T$ stages. In each stage  $0 \leq t \leq T-1$, an instance $i_t\in\calA=\{1,\ldots, K\}$ is selected based on the historical labeling results. Once $i_t$ is selected, it will be labeled by a random worker  from the homogeneous noiseless worker pool. According to the definition of $\theta_{i_t}$, the label received, denoted by $y_{i_t} \in \{-1,1\}$, will follow the Bernoulli distribution with the parameter $\theta_{i_t}$:
\begin{equation}
\Pr\left(y_{i_t}=1\right)=\theta_{i_t}  \qquad  \text{and} \qquad \Pr\left(y_{i_t}=-1\right)=1-\theta_{i_t}.
\end{equation}
We note that, at this moment, all workers are assumed to be homogeneous and noiseless so that $y_{i_t}$ only depends on $\theta_{i_t}$ but not on which worker provides the label. Therefore, it is suffice for the decision maker (e.g., requester or crowdsourcing service) to select the instance in each stage instead of an instance-worker pair.

The second phase is the \emph{label aggregation phase}. When the budget is exhausted, the decision maker needs to infer true labels $\{Z_i\}_{i=1}^n$ by aggregating all the collected labels. According to Assumption \ref{assump:sf_consistent}, it is equivalent to infer the set of positive instances whose $\theta_i\geq 0.5$. 
Let $H_T$ be the estimated positive set. The final overall accuracy is measured by $|H_T \cap H^*| + |(H_T)^c \cap (H^*)^c|$, the size of the mutual overlap between $H^*$ and $H_T$.

Our goal is to determine the \emph{optimal allocation policy},  $(i_0, \ldots, i_{T-1})$, so that overall accuracy is maximized. Here, a natural question to ask is whether the \emph{optimal} allocation policy  exists and what assumptions do we need for the existence of the optimal policy. To answer this question, we provide a concrete example, which motivates our Bayesian modeling.


\subsection{Why we need a Bayesian modeling}
\label{sec:example}

\renewcommand{\arraystretch}{1.2}
\begin{table}[!t]
\renewcommand{\tabcolsep}{15pt}
\centering
    \caption{A toy example with 3 instances to label.  Five labels have been collected. Assume that we have the budget for one more label.  Which instance should be selected to label? }
    \begin{tabular}{|c|c|c|c|}\hline
    Instance 1 ($\theta_1$) & 1 & 1 & \text{label?}\\ \hline
    Instance 2 ($\theta_2$) & 1 & $-1$ & \text{label?} \\ \hline
    Instance 3 ($\theta_3$) & 1 & \text{label?} &  \\ \hline
    \end{tabular}
    \label{tab:example}
\end{table}


\renewcommand{\arraystretch}{1.2}
\begin{table}[!t]
\renewcommand{\tabcolsep}{6pt}
\centering
\caption{Expected improvements in accuracy for collecting an extra label, i.e., the expected accuracy of obtaining one more label minus the current expected accuracy.  The 3rd and 4th columns contain the accuracies with the next label being 1 and $-1$. The 5th is the expected accuracy which is computed by taking $\theta$ times the 3rd column plus $(1-\theta)$ times the 4th. The last column contains the expected improvements which is computed by taking the difference between the 5th and 2nd columns.}
\begin{tabular}{|c|c|c|c|c|c|c|c|} \hline
               & Current Accuracy   & $y=1$       & $y=-1$      & Expected Accuracy                    & Improvement   \\ \hline
$\theta_1>0.5$  & 1       & 1         & 1         & 1                           &  0        \\ \hline
$\theta_1<0.5$  & 0       & 0         & 0         & 0                           &  0        \\ \hline
$\theta_2>0.5$  & 0.5     & 1         & 0         & $\theta_2$                  & $\theta_2-0.5>0$ \\ \hline
$\theta_2<0.5$  & 0.5     & 0         & 1         & $1-\theta_2$                & $0.5-\theta_2>0$\\ \hline
$\theta_3>0.5$  & 1       & 1         & 0.5       & $\theta_3+0.5(1-\theta_3)$  & $0.5(\theta_3-1)<0$ \\ \hline
$\theta_3<0.5$  &  0      & 0         & 0.5       & $0.5(1-\theta_3)$            & $0.5 (1-\theta_3)>0$ \\ \hline
\end{tabular}
\label{tab:acc}
\end{table}

\begin{figure}[!t]
\centering
  \includegraphics[width=0.5\textwidth]{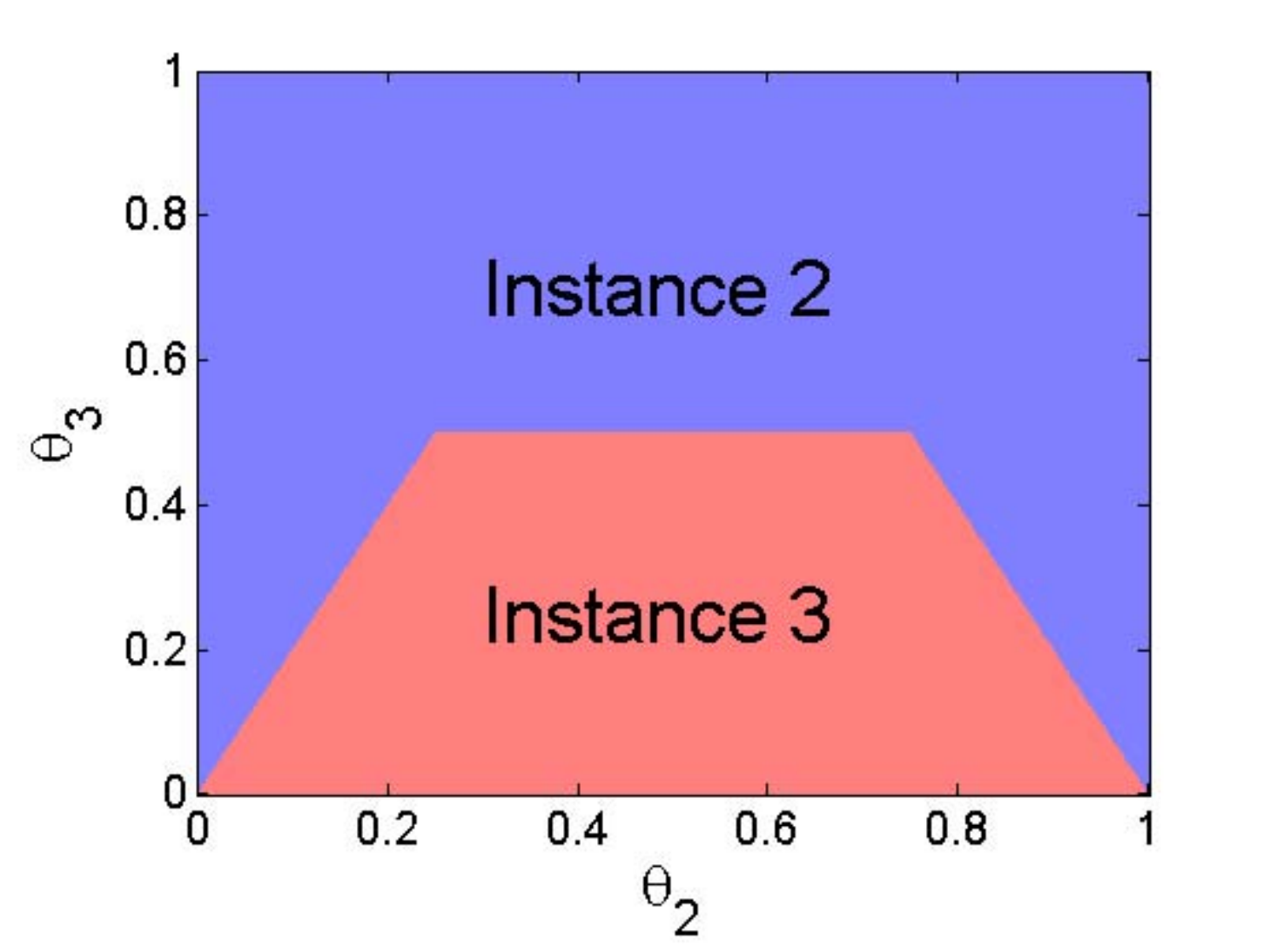}
  \caption{Decision Boundary.}
  \label{fig:dec}
\end{figure}

Let us check a toy example with 3 instances and 5 collected labels (see Table \ref{tab:example}). We assume that the workers are homogenous noiseless and the label aggregation is performed by the majority vote rule. \emph{Now if we only have the budget to get one more label, which instance  should be chosen to label?} It is obvious that we should not put the remaining budget on the first instance since we are relatively more confident on what its true label should be. Thus, the problem becomes how to choose between the second and third instances. In what follows, we shall show that \emph{there is no optimal policy under the frequentist setting}. To be more explicit, the optimal policy leads to the expected accuracy which is at least as good as that of all other policies for any values of $\{\theta_i\}_{i=1}^n$.

Let us compute the expected improvement in accuracy in terms of the frequentist risk in Table \ref{tab:acc}. We assume that $\theta_i \neq 0.5$ and if the number of 1 and $-1$ labels are the same for an instance, the accuracy is 0.5 based on a random guess. From Table \ref{tab:acc}, we should not label the first instance since the improvement is always 0. This coincides with our intuition. When $\max(\theta_2-0.5, 0.5-\theta_2)>0.5(1-\theta_3)$ or $\theta_3>0.5$, which corresponds to the blue region in Figure \ref{fig:dec}, we should choose to label the second instance. Otherwise, we should ask the label for the third one. Since the true value of $\theta_2$ and $\theta_3$ are unknown, a  optimal policy does not exist under the frequentist paradigm. Further, it will be difficult to estimate $\theta_2$ and $\theta_3$ accurately when the budget is very limited. 

In contrast, in a Bayesian setting with prior distribution on each $\theta_i$, the optimal policy is defined as the policy which leads to the highest expected accuracy under the given prior instead of for any possible values of $\{\theta_i\}_{i=1}^n$. Therefore, we can optimally determine the next instance to label by taking another expectation over the distribution of $\theta_i$. In this paper, we adopt the Bayesian modeling to formulate the budget allocation problem in crowd labeling.

\section{Bayesian MDP and Optimal Policy}
\label{sec:MDP}
In this section, we introduce a Bayesian MDP framework and discuss its optimal policy.
\subsection{Bayesian Modeling}

We  assume that each $\theta_i$ is drawn from a known Beta prior $\B(a^0_i, b^0_i)$. Beta is a rich family of distributions in the sense that it exhibits a fairly wide variety of shapes on the domain of $\theta_i$, i.e., the unit interval $[0,1]$. 
For presentation simplicity, instead of considering a full Bayesian model with hyperpriors on $a^0_i$ and $b^0_i$, we fix $a_i^0$ and $b_i^0$ at the beginning.
In practice, if the budget is sufficient, one can first label each instance equally many times to pre-estimate $\{a_i^0, b_i^0\}_{i=1}^K$ before the dynamic labeling procedure is invoked. Otherwise, when there is no prior knowledge, we can simply assume $a_i^0=b_i^0=1$  so that the prior is a uniform distribution. According to our simulated experimental results in Section \ref{sec:exp_prior_instance}, uniform prior works reasonably well unless the data is highly skewed in terms of class distribution. Other commonly used uninformative priors such as Jeffreys prior or reference prior ($\B(1/2, 1/2)$) or Haldane prior ($\B(0,0)$) can also be adopted (see \cite{Bayesian:07} for more on uninformative priors). Choices of prior distributions are discussed in more details in Section \ref{sec:discussion}.

At each stage $t$ with $\B(a_i^t, b_i^t)$ as the current posterior distribution for $\theta_i$, we make a decision by choosing an instance $i_t \in \mathcal{A}=\{1,\ldots, K\}$ and acquire its label $y_{i_t} \sim \text{Bernoulli} (\theta_{i_t})$. Here $\mathcal{A}$ denotes the \emph{action set}. By the fact that Beta is the conjugate prior of the Bernoulli, the posterior of $\theta_{i_t}$  in the stage $t+1$ will be updated as:
 \begin{eqnarray*}
 \B(a_{i_t}^{t+1}, b_{i_t}^{t+1}) = \begin{cases}
   \B(a_{i_t}^t+1, b_{i_t}^t) \quad &  \text{if}   \quad y_{i_t}=1;\\
   \B(a_{i_t}^t, b_{i_t}^t+1) \quad  & \text{if}   \quad y_{i_t}=-1.
 \end{cases}
\end{eqnarray*}
We put $\{a_i^t, b_i^t\}_{i=1}^K$ into a $K \times 2$ matrix $S^t$, called a \emph{state matrix}, and let $S_i^t=(a_i^t, b_i^t)$ be the $i$-th row of $S^t$. The update of the state matrix can be written in a more compact form:
\begin{equation}
  S^{t+1}=\begin{cases}
    S^t+(\mathbf{e}_{i_t}, \mathbf{0}) & \text{if} \; y_{i_t}=1; \\
    S^t+( \mathbf{0}, \mathbf{e}_{i_t}) &  \text{if} \;  y_{i_t}=-1,
  \end{cases}
\end{equation}
where $\mathbf{e}_{i_t}$ is a $K \times 1$ vector with $1$ at the $i_t$-th entry and 0 at all other entries.
As we can see, $\{S^t\}$ is a Markovian process because $S^{t+1}$ is completely determined by the current state $S^t$, the action $i_t$ and the obtained label $y_{i_t}$. It is easy to calculate the \emph{state transition probability} $\Pr(y_{i_t} | S^t, i_{t})$, which is the posterior probability that we are in the next state $S^{t+1}$ if we choose $i_t$ to be label in the current state $S^t$:
\begin{eqnarray}
  \Pr(y_{i_t}=1 | S^t, i_t) = \E(\theta_{i_t} | S^t) = \frac{a^t_{i_t}}{a^t_{i_t}+b^t_{i_t}} \quad \text{and} \quad \Pr(y_{i_t}=-1| S^t, i_t)= \frac{b^t_{i_t}}{a^t_{i_t}+b^t_{i_t}}
  \label{eq:tran_prob}
\end{eqnarray}

Given this labeling process,  the budget allocation policy is defined as a sequence of decisions: $\pi=(i_0, \ldots, i_{T-1})$. Here, we require decisions depend only upon the previous information.  To make this more formal, we define a filtration $\{\calF_t\}_{t=0}^T$, where $\calF_t$ is the information collected until the stage $t-1$. More precisely, $\calF_t$ is the
the $\sigma$-algebra generated by the sample path $(i_0, y_{i_0}, \ldots, i_{t-1}, y_{i_{t-1}})$.  We require the action $i_t$ is determined based on the historical labeling results up to the stage $t-1$, i.e., $i_t$ is $\calF_t$-measurable.



\subsection{Inference about the True Labels}

As described in Section \ref{sec:binary}, the budget allocation process has two phases: the dynamic budget allocation phase and  the label aggregation   phase.  
Since the goal of the dynamic budget allocation in the first phase is to maximize the accuracy of aggregated labels in the second phase, we first present how to infer the true label via label aggregation in the second phase.

When the decision process terminates at the stage $T$,  we need to determine a positive set $H_T$ to maximize the \emph{conditional} expected accuracy conditioning on $\calF_T$, which corresponds to minimizing the posterior risk:

\begin{equation}
H_T = \argmax_{H \subset \{1,\ldots, K\}} \E\left( \sum_{i=1}^K \bigl(\mathbf{1}(i \in H) \cdot \mathbf{1}(i \in H^*) +  \mathbf{1}(i \not \in H) \cdot \mathbf{1}(i \not \in H^*) \bigr) \Bigg| \calF_T \right),
\label{eq:H_T}
\end{equation}
where $\mathbf{1}(\cdot)$ is the indicator function\footnote{For example, $\mathbf{1}(i \in H^*)=1$ if $i\in H^*$ and 0 if $i \not \in H^*$.}.  The term inside expectation in \eqref{eq:H_T} is the binary labeling accuracy which can also be written as $|H \cap H^*| + |H^c \cap (H^*)^c|$.

We first observe that, for $0\leq t\leq T$, the conditional distribution $\theta_i|\calF_t$ is exactly the posterior distribution $\B(a_i^t,b_i^t)$, which depends on the historical sampling results only through   $S_i^t=(a_i^t, b_i^t)$. Hence, we  define 
\begin{align}
& I(a,b) \doteq \Pr(\theta \geq 0.5 | \theta \sim \mathrm{Beta}(a,b)), \label{Iab} \\
& P^t_i \doteq \Pr(i \in H^* | \calF_t) =  \Pr(\theta_i \geq 0.5 | \calF_t) = \Pr(\theta_i \geq 0.5 | S_i^t)=I(a_i^t,b_i^t), \label{Pti}
\end{align}
As shown in  \cite{Xie:12}, the optimal positive set $H_T$ can be determined by the Bayes decision rule as follows.
\begin{proposition}
$H_T = \{i: \Pr(i \in H^* | \calF_T) \geq 0.5\}= \{i: P_i^T \geq 0.5\}$ solves \eqref{eq:H_T}.
\label{prop:H}
\end{proposition}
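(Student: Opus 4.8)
The plan is to maximize the conditional expected accuracy in \eqref{eq:H_T} by exploiting the additive, separable structure of the objective over the $K$ instances. Since the term inside the expectation decomposes as a sum $\sum_{i=1}^K \bigl(\mathbf{1}(i \in H)\,\mathbf{1}(i \in H^*) + \mathbf{1}(i \not\in H)\,\mathbf{1}(i \not\in H^*)\bigr)$ and expectation is linear, the conditional expectation decomposes instance-by-instance. The key observation is that for each fixed $i$, the choice of whether to include $i$ in $H$ affects only the $i$-th summand, so the joint maximization over subsets $H$ reduces to $K$ independent binary decisions, one per instance.

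First I would push the conditional expectation through the sum to write the objective as $\sum_{i=1}^K \bigl(\mathbf{1}(i \in H)\,\E[\mathbf{1}(i \in H^*)\mid\calF_T] + \mathbf{1}(i \not\in H)\,\E[\mathbf{1}(i \not\in H^*)\mid\calF_T]\bigr)$. Using \eqref{Pti}, the first conditional expectation is exactly $P_i^T = \Pr(i \in H^*\mid\calF_T)$, and the second is $1 - P_i^T$. Thus for each instance $i$ the contribution is $\mathbf{1}(i \in H)\,P_i^T + \mathbf{1}(i \not\in H)\,(1 - P_i^T)$. Since $\mathbf{1}(i \in H)$ and $\mathbf{1}(i \not\in H)$ are complementary, including $i$ contributes $P_i^T$ while excluding $i$ contributes $1 - P_i^T$.

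Next I would observe that because each instance's contribution is independent of the membership choices for the other instances, the subset $H$ that maximizes the total is obtained by making the locally optimal choice at each $i$: include $i$ whenever $P_i^T \geq 1 - P_i^T$, i.e.\ whenever $P_i^T \geq 0.5$, and exclude it otherwise. This yields exactly $H_T = \{i : P_i^T \geq 0.5\}$, which by \eqref{Pti} coincides with $\{i : \Pr(i \in H^*\mid\calF_T) \geq 0.5\}$, establishing the claim. The tie-breaking at $P_i^T = 0.5$ is immaterial to the value of the objective, and including the instance (as the weak inequality does) is a valid optimizer.

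I do not anticipate a serious obstacle here; the argument is essentially the standard Bayes-decision-rule derivation for a separable $0$--$1$ loss, and the result is credited to \cite{Xie:12}. The only point requiring mild care is justifying the interchange of the finite sum with the conditional expectation (immediate by linearity) and confirming that the per-instance decisions can be decoupled despite $H$ being a single joint object---this follows because the objective contains no cross terms between distinct instances. The subtlest conceptual step, rather than a technical one, is recognizing that maximizing over all $2^K$ subsets collapses to $K$ independent threshold tests precisely because of this separability; once that is seen, the threshold $0.5$ falls out directly from comparing $P_i^T$ against $1 - P_i^T$.
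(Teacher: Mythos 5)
Your proposal is correct and follows essentially the same route as the paper's own proof: apply linearity of conditional expectation to rewrite the objective as $\sum_{i=1}^K \bigl(\mathbf{1}(i \in H) P_i^T + \mathbf{1}(i \notin H)(1-P_i^T)\bigr)$, then decouple the maximization into per-instance threshold decisions at $0.5$. Your treatment is a bit more explicit about why the $2^K$-subset maximization separates (no cross terms) and about the tie-breaking at $P_i^T = 0.5$, but these are elaborations of the same argument, not a different one.
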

The proof of Proposition \ref{prop:H} is given in the appendix for completeness.

With Proposition \ref{prop:H} in place, we plug the optimal positive set $H_T$ into the right hand side of \eqref{eq:H_T} and the conditional expected accuracy given $\calF_T$ can be simplified as:
\begin{equation}
  \E\left( \sum_{i=1}^K \bigl(\mathbf{1}(i \in H_T) \cdot \mathbf{1}(i \in H^*) +  \mathbf{1}(i \not \in H_T) \cdot \mathbf{1}(i \not \in H^*) \bigr) \Bigg| \calF_T \right) =  \sum_{i=1}^K h(P_i^T),
\end{equation}
where $h(x) \doteq \max(x,1-x)$. We also note that $P_i^T$ provides not only the estimated label for the $i$-th instance but also  how confident the estimated label is correct. According to the next corollary with the proof in the appendix, we show that the optimal $H_T$ is constructed based on a refined \emph{majority vote} rule which incorporates the prior information.

\begin{corollary}
$I(a,b)>0.5$ if and only if $a>b$ and $I(a,b)=0.5$ if and only if $a=b$.
Therefore, $H_T =\{i: a^T_i \geq b_i^T\}$ solves \eqref{eq:H_T}.
\label{cor:majority_vote}
\end{corollary}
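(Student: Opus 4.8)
The plan is to prove the corollary in two parts: first establish the characterization of $I(a,b)$ relative to $0.5$, and then use it to identify the optimal positive set $H_T$ via Proposition \ref{prop:H}. The first and main part is to show that $I(a,b) > 0.5 \iff a > b$ and $I(a,b) = 0.5 \iff a = b$, where $I(a,b) = \Pr(\theta \geq 0.5 \mid \theta \sim \mathrm{Beta}(a,b))$. The natural approach is to exploit a symmetry of the Beta density. Recall that if $\theta \sim \mathrm{Beta}(a,b)$ with density $f_{a,b}(\theta) \propto \theta^{a-1}(1-\theta)^{b-1}$, then the reflected variable $1-\theta$ has density $f_{b,a}$, i.e., $\theta \sim \mathrm{Beta}(a,b) \implies 1-\theta \sim \mathrm{Beta}(b,a)$. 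This immediately gives the key identity $I(a,b) = 1 - I(b,a)$, since $\Pr(\theta \geq 0.5) = \Pr(1-\theta \leq 0.5) = 1 - \Pr(1-\theta \geq 0.5)$ (the boundary point $\theta = 0.5$ has zero probability because the Beta distribution is continuous for the relevant parameter ranges).

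Second, I would handle the diagonal case $a = b$ directly: the identity $I(a,a) = 1 - I(a,a)$ forces $I(a,a) = 0.5$, giving one direction of the ``if and only if'' for equality. For the strict inequality, I would show $I(a,b)$ is strictly increasing in $a$ for fixed $b$ (and strictly decreasing in $b$ for fixed $a$). The cleanest way is a likelihood-ratio / stochastic-dominance argument: increasing $a$ shifts the Beta distribution toward larger values of $\theta$ in the sense of first-order stochastic dominance, so $\Pr(\theta \geq 0.5)$ strictly increases. Concretely, comparing the densities $f_{a,b}$ and $f_{a',b}$ with $a' > a$, the ratio $f_{a',b}(\theta)/f_{a,b}(\theta) \propto \theta^{a'-a}$ is strictly increasing in $\theta$, which is the monotone likelihood ratio property and yields strict first-order stochastic dominance, hence strict monotonicity of the tail probability $I(\cdot, b)$. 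Combining monotonicity with the diagonal value $I(a,a) = 0.5$: when $a > b$ we have $I(a,b) > I(b,b) = 0.5$, and when $a < b$ we have $I(a,b) < I(b,b) = 0.5$, completing the characterization in both directions.

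The third and final step is the reduction to the majority-vote rule. By Proposition \ref{prop:H}, $H_T = \{i : P_i^T \geq 0.5\}$ and by definition \eqref{Pti} we have $P_i^T = I(a_i^T, b_i^T)$. Applying the just-proved characterization, the condition $I(a_i^T, b_i^T) \geq 0.5$ holds if and only if $a_i^T \geq b_i^T$ (combining the strict-inequality and equality cases). Therefore $H_T = \{i : a_i^T \geq b_i^T\}$ solves \eqref{eq:H_T}, which is exactly the refined majority-vote rule: since $a_i^T = a_i^0 + (\text{number of }{+}1\text{ labels})$ and $b_i^T = b_i^0 + (\text{number of }{-}1\text{ labels})$, the decision compares positive against negative label counts after adjusting by the prior pseudo-counts.

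I expect the main obstacle to be establishing the strict monotonicity of $I(a,b)$ rigorously rather than merely intuitively. The symmetry identity $I(a,b) = 1 - I(b,a)$ is immediate and handles the equality case cleanly, but converting ``larger $a$ means more mass to the right'' into a strict statement requires care: one must either invoke the monotone-likelihood-ratio/stochastic-dominance machinery or perform a direct comparison of incomplete Beta integrals. The continuity of the Beta distribution (so that $\Pr(\theta = 0.5) = 0$) must also be noted to rule out boundary ambiguities, though for the integer-increment parameter values arising in the MDP (where $a_i^t, b_i^t \geq 1$) this is automatic.
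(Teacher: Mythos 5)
Your proof is correct, but it follows a genuinely different route from the paper's. Both arguments begin with the reflection symmetry ($\theta \sim \B(a,b)$ implies $1-\theta \sim \B(b,a)$, equivalently the substitution $t \mapsto 1-t$), but they diverge immediately afterward. The paper folds $1-I(a,b)$ back onto the interval $[0.5,1]$ and compares it with $I(a,b)$ inside a \emph{single} integral,
\[
I(a,b)-\bigl(1-I(a,b)\bigr) \;=\; \frac{1}{B(a,b)}\int_{0.5}^{1} t^{b-1}(1-t)^{a-1}\left(\left(\frac{t}{1-t}\right)^{a-b}-1\right)\mathrm{d}t ,
\]
so the full trichotomy ($a>b$, $a=b$, $a<b$) is read off pointwise from the sign of the factor $\left(\frac{t}{1-t}\right)^{a-b}-1$ on $t>0.5$: one self-contained computation with no auxiliary lemmas. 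You instead use the symmetry only to pin down the diagonal value $I(a,a)=0.5$, and then transport it off the diagonal by proving that $I(\cdot,b)$ is strictly increasing, via the monotone likelihood ratio $f_{a',b}/f_{a,b}\propto\theta^{a'-a}$ and strict first-order stochastic dominance. This is valid and more modular --- the monotonicity lemma is intuitive (more positive pseudo-counts push posterior mass rightward) and reusable elsewhere --- but, as you acknowledge, the strictness of the dominance at the interior point $0.5$ is the one step that needs real work: you must either run the single-crossing argument (the ratio of densities is strictly increasing and both integrate to one, so their difference changes sign exactly once, forcing $\int_c^1(f_{a',b}-f_{a,b})>0$ for every interior $c$) or cite that result explicitly; as stated it is an appeal to known machinery rather than a proof. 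The paper's factorization avoids that machinery entirely, at the cost of a less transparent integrand manipulation. Your final step, passing from the characterization of $I$ to $H_T=\{i: a_i^T \geq b_i^T\}$ via Proposition \ref{prop:H} and $P_i^T=I(a_i^T,b_i^T)$, matches the paper (which leaves that reduction implicit), and your observation that $\Pr(\theta=0.5)=0$ is needed for the symmetry identity is a point the paper glosses over.
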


By viewing $a_i^0$ and $b_i^0$ as  pseudo-counts of 1s and $-1$s at the initial stage, the parameters $a_i^T$ and $b_i^T$ are the total counts of 1s and $-1$s. The estimated positive set $H_T =\{i: a^T_i \geq b_i^T\}$ consists of instances with more (or equal) counts of 1s than that of $-1$s. When  $a_i^0=b_i^0$, $H_T$ is constructed exactly according to the vanilla \emph{majority vote} rule.

To find the optimal allocation policy which maximizes the expected accuracy, we need to solve the following optimization problem:
\begin{align}
 V(S^0) \doteq &   \sup_{\pi}\E^{\pi} \left[  \E\left( \sum_{i=1}^K \bigl(\mathbf{1}(i \in H_T) \cdot \mathbf{1}(i \in H^*) +  \mathbf{1}(i \not \in H_T) \cdot \mathbf{1}(i \not \in H^*) \bigr) \Bigg| \calF_T \right)\right ] \nonumber  \\
 = & \sup_{\pi }\E^{\pi} \left(\sum_{i=1}^K h(P_i^T) \right),
\label{eq:value_func}
\end{align}
where $\E^{\pi}$ represents the expectation taken over the sample paths $(i_0, y_{i_0},\dots, i_{T-1}, y_{i_{T-1}})$ generated by a policy $\pi$. The second equality  is due to Proposition \ref{prop:H} and $V(S^0)$ is called value function at the initial state $S^0$. 
The optimal policy $\pi^*$ is any policy $\pi$ that attains the supremum in \eqref{eq:value_func}.

\subsection{Markov Decision Process}
\label{sec:sub_MDP}
The optimization problem in \eqref{eq:value_func} is essentially a Bayesian multi-armed bandit (MAB) problem, where each instance corresponds to an arm and the decision is which instance/arm to be sampled next. However, it is  different from the classical MAB problem \cite{UCB:02, Bubeck:Survey:12}, which assumes that each sample of an arm yields independent and identically distributed (\emph{i.i.d.}) reward according to some unknown distribution associated with that arm. Given the total budget $T$, the goal is to determine a sequential allocation policy so that the collected rewards can be maximized. We contrast this problem with our problem: instead of collecting intermediate independent rewards on the fly, our objective in \eqref{eq:value_func} merely involves the final ``reward'', i.e., overall labeling accuracy, which is only available at the final stage when the budget runs out.  Although there is no intermediate reward in our problem, we can still decompose the final expected accuracy into sum of \emph{stage-wise rewards} using the technique from \cite{Xie:12}, which further leads to our MDP formulation. Since these \emph{stage-wise rewards} are artificially created, they are no longer i.i.d. for each instance. We also note that the problem in \cite{Xie:12} is an infinite-horizon one which optimizes the stopping time while our problem is \emph{finite-horizon} since the decision process must be stopped at the stage $T$. 


\begin{proposition}
  Define the stage-wise expected reward as:
  \begin{equation}
     R(S^t, i_t)=\E \left(\sum_{i=1}^K h(P_{i}^{t+1}) - \sum_{i=1}^K h(P_{i}^{t}) \big|S^{t}, i_t \right)= \E \left( h(P_{i_t}^{t+1}) -   h(P_{i_t}^{t}) |S^{t}, i_t \right),
     \label{eq:reward}
  \end{equation}
  then the value function \eqref{eq:value_func} becomes:
    \vspace{-1mm}
    \begin{equation}
          V(S^0) = G_0(S^0) + \sup_{\pi} \E^{\pi} \left( \sum_{t=0}^{T-1}  R(S^t, i_t )  \right),
          \label{eq:decomp_value}
    \end{equation}
    where $G_0(S^0)=\sum_{i=1}^K h(P_i^0) $ and the optimal policy $\pi^*$ is any policy $\pi$ that attains the supremum.
    \label{prop:reward}
\end{proposition}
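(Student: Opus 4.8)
The plan is to prove the decomposition by a telescoping (or ``potential function'') argument, converting the terminal reward $\sum_{i} h(P_i^T)$ into a sum of one-step increments and then applying the tower property of conditional expectation. I would define the potential $G_t(S^t) \doteq \sum_{i=1}^K h(P_i^t)$, so that $G_0(S^0)$ is exactly the constant appearing in the statement and $\sum_{i=1}^K h(P_i^T) = G_T(S^T)$. The starting point is the purely algebraic, pathwise identity
\begin{equation*}
G_T(S^T) = G_0(S^0) + \sum_{t=0}^{T-1}\bigl(G_{t+1}(S^{t+1}) - G_t(S^t)\bigr),
\end{equation*}
which holds along every sample path $(i_0, y_{i_0}, \ldots, i_{T-1}, y_{i_{T-1}})$ irrespective of the policy, since the middle sum telescopes.

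Next I would fix an arbitrary admissible policy $\pi$ and apply $\E^{\pi}$ to both sides. Since $G_0(S^0)$ is deterministic given the initial state, it passes through the expectation unchanged, and it remains to handle $\E^{\pi}\bigl(\sum_{t} (G_{t+1} - G_t)\bigr)$. For each $t$ I would condition on $\calF_t$: because the action $i_t$ is $\calF_t$-measurable and $S^t$ is determined by $\calF_t$, the quantity $G_t(S^t)$ is $\calF_t$-measurable, while the only randomness in $G_{t+1}(S^{t+1})$ given $(S^t, i_t)$ is the new label $y_{i_t}$ drawn according to the transition probability \eqref{eq:tran_prob}. Invoking the tower property,
\begin{equation*}
\E^{\pi}\bigl(G_{t+1}(S^{t+1}) - G_t(S^t)\bigr) = \E^{\pi}\Bigl(\E\bigl(G_{t+1}(S^{t+1}) - G_t(S^t) \,\big|\, S^t, i_t\bigr)\Bigr) = \E^{\pi}\bigl(R(S^t, i_t)\bigr),
\end{equation*}
where the last equality is the definition \eqref{eq:reward} of the stage-wise reward.

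One subpoint I would make explicit is the second equality in \eqref{eq:reward}, namely that $\sum_{i} (h(P_i^{t+1}) - h(P_i^t)) = h(P_{i_t}^{t+1}) - h(P_{i_t}^t)$. This holds because the state update modifies only row $i_t$ of $S^t$, whereas by \eqref{Pti} each $P_i^t = I(a_i^t, b_i^t)$ depends on $S^t$ solely through its $i$-th row $S_i^t$; hence $P_i^{t+1} = P_i^t$ and $h(P_i^{t+1}) = h(P_i^t)$ for every $i \neq i_t$, so all terms except $i = i_t$ cancel. Summing the per-stage identities over $t$ then yields, for every policy $\pi$,
\begin{equation*}
\E^{\pi}\Bigl(\sum_{i=1}^K h(P_i^T)\Bigr) = G_0(S^0) + \E^{\pi}\Bigl(\sum_{t=0}^{T-1} R(S^t, i_t)\Bigr).
\end{equation*}

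Finally I would take the supremum over $\pi$ of both sides; since $G_0(S^0)$ does not depend on $\pi$, it factors out of the supremum, giving precisely \eqref{eq:decomp_value}, and any maximizer of the right-hand side is simultaneously a maximizer of the left-hand side, which identifies the optimal policy $\pi^*$. I do not expect a genuine obstacle here; the only points requiring care are the measurability bookkeeping that licenses the tower property and the observation that the reward telescopes cleanly because, given the sequence of actions, each instance's posterior evolves without coupling to the others.
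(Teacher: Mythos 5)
Your proposal is correct and follows essentially the same route as the paper's own proof: a telescoping decomposition of $\sum_{i=1}^K h(P_i^T)$ into stage-wise increments $G_{t+1}-G_t$, the tower property applied with the observation that the increment depends on $\calF_t$ only through $(S^t, i_t)$, and the cancellation of all terms with $i \neq i_t$ because each $P_i^t = I(a_i^t, b_i^t)$ depends only on row $i$ of the state. Your explicit treatment of the measurability bookkeeping and of the second equality in \eqref{eq:reward} is slightly more careful than the paper's, but the argument is the same.
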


The  proof of Proposition \ref{prop:reward} is presented in the appendix. In fact, the stage-wise reward in \eqref{eq:reward} has a straightforward interpretation. According to \eqref{eq:value_func}, the term $\sum_{i=1}^K h(P_{i}^{t})$ is the expected accuracy at the $t$-th stage. The stage-wise reward $R(S^t, i_t)$ takes the form of the difference between the expected accuracy at the $(t+1)$-stage and  the $t$-th stage, i.e., the expected gain in accuracy for collecting another label for the $i_t$-th instance.  The second equality in \eqref{eq:reward} holds simply because: only the $i_t$-th instance receives the new label and the corresponding $P_{i_t}^t$ changes while all other $P_{i}^t$ remain  the same. Since the expected reward \eqref{eq:reward} only depends on $S_{i_t}^t=(a_{i_t}^t, b_{i_t}^t)$,  
we write
\begin{equation}
R(S^t, i_t) =R\left(S^t_{i_t}\right) = R\left( a^t_{i_t}, b^t_{i_t}\right),
\end{equation}
 and use them interchangeably. The function $R(a,b)$ with two parameters $a$ and $b$ has an analytical representation as follows. For any state  $(a,b)$ of a single instance,  the reward of getting a label 1 and a label $-1$ are:
\begin{align}
  R_1(a,b) & =h(I(a+1,b))-h(I(a,b)),  \label{eq:R_1} \\
  R_2(a,b) &=h(I(a,b+1))-h(I(a,b)).   \label{eq:R_2}
\end{align}
The expected reward takes the following form:
\begin{equation}
R(a,b)= p_1 R_1 + p_2 R_2,
\label{eq:exp_reward_R}
\end{equation}
where $p_1 = \frac{a}{a+b}$ and $p_2 = \frac{b}{a+b}$ are the  transition probabilities in \eqref{eq:tran_prob}.

With Proposition \ref{prop:reward}, the maximization problem \eqref{eq:value_func} is formulated as a $T$-stage \emph{Markov Decision Process} (MDP) as in \eqref{eq:decomp_value}, which is associated with a tuple: $$\{T,  \{\mathcal{S}^t \}, \mathcal{A}, \Pr(y_{i_t} | S^t, i_t), R(S^t, i_t)\}.$$ Here, the state space at the stage $t$, $\calS^t$,  is all possible states that can be reached at $t$. Once we collect a label $y_{i_t}$, one element in $S^t$ (either $a_{i_t}^t$ or $b_{i_t}^t$) will add one. Therefore, we have
\begin{equation}
\mathcal{S}^t = \left \{ \{a_i^t, b_i^t\}_{i=1}^K :  a_i^t \geq a_i^0, b_i^t \geq b_i^0,  \sum_{i=1}^{K} (a_i^t-a_i^0)+(b_i^t-b_i^0) = t \right\}.
\label{eq:state_space}
\end{equation}
The action space is the set of instances that could be labeled next: $\mathcal{A}=\{1, \ldots, K\}$. The transition probability $\Pr(y_{i_t} | S^t, i_t)$ is defined in \eqref{eq:tran_prob} and the expected reward at each stage  $R(S^t, i_t)$ is defined in \eqref{eq:reward}.

\begin{remark}
We can also view Proposition \ref{prop:reward} as a consequence of applying the reward shaping technique \cite{Ng99} to the original problem \eqref{eq:value_func}. In fact, we can add an artificial absorbing state, named $S_{obs}$, to the original state space \eqref{eq:state_space} and assume that, when the budget allocation process finishes, the state must transit one more time to reach $S_{obs}$ regardless of which action is taken. Hence, the original problem \eqref{eq:value_func} becomes a MDP that generates a zero transition reward until the state enters $S_{obs}$ where the transition reward is $\sum_{i=1}^K h(P_i^T)$. Then, we define a potential-based shaping function \cite{Ng99} over this extended state space as $\Phi(S^t)=\sum_{i=1}^K h(P_i^t)$ for $S^t\in\mathcal{S}^t$ and $\Phi(S_{obs})=0$. After this, \eqref{prop:reward} can be viewed as a new MDP whose transition reward equals that of \eqref{eq:value_func} plus the shaping-reward function $\Phi(S')-\Phi(S)$ when the state transits from $S$ to $S'$. According to Theorem 1 in \cite{Ng99}, \eqref{prop:reward} and \eqref{eq:value_func} have the same optimal policy. This provides an alternative justification for Proposition \ref{prop:reward}.
\end{remark}






\subsection{Optimal Policy via DP}
\label{sec:DP}

With the MDP in place, we can apply the dynamic programming (DP) algorithm  (a.k.a. backward induction) \cite{Puterman:05} to compute the optimal policy:
\begin{enumerate}
      \item Set $V_{T-1}(S^{T-1}) = \max_{i \in \{1,\ldots, K\}} R(S^{T-1},i)$ for \emph{all possible states} $S^{T-1} \in \mathcal{S}^{T-1}$. The optimal decision $i_{T-1}^*(S^{T-1})$ is the decision $i$ that achieves the maximum when the state is $S^{T-1}$.
      \item Iterate for $t=T-2, \ldots ,0$, compute the $V_t(S^t)$ for all possible  $S^t \in \calS^t$ using the Bellman equation:
      \small
     \begin{equation*}
        V_{t}(S^t) =  \max_{i}  \Bigl( R(S^{t}, i)  +  \Pr(y_{i}=1 | S^t, i) V_{t+1} \left(S^t+(\mathbf{e}_{i}, \mathbf{0})\right)      + \Pr(y_{i}= -1 | S^t, i) V_{t+1} \left(S^t+(\mathbf{0}, \mathbf{e}_{i})\right) \Bigr),         
     \end{equation*}
     \normalsize
      and $i_t^*(S^t)$ is the $i$ that achieves the maximum.  
\end{enumerate}

The optimal policy $\pi^*=(i_0^*, \ldots, i_T^*)$.  For an illustration purpose, we use DP to calculate the optimal instance to be labeled next in the toy example in Section \ref{sec:example} under the uniform prior $B(1,1)$ for all $\theta_i$. Since we assume that there is only one labeling chance remaining, which corresponds to the last stage of DP, we should choose the instance $i_{T-1}^*(S^{T-1})= \argmax_{i \in \{1,\ldots, K\}} R(S^{T-1},i)$. According to the calculation in Table \ref{tab:toy}, there is a unique optimal instance for labeling, which is the second instance.
\renewcommand{\arraystretch}{1.2}
\begin{table}[!t]
\renewcommand{\tabcolsep}{6pt}
\centering
\caption{Calculation of the expected reward for the toy example in Table \ref{tab:example} according to \eqref{eq:R_1}, \eqref{eq:R_2} and \eqref{eq:exp_reward_R}. }
\begin{tabular}{|c|c|c|c|c|c|c|} \hline
Instance $i$    & \multicolumn{1}{|c|}{$S^{T-1}_i$} & \multicolumn{1}{|c|}{$p_1$}       & \multicolumn{1}{|c|}{$p_2$}      & \multicolumn{1}{|c|}{$R_1(S^{T-1}_i)$} &  \multicolumn{1}{|c|}{$R_2(S^{T-1}_i)$}  & \multicolumn{1}{|c|}{$R(S^{T-1},i) =R(S^{T-1}_i)$} \\ \hline
1               & (3,1)       & $3/4$         & $1/4$        & $1/16$           &  $-3/16$           &  $3/4\times 1/16 +  1/4  \times(-3/16)=0$      \\ \hline
2               & (2,2)       & $1/2$         & $1/2$        &  $3/16$          &  $3/16$            &  $1/2 \times 3/16 +  1/2 \times  3/16 = 3/16$      \\ \hline
3               & (2,1)       & $2/3$         & $1/3$        &  $1/8$          &   $-1/4$            &  $2/3   \times 1/8  + 1/3 \times (-1/4)=0$ \\ \hline
\end{tabular}
\label{tab:toy}
\end{table}

Although DP finds the optimal policy, its computation is intractable since the size of the state space $|\calS^t|$ grows exponentially in $t$ according to \eqref{eq:state_space}. Therefore, we need to develop a computationally efficient approximate policy, which is the goal of the next section.

\section{Approximate Policies}
\label{sec:Opt-KG}


Since DP is computationally intractable, approximate policies are needed for large-scale applications. The simplest policy is the uniform sampling (a.k.a, pure exploration), i.e., we choose the next instance uniformly and independently at random: $i_{t}  \sim \text{Uniform}(1,\ldots, K)$.  However, this policy does not explore any structure of the problem.

With the decomposed reward function, our problem is essentially a finite-horizon Bayesian MAB problem. Gittins \cite{Gittins:89} showed that Gittins index policy is optimal for infinite-horizon MAB with the discounted reward. It has been applied to the infinite-horizon version of problem \eqref{eq:decomp_value} in \cite{Xie:12}. Since our problem is finite-horizon, Gittins index is no longer optimal while it can still provide us a good heuristic index rule. However, the computational cost of Gittins index is very high: the state-of-art-method proposed by \cite{Mora:11} requires $O(T^6)$ time and space complexity.


\if 0
With the decomposed reward function, our problem is essentially a Bayesian multi-armed bandit (MAB) problem. \cite{Gittins:89} showed that Gittins index is a optimal policy in the infinite-horizon setting with the discounted reward. However, since our problem is finite-horizon, Gittins index is no longer optimal. But still, it can provide us a good heuristic index rule. In particular, Gittins index in our problem is defined as:

\begin{equation}
  \lambda_i^*(d, s)= \max_{1 \leq \tau \leq d} \frac{\E^{\tau}(\sum_{t=0}^{\tau-1} R(s^t) | s^0=s) }{\E^{\tau}(\tau | s^0 =s)}.
  \label{eq:Gittin_index}
\end{equation}

where right hand side in \eqref{eq:Gittin_index} is an optimal-stopping problem for labeling a single instance $i$. Here $\tau$ is the stopping-time for stopping labeling the instance $i$ when it starts at state $s$ with $d$ remaining stages. The policy will select the instance to label with the largest Gittins index.

We note that in \eqref{eq:Gittin_index}, the state $s^t=(a^t, b^t)$ and the corresponding reward function is only associated with a \emph{single} instance $i$. As we can see, Gittins index effectively reduces the exponentially large state space in \eqref{eq:state_space} to the size of $O(T^2)$. To compute \eqref{eq:Gittin_index}, the \emph{approximate} method (i.e., Calibration method \cite{Gittins:89}) requires $O(T^3)$ time and space complexity; while the exact method proposed in \cite{Mora:11} requires $O(T^6)$ time and space complexity. For real problems with $T$ at least several thousands, the Gittins index policy is computationally too expensive.
\fi

A computationally more attractive policy is the knowledge gradient (KG) \cite{Gupta:96,Frazier:08}.  It is essentially a single-step look-ahead policy, which greedily selects the next instance with the largest expected reward:
\begin{equation}
   i_t= \argmax_{i \in \{1,\ldots, K\} } \left(R(a^t_{i}, b^t_{i}) \doteq \frac{a_{i}^t}{a_{i}^t+b_{i}^t} R_1(a_{i}^t,b_{i}^t) +  \frac{b_{i}^t}{a_{i}^t+b_{i}^t}  R_2(a_{i}^t,b_{i}^t)\right).
\end{equation}
As we can see, this policy corresponds to the last stage in DP and hence KG policy is optimal if only one labeling chance is remaining.

When there is a tie, if we select the smallest index $i$, the policy is referred to \emph{deterministic KG} while if we randomly break the tie,  the policy is referred to \emph{randomized KG}.  Although KG has been successfully applied to many MDP problems \cite{Powell:07}, it will fail in our problem as shown in the next proposition with the proof in the appendix.

\begin{proposition}
Assuming that $a_i^0$ and $b_i^0$ are positive integers and letting $\calE=\{i: a_i^0=b_i^0\}$, then the deterministic KG policy will acquire one label for each instance in $\calE$ and then consistently obtain the label for the first instance even if the budget $T$ goes to infinity.
\label{prop:det_KG}
\end{proposition}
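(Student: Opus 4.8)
The plan is to reduce the entire (randomized) dynamics to one clean structural fact about the single-instance reward $R(a,b)$, and then read the behaviour of deterministic KG directly off that fact. The key lemma I would prove is that, for positive integers $a,b$,
\[
R(a,b)>0 \iff a=b, \qquad R(a,b)=0 \iff a\neq b.
\]
The starting point is the tower property of the posterior probability $P^t_i=I(a^t_i,b^t_i)$ of \eqref{Pti}: since $I(a,b)=\Pr(\theta\ge 0.5\mid S=(a,b))$ is a conditional probability of a fixed event, $\{P^t_i\}$ is a martingale, giving $I(a,b)=\tfrac{a}{a+b}I(a+1,b)+\tfrac{b}{a+b}I(a,b+1)$. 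Writing $g\doteq h\circ I$ and using $p_1+p_2=1$, I get $R(a,b)=p_1 g(a+1,b)+p_2 g(a,b+1)-g(a,b)$, and substituting the martingale identity into $g(a,b)=h\big(p_1 I(a+1,b)+p_2 I(a,b+1)\big)$ rewrites $R(a,b)$ as the Jensen gap of the convex function $h(x)=\max(x,1-x)$ at the points $I(a+1,b),I(a,b+1)$. Hence $R(a,b)\ge 0$ always, with equality exactly when $h$ is affine on the segment joining these two values, i.e. when $0.5$ does not lie strictly between them. Invoking Corollary~\ref{cor:majority_vote} ($I(a,b)\gtrless 0.5\iff a\gtrless b$) to evaluate the straddle condition, $I(a+1,b)>0.5\iff a\ge b$ and $I(a,b+1)<0.5\iff a\le b$, so $0.5$ is strictly interior iff $a=b$.

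With the lemma in hand, I would first establish that KG labels each instance of $\calE$ exactly once. A tied state carries strictly positive reward while every non-tied state carries reward $0$; moreover one extra label moves a tied state $(c,c)$ to $(c+1,c)$ or $(c,c+1)$, both non-tied, so its reward drops to $0$ \emph{regardless of the random outcome} (indeed $R(a,b)=R(b,a)$, so the whole reward trajectory, and hence KG's selections, are deterministic despite the random labels). Thus while any instance in $\calE$ is still tied, deterministic KG strictly prefers a tied instance over every non-tied one, and selecting it lowers the number of tied instances by one without re-creating a tie elsewhere. After $|\calE|$ steps every instance is non-tied, so every reward equals $0$.

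At this point all rewards are $0$, so deterministic KG, breaking ties by smallest index, selects instance $1$. I would close by induction over the remaining stages: under the hypothesis that only instance $1$ has been touched since the first phase, instances $2,\dots,K$ stay non-tied with reward $0$, while instance $1$ is either (i) tied, where by the lemma it is the unique maximiser with positive reward and is selected, or (ii) non-tied, where it ties at reward $0$ but wins on index and is again selected. Either way instance $1$ is chosen, and since $R(c,c)>0$ for \emph{every} positive integer $c$, this persists no matter how large $T$ is.

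The main obstacle is the lemma: obtaining the exact dichotomy for $R$ requires correctly handling the kink of $h$ at $0.5$, and in particular the integer boundary cases $a=b\pm1$ in which $I(a,b+1)$ or $I(a+1,b)$ sits \emph{exactly} at $0.5$ rather than strictly on one side. There $h$ is still affine on the relevant segment, so these states must be assigned reward $0$ and not be mistaken for positive-reward ties; the strict-versus-weak inequalities in the straddle condition are exactly what translates ``$0.5$ strictly interior'' into ``$a=b$.'' Once the dichotomy and the $a\leftrightarrow b$ symmetry are secured, the two phases are routine bookkeeping.
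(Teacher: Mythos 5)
Your proof is correct, and its combinatorial half --- label each tied instance once, then lock onto instance~1 forever, splitting on whether instance~1 is currently tied (strictly positive reward, unique maximizer) or untied (all rewards zero, smallest index wins the tie-break) --- is exactly the paper's argument. Where you genuinely depart from the paper is in the key dichotomy $R(a,b)>0\iff a=b$ for positive integers. The paper proves it by explicit computation: using the incomplete-beta recurrences $I(a+1,b)=I(a,b)+\frac{0.5^{a+b}}{aB(a,b)}$ and $I(a,b+1)=I(a,b)-\frac{0.5^{a+b}}{bB(a,b)}$ together with Corollary~\ref{cor:majority_vote} to resolve the branches of $h(x)=\max(x,1-x)$, it obtains closed forms $R(a,a)=\frac{0.5^{2a}}{aB(a,a)}$ and $R(a,b)=0$ for $a\neq b$ (its $a>b$ calculation is, in effect, a hands-on verification of your martingale identity in the regime where $h$ acts as the identity). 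You instead get the sign pattern conceptually: $P_i^t$ is a martingale by the tower property, so $R(a,b)$ is the Jensen gap of the kinked convex function $h$ at $I(a+1,b)$ and $I(a,b+1)$, which vanishes precisely when $0.5$ is not strictly straddled; Corollary~\ref{cor:majority_vote} converts the strict straddle into $a=b$, and you correctly assign zero reward to the boundary cases $a=b\pm1$ where one endpoint sits exactly at the kink. Your route is shorter, explains \emph{why} the reward vanishes off the diagonal, and generalizes: for arbitrary positive reals it yields $R(a,b)>0\iff|a-b|<1$, and it applies verbatim to any convex terminal accuracy criterion. What the paper's computation buys is the explicit reward values, and that computational machinery is what its subsequent results (Lemma~\ref{lem:R+}, Theorem~\ref{thm:opt_KG}, Proposition~\ref{prop:pessimistic_KG}) are built on, so the two approaches are complementary rather than interchangeable within the paper. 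One small caveat: your parenthetical claim that the whole reward trajectory is deterministic holds only during the first phase; in the second phase instance~1's reward is random (it is positive exactly when its counts happen to re-tie, which depends on the labels drawn), but the \emph{selections} remain deterministic, and that is all your induction actually uses, so nothing breaks.
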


According to Proposition \ref{prop:det_KG}, the deterministic KG is \emph{not} a  \emph{consistent policy}, where the consistent policy refers to the policy that will provide correct labels for all instances (i.e., $H_T=H^*$) almost surely when   $T$ goes to infinity.  We note that randomized KG policy can address this problem. However, from the proof of Proposition \ref{prop:det_KG}, randomized KG behaves similarly to the uniform sampling policy in many cases and its empirical performance is undesirable according to   Section \ref{sec:exp}. In the next subsection, we will propose a new approximate allocation policy based on KG which is a consistent policy with superior empirical performance.


\subsection{Optimistic Knowledge Gradient}

\begin{algorithm}[!t]
    \caption{Optimistic Knowledge Gradient} 
    \label{algo:opt_KG}
    \begin{algorithmic}
    \STATE {\bfseries Input:} Parameters of prior distributions for instances $\{a_i^0, b_i^0\}_{i=1}^K$ and the   budget $T$.
    \medskip
    \FOR{$t = 0, \ldots, T-1$}

        \STATE Select the next instance $i_t$ to label according to:
        \begin{equation}
            \qquad i_t= \argmax_{i \in \{1,\ldots, K\}} \left( R^{+}(a^t_i, b^t_i) \doteq \max(R_1(a^t_i, b^t_i), R_2(a^t_i, b^t_i)) \right).
            \label{eq:opt_KG}
        \end{equation}

        \STATE  Acquire the label $y_{i_t}\in \{-1,1\}$.

        \IF{$y_{i_t}=1$}
            \STATE $a^{t+1}_{i_t}=a^{t}_{i_t}+1, b^{t+1}_{i_t}= b^{t}_{i_t}$;  $a^{t+1}_{i}=a^{t}_{i}, b^{t+1}_{i}= b^{t}_{i}$ for all $i \neq i_t$.
        \ELSE
            \STATE $a^{t+1}_{i_t}=a^{t}_{i_t}, b^{t+1}_{i_t}= b^{t}_{i_t}+1$;  $a^{t+1}_{i}=a^{t}_{i}, b^{t+1}_{i}= b^{t}_{i}$ for all $i \neq i_t$.
        \ENDIF
    \ENDFOR
    \medskip
    \STATE {\bfseries Output:} The positive set $H_T=\{i: a^T_i \geq b^T_i\}$.
    \end{algorithmic}
\end{algorithm}

The stage-wise reward can be viewed as a random variable with a two point distribution, i.e., with the probability $p_1=\frac{a}{a+b}$ of being $R_1(a,b)$ and the probability $p_2=\frac{b}{a+b}$ of being $R_2(a,b)$. The KG policy selects the instance with the largest \emph{expected} reward. However, it is not consistent.

In  this section, we introduce a new index policy called ``optimistic knowledge gradient" (Opt-KG) policy.   The Opt-KG policy assumes that decision makers are optimistic in the sense that they select the next instance based on the optimistic outcome of the reward. As a simplest version of the Opt-KG policy, for any state $(a_i^t, b_i^t)$, the optimistic outcome of the reward $R^+(a_i^t, b_i^t)$ is defined as maximum over the reward of obtaining the label 1, $R_1(a^t_i, b^t_i)$, and the reward of obtaining the label $-1$,  $R_2(a^t_i, b^t_i)$.  Then the optimistic decision maker selects the next instance $i$ with the largest $R^+(a_i^t, b_i^t)$ as in \eqref{eq:opt_KG} in Algorithm \ref{algo:opt_KG}.  The overall decision process using the Opt-KG policy is highlighted in Algorithm \ref{algo:opt_KG}.

In the next theorem, we prove that Opt-KG policy is consistent.
\begin{theorem}
Assuming that $a_i^0$ and $b_i^0$ are positive integers, the Opt-KG is a consistent policy, i.e, as $T$ goes to infinity, the accuracy will be $100\%$ (i.e., $H_T=H^*$) almost surely.
\label{thm:opt_KG}
\end{theorem}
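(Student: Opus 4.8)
The plan is to reduce consistency to two facts: (a) under Opt-KG every instance is labeled infinitely often as $T\to\infty$, and (b) once an instance is labeled infinitely often its posterior concentrates and yields the correct label. Fact (b) is the easy half. Conditioning on the latent vector $\btheta$, the labels received for a fixed instance $i$ form an i.i.d.\ $\text{Bernoulli}(\theta_i)$ sequence regardless of the adaptive order in which instances are queried; formally one couples the process to a pre-drawn infinite i.i.d.\ sequence $y_i^{(1)},y_i^{(2)},\dots$ and reveals its entries whenever $i$ is chosen. If $i$ is chosen infinitely often, the strong law of large numbers gives $a_i^t/(a_i^t+b_i^t)\to\theta_i$ almost surely (the fixed pseudo-counts $a_i^0,b_i^0$ wash out). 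Since the Beta prior is continuous, $\theta_i\neq\tfrac12$ almost surely, so eventually $a_i^t>b_i^t$ exactly when $\theta_i>\tfrac12$, i.e.\ exactly when $i\in H^*$. As $K$ is finite this happens simultaneously for all $i$ for all large $T$, whence $H_T=\{i:a_i^T\ge b_i^T\}=H^*$ almost surely.

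The real work is Fact (a), and its engine is a purely deterministic estimate on the optimistic reward. First I would record that $R^+(a,b)>0$ at every finite state: when $a\ge b$ one has $I(a,b)\ge\tfrac12$ and $I(a+1,b)>I(a,b)$, so $R_1>0$; when $a\le b$ the symmetric statement gives $R_2>0$; in all cases $R^+=\max(R_1,R_2)>0$. Next I would bound $R^+$ from above. Because $h(x)=\max(x,1-x)$ is $1$-Lipschitz, $R_1(a,b)\le|I(a+1,b)-I(a,b)|$ and $R_2(a,b)\le|I(a,b+1)-I(a,b)|$. Using the incomplete-Beta increment identity $I(a+1,b)-I(a,b)=\binom{a+b-1}{a}2^{-(a+b)}$ (and its mirror image for $R_2$) together with the central-binomial bound $\binom{n-1}{k}2^{-n}=O(n^{-1/2})$, I obtain a state-independent estimate
\[
  R^+(a,b)\ \le\ \frac{C}{\sqrt{a+b}}\ \longrightarrow\ 0 \quad\text{as } a+b\to\infty,
\]
uniformly over how the mass is split between $a$ and $b$. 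This uniformity is exactly what lets the argument handle the adaptively chosen split of the selected instance.

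With these two properties of $R^+$ in hand, Fact (a) follows by contradiction. Suppose some instance is queried only finitely often, and let $B$ be the nonempty set of such instances; after some stage $t_0$ no instance of $B$ is touched, so for each $j\in B$ its state freezes and $R^+(a_j^t,b_j^t)=c_j>0$ is constant for $t\ge t_0$. Fix one such $j$. For every $t\ge t_0$ the Opt-KG rule picks the global maximizer of $R^+$, so the chosen instance satisfies $R^+(a_{i_t}^t,b_{i_t}^t)\ge R^+(a_j^t,b_j^t)=c_j$; by the upper bound this forces $a_{i_t}^t+b_{i_t}^t\le (C/c_j)^2=:M$. But every instance in $B^c$ is queried infinitely often, so its count tends to infinity; hence there is a stage $t_1\ge t_0$ after which all instances of $B^c$ have count exceeding $M$. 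At that stage the selected instance lies in $B^c$ yet must have count at most $M$, a contradiction. Therefore $B=\emptyset$, i.e.\ every instance is labeled infinitely often, which combined with Fact (b) completes the proof.

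I expect the main obstacle to be the uniform reward-decay estimate $R^+(a,b)=O((a+b)^{-1/2})$: one must control $R^+$ simultaneously over all splits of a fixed total $a+b$, which is why the explicit incomplete-Beta increment together with the central-binomial bound is needed, rather than a soft ``posterior concentration'' argument that would only apply along a single instance's own sampling times. The positivity $R^+>0$ is equally essential, since it guarantees the frozen value $c_j$ is strictly positive and thereby yields the finite threshold $M$ that drives the contradiction.
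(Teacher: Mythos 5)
Your proposal is correct and follows essentially the same route as the paper: positivity of $R^+(a,b)$ at every integer state, decay of $R^+$ as $a+b\to\infty$ uniformly over splits, a contradiction argument showing every instance is labeled infinitely often, and then the conditional strong law of large numbers plus the fact that $\{\theta_i=0.5\}$ has prior measure zero. The only (cosmetic) difference is in the decay lemma: the paper computes $R^+$ exactly as $\frac{0.5^{a+b}}{aB(a,b)}$ or $\frac{0.5^{a+b}}{bB(a,b)}$ and establishes the limit via symmetry and monotonicity properties, whereas you reach the same quantities through the Lipschitz bound on $h$ and the incomplete-beta increment identity and then invoke the central-binomial estimate to get the explicit uniform rate $R^+(a,b)=O\bigl((a+b)^{-1/2}\bigr)$ --- a slightly more quantitative phrasing of the same key fact, which your count-threshold $M=(C/c_j)^2$ then exploits exactly as the paper's comparison of frozen versus vanishing rewards does.
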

The key of proving the consistency is to show that when $T$ goes to infinity, each instance will be labeled infinitely many times. We prove this fact by showing that for any pair of positive integers $(a,b)$, $R^+(a,b)=\max(R_1(a,b), R_2(a,b))>0$ and  $R^{+}(a,b) \rightarrow 0$ when $a+b \rightarrow \infty$.  As an illustration, the values of $R^{+}(a,b)$ are plotted in Figure \ref{fig:cvar_right}. Then, by strong law of large number, we obtain the consistency of the Opt-KG as stated in Theorem \ref{thm:opt_KG}. The details are presented in the appendix. We have to note that asymptotic consistency is the minimum guarantee for a good policy. However, it does not necessarily guarantee the good empirical performance for the finite budget level. We will use experimental results to show the superior performance of the proposed policy.

\begin{figure}[!t]
\centering
  \includegraphics[width=0.5\textwidth]{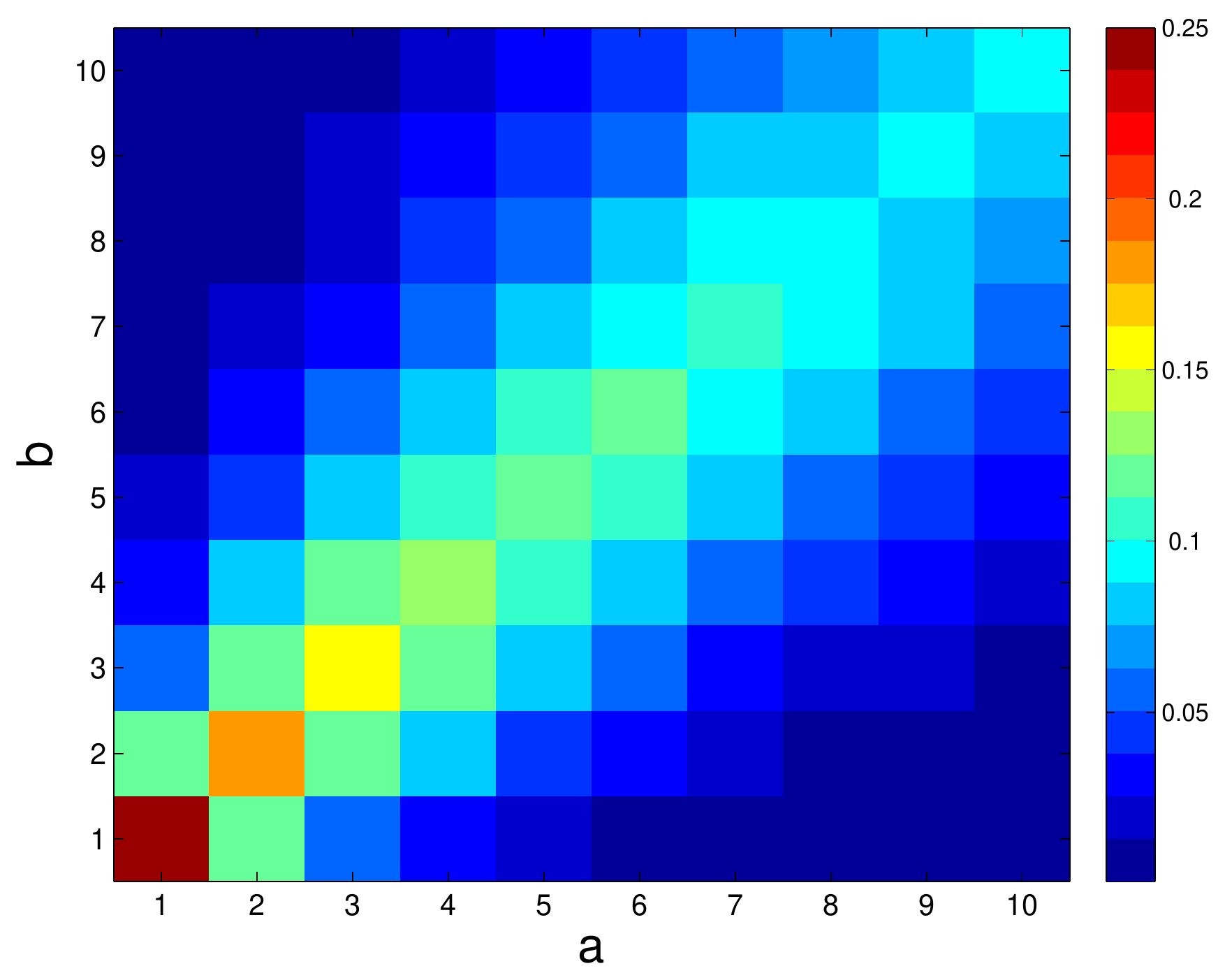}
  \caption{Illustration of $R^{+}(a,b)$.}
  \label{fig:cvar_right}
\end{figure}


The proposed Opt-KG policy is a general framework for budget allocation in crowd labeling. We can extend the allocation policy based on the maximum over the two possible rewards (Algorithm \ref{algo:opt_KG}) to a more general policy using the conditional value-at-risk (CVaR) \cite{Rockafellar:02}.  We note that here, instead of adopting the CVaR as a risk measure, we apply it to the reward distribution.  In particular, for a random variable $X$ with the support $\mathcal{X}$ (e.g., the random reward with the two point distribution),  let $\alpha$-quantile function be denoted as $Q_\alpha(X) = \inf\{x \in \mathcal{X}: \alpha \leq F_X(x)\}$, where $F_X(\cdot)$ is the CDF of $X$. The value-at-risk $\var_{\alpha}(X)$ is the smallest value such that the probability that $X$ is less than (or equal to) it is greater than (or equal to) $1-\alpha$: $\var_{\alpha}(X)=Q_{1-\alpha}(X)$. The  conditional value-at-risk ($\cvar_{\alpha}(X)$)  is defined as the expected reward exceeding (or equal to) $\var_{\alpha}(X)$. An illustration of CVaR is shown in Figure \ref{fig:cvar}.

\begin{figure}[!t]
\centering
  \includegraphics[width=0.6\textwidth, height=4cm]{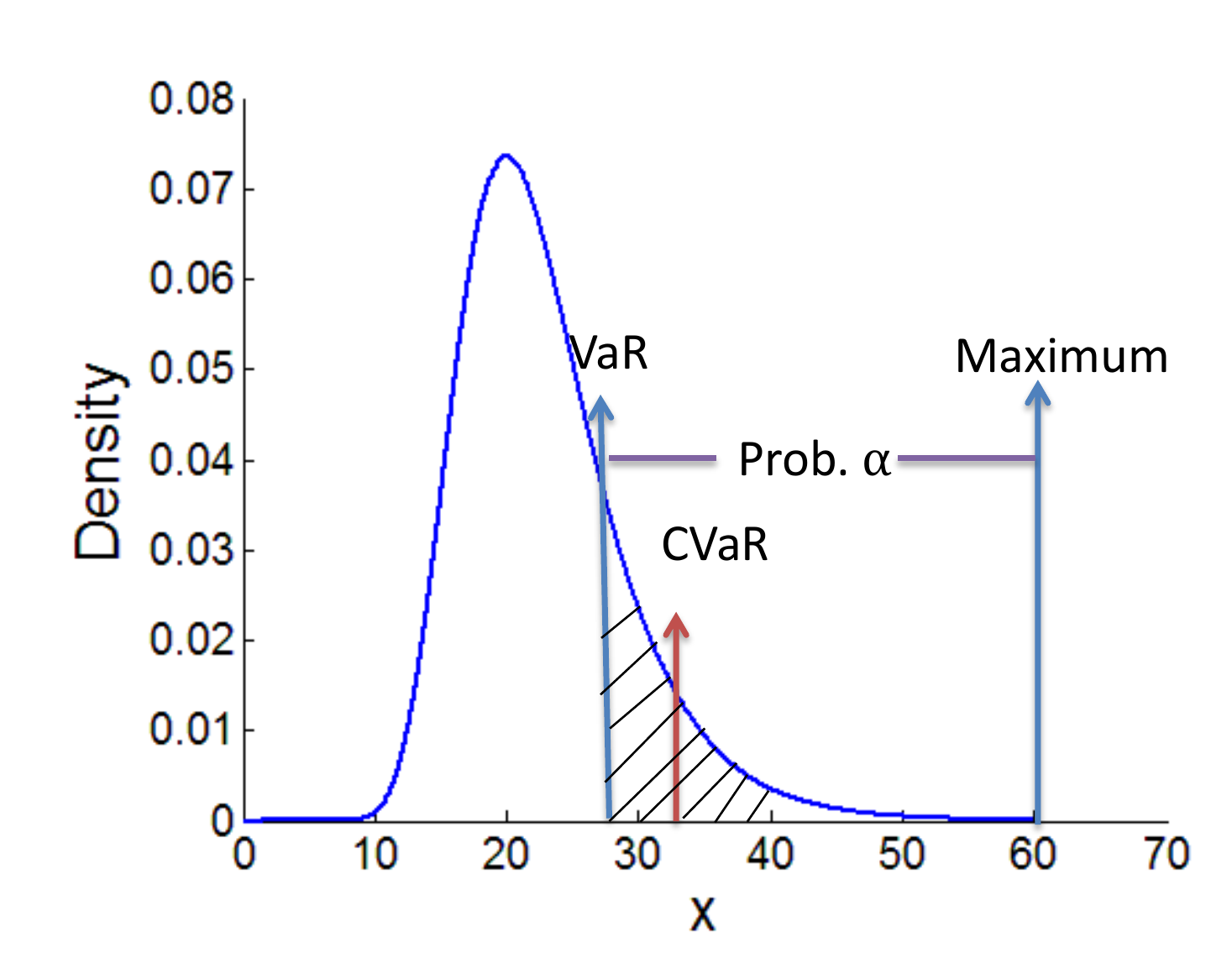}
  \caption{Illustration of Conditional Value-at-Risk.}
  \label{fig:cvar}
\end{figure}

For our problem, according to \cite{Rockafellar:02}, $\cvar_{\alpha}(X)$ can be expressed as a simple linear program:
\begin{align*}
    \text{CVaR}_{\alpha}(X) = &  \max_{\{q_1 \geq 0, q_2 \geq 0\}} q_1R_1+q_2R_2, \\
      & \text{s.t.} \quad q_1 \leq \frac{1}{\alpha} p_1, \; q_2 \leq \frac{1}{\alpha} p_2, \; q_1+q_2=1.
\end{align*}
As we can see, when $\alpha=1$, $\text{CVaR}_{\alpha}(X)= p_1R_1+p_2R_2$, which is the expected reward; when $\alpha \rightarrow 0$, $\text{CVaR}_{\alpha}(X) =\max(R_1,R_2)$, which is used as the selection criterion in \eqref{eq:opt_KG} in Algorithm \ref{algo:opt_KG}.  In fact, a more general Opt-KG policy could be selecting the next instance with the largest  $\text{CVaR}_{\alpha}(X)$ with a tuning parameter $\alpha \in [0,1]$.  We can extend Theorem \ref{thm:opt_KG} to prove that the policy based on $\text{CVaR}_{\alpha}(X)$ is consistent for any $\alpha<1$.  According to our own experience, $\alpha \rightarrow 0$ usually has a better performance in our problem especially when the budget is very limited. Therefore, for the sake of presentation simplicity, we introduce the Opt-KG using $\max(R_1,R_2)$ (i.e., $\alpha \rightarrow 0$ in $\text{CVaR}_{\alpha}(X)$) as the selection criterion.

Finally, we highlight that the Opt-KG policy is computationally very efficient. For $K$ instances with $T$ units of the budget, the overall time and space complexity are $O(KT)$ and $O(K)$ respectively. It is much more efficient that the Gittins index policy which requires $O(T^6)$ time and space complexity.

\subsection{Discussions}
\label{sec:discussion}

It is interesting to see the connection between the idea of making the decision based on the optimistic outcome of the reward and the UCB (upper confidence bounds) policy \cite{UCB:02} for the classical multi-armed bandit problem as described in  Section \ref{sec:sub_MDP}. 
In particular, the UCB policy selects the next arm with the maximum \emph{upper confidence index}, which is defined as the current average reward plus the one-sided confidence interval. As we can see, the upper confidence index can be viewed as an ``optimistic'' estimate of the reward. However, we note that since we are in a Bayesian setting and our  stage-wise rewards are artificially created and thus not \emph{i.i.d.} for each arm, the UCB policy \cite{UCB:02} cannot be directly applied to our problem.

In fact, our Opt-KG follows a more general principle of ``optimism in the face uncertainty'' \cite{Szita:2008}. Essentially, the non-consistency of KG is due to its nature of pure exploitation while a consistent policy should typically utilizes exploration. One of the common techniques to handle the exploration-exploitation dilemma is to take an action based on an optimistic estimation of the rewards (see \cite{Szita:2008} and \cite{EvenDar:2001} ), which is the role $R^+(a,b)$ plays in Opt-KG.

For our problem, it is also straightforward to design the ``pessimistic knowledge gradient'' policy which selects the next instance $i_t$ based on the pessimistic outcome of the reward, i.e., $$i_t= \argmax_{i} \left( R^{-}(a^t_i, b^t_i) \doteq \min(R_1(a^t_i, b^t_i), R_2(a^t_i, b^t_i)) \right).$$
 However, as shown in the next proposition with the proof in the appendix, the pessimistic KG policy is inconsistent under the uniform prior.
\begin{proposition}
When starting from the uniform prior (i.e., $a_i^0=b_i^0=1$) for all $\theta_i$,   the pessimistic KG policy will acquire one label for each instance  and then consistently acquire the label for the first instance even if the budget $T$ goes to infinity.
\label{prop:pessimistic_KG}
\end{proposition}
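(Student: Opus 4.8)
The plan is to mirror the structure used for the deterministic KG policy (Proposition \ref{prop:det_KG}): reduce the entire dynamics to the scalar function $R^-(a,b)=\min(R_1(a,b),R_2(a,b))$ on single-instance states, and show it attains a \emph{unique} global minimum equal to $-1/4$, precisely at the once-labeled states $(2,1)$ and $(1,2)$. Since under the uniform prior every instance starts at $(1,1)$, the argument then splits into a ``first round'' in which each instance is sampled once, followed by a permanent ``lock-in'' on the first instance.

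First I would record the base values. A direct computation gives $I(1,1)=1/2$ and $I(2,1)=3/4$, hence $R_1(1,1)=R_2(1,1)=1/4$, so $R^-(1,1)=1/4>0$, whereas $R^-(2,1)=R^-(1,2)=-1/4<0$. Consequently, as long as some instance has never been labeled (value $+1/4$) it is strictly preferred to any instance already carrying one label (value $-1/4$); with smallest-index tie-breaking the policy therefore labels instances $1,2,\dots,K$ once each, in index order, \emph{regardless of the realized labels}, since both $(2,1)$ and $(1,2)$ yield the same value $-1/4$. This establishes the first assertion of the proposition.

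The core of the proof is a closed form for $R^-$. Using the identity $I(a,b)=\Pr(\mathrm{Bin}(a+b-1,\tfrac12)\ge b)$ for positive integers and conditioning on one extra fair coin flip, I obtain the one-step recurrence $I(a,b+1)=I(a,b)-\tfrac12\Pr(\mathrm{Bin}(a+b-1,\tfrac12)=b)$. When $a>b$ we have $I(a,b)>1/2$ and $I(a,b+1)\ge 1/2$, so $h$ acts as the identity on both and
\[
R^-(a,b)=R_2(a,b)=I(a,b+1)-I(a,b)=-\tfrac12\binom{a+b-1}{b}2^{-(a+b-1)} .
\]
By symmetry $R^-(a,b)=R^-(b,a)$, and the same bookkeeping at $a=b$ yields $R^-(a,a)=\tfrac12\binom{2a}{a}2^{-2a}>0$. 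The desired bound now reduces to the elementary fact that a $\mathrm{Bin}(n,\tfrac12)$ pmf never exceeds $1/2$ and equals $1/2$ only when $n=2$ at its center: hence $R^-(a,b)\ge -1/4$ at every state, with equality \emph{only} at $(2,1)$ and $(1,2)$ (both having $a+b=3$), while $R^-(a,b)>-1/4$ at every state with $a+b\ge 4$ and at every diagonal state $a=b$. I expect this lower bound together with its sharp equality characterization to be the main obstacle; deriving the clean binomial formula above is exactly what makes it tractable.

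Finally I would assemble the dynamics. After the first round all $K$ instances sit at $(2,1)$ or $(1,2)$, a $K$-way tie at value $-1/4$, so the policy selects the first instance. Its second label moves it to a state with $a+b=4$, where $R^->-1/4$ strictly, while the other $K-1$ instances remain frozen at exactly $-1/4$. Because $a+b$ only increases, the first instance never returns to $(2,1)$ or $(1,2)$, so its value stays strictly above $-1/4$ forever; it therefore strictly beats every other instance at each subsequent stage and absorbs all remaining budget no matter how large $T$ is. Since instances $2,\dots,K$ never receive a second label, the policy cannot be consistent, which is precisely the content of the proposition.
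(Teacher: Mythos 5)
Your proposal is correct, and its skeleton coincides with the paper's proof: reduce everything to the scalar index $R^-(a,b)=\min(R_1(a,b),R_2(a,b))$ on integer states, compute it in closed form, show that $(2,1)$ and $(1,2)$ are its unique global minimizers, and then run the first-round-plus-lock-in dynamics. In fact your closed form is literally the paper's: for $a>b$ your $-\tfrac12\binom{a+b-1}{b}2^{-(a+b-1)}$ equals the paper's $-0.5^{a+b}/\bigl(b\,B(a,b)\bigr)$, since $1/(b\,B(a,b))=\binom{a+b-1}{b}$ for positive integers. Where you genuinely differ is the justification of the key inequality. The paper proves $R^-(2,1)=R^-(1,2)<R^-(a,b)$ for all $a+b\neq 3$ via a monotonicity lemma ($R^-$ increasing along the band $a=b+1$, and increasing in $a$ for fixed $b$ once $a\geq b+1$), whose ``simple algebra'' proof it omits; you instead read $R^-$ as minus one half of a $\mathrm{Bin}(a+b-1,\tfrac12)$ pmf and invoke the sharp bound that such a pmf never exceeds $1/2$, with equality only at the center when $n=2$. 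Your route is self-contained (it replaces an omitted lemma with a one-line probabilistic fact) and exposes the sharp constant $-1/4$ immediately, but it carries one small imprecision: as stated, the binomial fact is false for $n\in\{0,1\}$ (for $n=1$ the pmf equals $1/2$ at both points, and for $n=0$ it equals $1$). This is not a real gap, because your formula is only ever applied to off-diagonal states, where $a\neq b$ with both positive integers forces $n=a+b-1\geq 2$, and within that range the bound and its equality case $(a,b)\in\{(2,1),(1,2)\}$ are exactly correct --- but you should say so explicitly. With that caveat closed, your lock-in argument (after its second label, instance $1$ lives at states with $a+b\geq 4$, hence index strictly above $-1/4$, while the other instances sit frozen at exactly $-1/4$) is the same as the paper's and completes the proof.
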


Finally, we discuss some other possible choices of prior distributions. For presentation simplicity, we only consider the Beta prior for each $\theta_i$ with the fixed parameters $a_i^0$ and $b_i^0$. In practice, more complicated priors can be easily incorporated into our framework. For example, in instead of using only one Beta prior, one can adopt a mixture of Beta distributions as the prior and the posterior will also follow a mixture of Beta distributions, which allows an easy inference about the posterior. As we show in the experiments (see Section \ref{sec:exp_prior_instance}), the uniform prior does not work well when the data is highly skewed in terms of class distribution. To address this problem, one possible choice is to adopt the prior $p(\theta)=w_1 \B(c,1) + w_2 \B(1,1)+ w_3\B(1,c)$ where $w_1, w_2$ and $w_3$ are the weights and $c$ is a constant larger than 1 (e.g., $c=5$). In such a prior, $B(c,1)$ corresponds to the data with more positive labels while $B(1,c)$ to the data with more negative labels. In addition to the mixture Beta prior, one can adopt the hierarchical Bayesian approach which puts hyper-priors on the parameters in the Beta priors. The inference can be performed using empirical Bayes approach \cite{Gelman:13,Bayesian:07}. In particular, one can periodically re-calculate the MAP estimate of the hyper-parameters based on the available data and update the model, but otherwise proceed with the given hyper-parameters.  For common choices of hyper-priors of Beta, please refer to Section 5.3 in \cite{Gelman:13}. These approaches can also be applied to model the workers' reliability as we introduced in the next Section. For example, one can use a mixture of Beta distributions as the prior for the workers' reliability, where $\B(c,1)$ corresponds to reliable workers, $\B(1,1)$ to random workers and $\B(1,c)$ to malicious or poorly informed workers.



\section{Incorporate Reliability of Heterogeneous Workers}
\label{sec:worker}

In push crowdsourcing marketplaces, it is important to model workers' reliability so that the decision maker could assign more instances to reliable workers.  Assuming that there are $M$ workers in a push marketplace, we can capture the reliability of the $j$-th worker by introducing an extra parameter $\rho_j \in  [0,1]$ as in \cite{Dawid:79, Vikas:10, Oh:12}, which is defined as the probability of getting the same label as the one from a random fully reliable worker. Recall that the soft-label $\theta_i$  is the $i$-th instance's probability of being labeled as positive by a fully reliable worker and let $z_{ij}$ be the label provided by the $j$-th worker for the $i$-th instance. We model the distribution of  $z_{ij}$ for given $\theta_i$ and $\rho_j$ using the  \emph{one-coin} model \cite{Dawid:79, Oh:12}\footnote{We can further extend it to a more complex \emph{two-coin} model \cite{Dawid:79, Vikas:10} by introducing a pair of parameters $(\rho_{j1}, \rho_{j2})$ to model the $j$-th worker's reliability. In particular, $\rho_{j1}$ and $\rho_{j2}$ are the probabilities of getting the positive and negative labels when a fully reliable worker  provides the same label.}:
\small
\begin{align}
 \Pr(z_{ij}=1| \theta_i, \rho_j)  & =   \Pr(z_{ij}=1| y_i=1, \rho_j) \Pr(y_i=1 |\theta_i) +\Pr(z_{ij}=1| y_i=-1, \rho_j) \Pr(y_i=-1 |\theta_i) \nonumber \\
                                  & =  \rho_j  \theta_i+  (1-\rho_j)(1-\theta_i);  \label{eq:Z_1}\\
 \Pr(z_{ij}=-1| \theta_i, \rho_j) & =   \Pr(z_{ij}=-1| y_i=-1, \rho_j) \Pr(y_i=-1 |\theta_i) +\Pr(z_{ij}=-1 | y_i=1, \rho_j) \Pr(y_i=1 |\theta_i) \nonumber \\
                                  & =   \rho_j (1-\theta_i) + (1-\rho_j)\theta_i, \label{eq:Z_0}
\end{align}
\normalsize
where $y_i$ denotes the label provided a random fully reliable worker for the $i$-th instance. Here we make the following implicit assumption:
\begin{assumption}
  We assume that different workers make independent judgements and, for each single worker, the labels provided by him/her to different instances are also independent.
\end{assumption}

As the parameter $\rho_j$ increases from $0$ to $1$, the $j$-th worker's reliability also increases in the sense that $\Pr(z_{ij}=1| \theta_i, \rho_j)$ gets more and more close to $\theta_i$, which is the probability of getting a positive label from a random fully reliable worker.  Different types of workers can be easily characterized by $\rho_j$. When all $\rho_j=1$,  it recovers the previous model with fully reliable workers since $ \Pr(z_{ij}=1| \theta_i, \rho_j) = \theta_i$, i.e, each worker provides the label only according to the underlying soft-label of the instance.  When $\rho_j=0.5$, we have $\Pr(z_{ij}=1 | \theta_i, \rho_j)=\Pr(z_{ij}=-1 | \theta_i, \rho_j)=0.5$, which indicates that the $j$-th worker is a spammer, who randomly submits positive or negative labels. When $\rho_j=0$, it indicates that the $j$-th worker is poorly informed or misunderstands the instruction such that he/she always assigns wrong labels.


We assume that instances' soft-label $\{\theta_i\}_{i=1}^K$ and workers' reliability $\{\rho_j\}_{j=1}^M$ are drawn from known Beta prior distributions: $\theta_i \sim \B(a_i^0, b_i^0)$ and $\rho_j\sim \B(c_j^0, d_j^0)$. At each stage, we need to make the decision on both the next instance $i$ to be labeled and the next worker $j$ to label the instance $i$ (we omit $t$ in $i,j$ here for notational simplicity). In other words, the action space  $\mathcal{A}=\{(i,j): (i,j) \in \{1,\ldots, K\}\times \{1, \ldots, M\}\}$.
Once the decision is made, the distribution of the outcome $z_{ij}$ is given by \eqref{eq:Z_1} and \eqref{eq:Z_0}. Given the prior distributions and likelihood functions in \eqref{eq:Z_1} and \eqref{eq:Z_0}, the Bayesian Markov Decision process can be formally defined as in Section \ref{sec:MDP}. Similar to the homogeneous worker setting,  the optimal inferred positive set $H_T$ takes the form of $H_T=\{i: P_i^T \geq 0.5\}$ as in Proposition \ref{prop:H} with $P_i^t= \Pr(i \in H^*| \calF_t)= \Pr\left(\theta_i \geq 0.5 | \calF_t \right)$. The value function $V(S^0)$  still takes the form of \eqref{eq:value_func}, which can be further decomposed into the sum of stage-wise rewards in \eqref{eq:reward} using Proposition \ref{prop:reward}. Unfortunately, in the heterogenous worker setting,
the posterior distributions of $\theta_i$ and $\rho_j$ are highly correlated with a sophisticated joint distribution, which makes the computation of stage-wise rewards in \eqref{eq:reward} much more challenging. In particular, given the prior $\theta_i \sim \B(a_i^0, b_i^0)$ and $\rho_j\sim \B(c_j^0, d_j^0)$, the posterior distribution of $\theta_i$ and $\rho_j$ given the label $z_{ij}=z \in \{-1,1\}$ takes the following form:
\begin{eqnarray}
    p(\theta_i, \rho_j |z_{ij}=z) = \frac{\Pr(z_{ij}=z| \theta_i, \rho_j) \B(a_i^0,b_i^0)\B(c_j^0,d_j^0)}{\Pr(z_{ij}=z)},
\end{eqnarray}
where $\Pr(z_{ij}=z| \theta_i, \rho_j)$ is the likelihood function defined in \eqref{eq:Z_1} and \eqref{eq:Z_0} and
\begin{align*}
  \Pr(z_{ij}=1)&=\E(\Pr(z_{ij}=1| \theta_i, \rho_j)) =\E(\theta_i)\E(\rho_j) + (1-\E(\theta_i)) (1-\E(\rho_j)) \\
               &=\frac{a_i^0}{a_i^0 + b_i^0} \frac{c_j^0}{c_j^0+d_j^0}+\frac{b_i^0}{a_i^0 + b_i^0} \frac{d_j^0}{c_j^0+d_j^0}.
\end{align*}
As we can see, the posterior distribution $p(\theta_i, \rho_j |z_{ij}=z)$ no longer takes the form of the product of the distributions of
$\theta_i$ and $\rho_j$ and the marginal posterior of $\theta_i$ is no longer a Beta distribution.  As a result, $P_i^t$ does not have a simple representation as in \eqref{Iab}, which makes the  computation of the reward function much more difficult as the number of stages increases. Therefore, to apply our Opt-KG policy to large-scale applications, we need to use some approximate posterior inference techniques.


\begin{algorithm}[!t]
\renewcommand{\thealgorithm}{2}
    \caption{Optimistic Knowledge Gradient for Heterogeneous Workers} 
    \label{algo:opt_KG_worker}
    \begin{algorithmic}
    \STATE {\bfseries Input:} Parameters of prior distributions for instances $\{a_i^0, b_i^0\}_{i=1}^K$ and for workers  $\{c_j^0, d_j^0\}_{j=1}^M$. The total budget $T$.
    \medskip
    \FOR{$t = 0, \ldots, T-1$}
     \smallskip
        \STATE \textbf{1.} Select the next instance $i_t$ to label and the next worker $j_t$ to label $i_t$ according to:
        \small
        \begin{equation}
             (i_t, j_t)= \argmax_{(i,j) \in \{1,\ldots, K\}\times \{1, \ldots, M\}}\left( R^{+}(a^t_i, b^t_i, c^t_j, d^t_j) \doteq \max(R_1(a^t_i, b^t_i, c^t_j, d^t_j), R_2(a^t_i, b^t_i, c^t_j, d^t_j)) \right).
        \label{eq:opt_KG_worker}
        \end{equation}
        \normalsize
        \STATE  \textbf{2.} Acquire the label $z_{i_t j_t}\in \{-1,1\}$ of the $i$-th instance from the $j$-th worker.

        \STATE \textbf{3.} Update the posterior by setting:
        \begin{align*}
          a^{t+1}_{i_t}  =\tilde{a}^t_{i_t}(z_{i_t j_t}) \qquad b^{t+1}_{i_t} =\tilde{b}^t_{i_t}(z_{i_t j_t}) \qquad
          c^{t+1}_{j_t}  =\tilde{c}^t_{j_t}(z_{i_t j_t}) \qquad  d^{t+1}_{j_t}  =\tilde{d}^t_{j_t}(z_{i_t j_t}) ,
        \end{align*}
         and all parameters for $i \neq i_t$ and $j \neq j_t$ remain the same.
    \ENDFOR
   \medskip
    \STATE {\bfseries Output:} The positive set $H_T=\{i: a^T_i \geq b^T_i\}$.
    \end{algorithmic}
\end{algorithm}

When applying Opt-KG, we need to perform $2\cdot K \cdot M \cdot T$ inferences of the posterior distribution in total. Each approximate inference should be computed very efficiently, hopefully in a closed-form. For large-scale problems, most traditional approximate inference techniques such as Markov Chain Monte Carlo (MCMC) or variational Bayesian methods (e.g., \cite{Beal:03, Paisley:12}) may lead to higher computational cost since each inference is an iterative procedure. To address the computational challenge, we apply the variational approximation with the moment matching technique so that each inference of the approximate posterior can be computed in a closed-form. In fact, any highly efficient approximate inference can be utilized to compute the reward function. Since the main focus of the paper is on the MDP model and Opt-KG policy, we omit the discussion for other possible approximate inference techniques. In particular, we first adopt the variational approximation by assuming the conditional independence of $\theta_i$ and $\rho_j$:
 \begin{equation*}
  p(\theta_i, \rho_j | z_{ij}=z ) \approx  p(\theta_i| z_{ij}=z )p(\rho_j| z_{ij}=z )
 \end{equation*}
 We further approximate $p(\theta_i| z_{ij}=z)$ and $p(\rho_j| z_{ij}=z)$ by two Beta distributions:
 \begin{equation*}
    p(\theta_i| z_{ij}=z ) \approx  \B(\tilde{a}_i(z), \tilde{b}_i(z));   \qquad  p(\rho_j| z_{ij}=z ) \approx \B(\tilde{c}_j(z), \tilde{d}_j(z)),
 \end{equation*}
 where the parameters $\tilde{a}_i(z)$, $\tilde{b}_i(z)$, $\tilde{c}_j(z)$, $\tilde{d}_j(z)$ are computed using moment matching  with the analytical form presented in the appendix. After this approximation, the new posterior distributions of $\theta_i$ and $\rho_j$ still have the same structure as their prior distribution, i.e., the product of two Beta distributions, which allows a repeatable use of this approximation every time when a new label is collected. Moreover, due to the Beta distribution approximation of $p(\theta_i| z_{ij}=z)$, the reward function takes a similar form as in the previous setting.
 In particular, assuming at a certain stage, $\theta_i$ has the posterior distribution  $\B(a_i,b_i)$ and $\rho_j$ has the posterior distribution $\B(c_j,d_j)$. The reward of getting positive and negative labels for the $i$-th instance from the $j$-th worker are presented in \eqref{eq:pos_label_worker} and \eqref{eq:neg_label_worker}:
\begin{align}
  R_1(a_i,b_i,c_j,d_j)&=h(I(\tilde{a}_i(z=1), \tilde{b}_i(z=1)))-h(I(a_i, b_i)), \label{eq:pos_label_worker} \\
  R_2(a_i,b_i,c_j,d_j)&=h(I(\tilde{a}_i(z=-1), \tilde{b}_i(z=-1)))-h(I(a_i, b_i)), \label{eq:neg_label_worker}
\end{align}
With the reward in place,  we present Opt-KG for budget allocation in the heterogeneous worker setting in Algorithm \ref{algo:opt_KG_worker}.  We also note that due to the variational approximation of the posterior, establishing the consistency results of Opt-KG becomes very challenging in the heterogeneous worker setting.

\section{Extensions}
\label{sec:extension}
Our MDP formulation is a general framework to address many complex settings of dynamic budget allocation problems in crowd labeling. In this section, we briefly  discuss two important extensions, where for both extensions, Opt-KG can be directly applied as an approximate policy. We note that for the sake of presentation simplicity, we only present these extensions in the noiseless homogeneous worker setting. Further extensions to the heterogeneous setting are rather straightforward using the technique from Section \ref{sec:worker}.

\subsection{Utilizing Contextual Information}
\label{sec:contextual}

When the contextual information is available for instances, we could easily extend our model to incorporate such an important information. In particular, let the contextual information for the $i$-th instance be represented by a $p$-dimensional feature vector $\bx_i \in \mathbb{R}^p$.  We could utilize the feature information by assuming a logistic model for $\theta_i$:
\begin{eqnarray*}
\theta_i \doteq \frac{\exp\{\langle \bw,  \bx_i \rangle \}}{1+\exp\{\langle \bw,  \bx_i \rangle \}},
\end{eqnarray*}
where $\bw$ is assumed to be drawn from a Gaussian prior  $N(\bmu_0, \bSigma_0)$. At the $t$-th stage with the current state $(\bmu_t, \bSigma_t)$, the decision maker  determines the instance $i_t$ and acquire its label $y_{i_t} \in \{-1, 1\}$. Then we update the posterior $\bmu_{t+1}$ and $\bSigma_{t+1}$ using the Laplace method as in Bayesian logistic regression \cite{Bishop:PRML}. Variational methods can be applied to further accelerate the posterior update \cite{Jaakkola:00} . The details are provided in the appendix.

\subsection{Multi-Class Categorization}
\label{sec:multi}

Our MDP formulation can also be extended to deal with multi-class categorization problems, where each instance is a multiple choice question with several possible options (i.e., classes). More formally, in a multi-class setting with $C$ different classes,  we assume that the $i$-th instance  is associated with a probability vector $\btheta_i=(\theta_{i1}, \ldots \theta_{iC})$, where $\theta_{ic}$ is the probability that the $i$-th instance will be labeled as the class $c$ by a random fully reliable worker and $\sum_{i=1}^C \theta_{ic}=1$.  We assume that $\btheta_i$ has a Dirichlet prior $\btheta_i \sim \Dir(\balpha_i^0)$ and the initial state $S^0$ is a $K \times C$ matrix with  $\balpha_i^0$ as its $i$-th row. At each stage $t$ with the current state $S^t$, we determine the next  instance $i_t$  to be labeled and collect its label $y_{i_t} \in \{1,\ldots, C\}$, which follows the categorical distribution: $p(y_{i_t})= \prod_{c=1}^C  \theta_{i_tc}^{I(y_{i_t}=c)}$. Since the Dirichlet is the conjugate prior of the categorical distribution, the next state induced by the posterior distribution is: $S^{t+1}_{i_t}=S^{t}_{i_t}+ \bdelta_{y_{i_t}}$ and $S^{t+1}_{i}=S^{t}_{i}$ for all $i \neq i_t$. Here $\bdelta_c$ is a row vector with one at the $c$-th entry and zeros at all other entries. The transition probability is: $$\Pr(y_{i_t}=c | S^t, i_{t})=\E(\theta_{i_t c}| S^t)=  \frac{\alpha^t_{i_{t}c}}{\sum_{c=1}^C \alpha^t_{i_{t}c}}.$$

We denote the true set of instances in class $c$  by $H^*_c=\{i: \theta_{ic} \geq \theta_{ic'}, \forall c' \neq c  \}$. By a similar argument as in Proposition \ref{prop:H}, at the final stage $T$, the estimated set of instances belonging to class $c$ is $$ H^T_c= \{i:  P^T_{ic}\geq P^T_{ic'}, \forall c' \neq c\}, $$ where $P^t_{ic}= \Pr(i \in H^*_c | \calF_t) = \Pr (  \theta_{ic} \geq  \theta_{ic'}, \;\; \forall \;\; c'\neq c| S^t)$. We note that if the $i$-th instance belongs to more than one $H^T_c$, we only assign it to the one with the smallest index $c$ so that $\{H_c^T\}_{c=1}^C$ forms a partition of $\{1,\ldots,K\}$. Let $\bP_i^t =(P_{i1}^t, \ldots, P_{iC}^t)$ and $h(\bP_i^t) =\max_{1 \leq c \leq C} P_{ic}^t$.  The expected reward takes the form of: $$R(S^t, i_t)=\E \left( h(\bP_{i_t}^{t+1}) -   h(\bP_{i_t}^{t}) |S^{t}, i_t \right).$$ With the reward function in place, we can formulate the problem into a MDP and use DP to obtain the optimal policy and Opt-KG to compute an approximate policy. The only computational challenge is how to calculate $P^t_{ic}$ efficiently so that the reward can be evaluated. We present an efficient method in the appendix. 
We can further use Dirichlet distribution to model workers reliability as in \cite{Chao_Liu:12}. Using multi-class Bayesian logistic regression, we can also incorporate contextual information into the multi-class setting in a straightforward manner.

\section{Related Works}
\label{sec:related}

Categorical crowd labeling is one of the most popular tasks in crowdsourcing since it requires less effort of the workers to provide categorical labels than other tasks such as language translations. Most work in categorical crowd labeling are solving a static problem, i.e., inferring true labels and workers' reliability based on a static labeled dataset \cite{Dawid:79, Vikas:10, Chao_Liu:12, Peter:10, Whitehill:09, Zhou:12, QiangLiu:12,  Gao:13}. The first work that incorporates diversity of worker reliability is \cite{Dawid:79}, which uses EM to perform the point estimation on both worker reliability and true class labels. Based on that, \cite{Vikas:10} extended \cite{Dawid:79} by introducing Beta prior for workers' reliability and features of instances in the binary setting; and \cite{Chao_Liu:12} further introduced Dirichlet prior for modeling workers' reliability in the multi-class setting. Our work utilizes the modeling techniques in these two static models as basic building blocks but extends to dynamic budget allocation settings.

In recent years, there are several works that have been devoted into online learning or budget allocation in crowdsourcing \cite{Karger:13, Oh:12,  Thore:12, Ho:13, Rudin:12, Yan:11, Ece:12, Panos:13}. The method proposed in \cite{Oh:12} is  based on the one-coin model. In particular, it assigns instances to workers according to a random regular bipartite  graph. Although the error rate is proved to achieve the minimax rate, its analysis is asymptotic and method is not optimal when the budget is limited. \cite{Karger:13} further extended \cite{Oh:12} to the multi-class setting.  The new labeling uncertainty method in \cite{Panos:13} is one of the state-of-the-art methods for repeated labeling. However, it does not model each worker's reliability and incorporate it into the allocation process.   \cite{Ho:13}  proposed an online primal dual method for adaptive task assignment and investigated the sample complexity to guarantee that the probability of making an error for each instance is less that a threshold. However, it requires gold samples to estimate workers' reliability. \cite{Ece:12} used MDP to address a different decision problem in crowd labeling, where the decision maker collects labels for each instance one after another and only decides whether to hire an additional worker or not. Basically, it is an optimal stopping problem since there is no pre-fixed amount of budget and one needs to balance the accuracy v.s. the amount of budget. Since the accuracy and the amount of budget are in different metrics, such a balance could be very subjective. Furthermore, the MDP framework in \cite{Ece:12} cannot distinguish different workers. To the best of our knowledge, there is no existing method that characterizes the \emph{optimal} allocation policy for finite $T$. In this work,  with the MDP formulation and DP algorithm, we characterize the optimal policy for budget allocation in crowd labeling under any budget level.


We also note that the budget allocation in crowd labeling is  fundamentally different from noisy active learning \cite{Settles:09, Nowak:09}. Active learning usually does not model the variability of labeling difficulties among instances and assumes a single (noisy) oracle; while in crowd labeling, we need to model both instances' labeling difficulty and different workers' reliability. Secondly, active learning requires the feature information of instances for the decision, which could be unavailable in crowd labeling. Finally, the goal of the active learning is to label as few instances as possible to learn a good classifier. In contrast,  for budget allocation in crowd labeling, the goal is to infer the true labels for as many instances as possible.


In fact, our MDP formulation is essentially a finite-horizon Bayesian multi-armed bandit (MAB) problem.
While the infinite-horizon Bayesian MAB has been well-studied and the optimal policy can be computed via Gittins index \cite{Gittins:89}, for finite-horizon Bayesian MAB, the Gittins index rule is only an approximate policy with high computational cost. 
The proposed Opt-KG and a more general conditional value-at-risk based KG could be general policies for Bayesian MAB. Recently, a Bayesian UCB policy was proposed to address  a different Bayesian MAB problem \cite{Kaufmann:12}. However, it is not clear how to directly apply the policy to our problem since we are not updating the posterior of the mean of rewards as in \cite{Kaufmann:12}. 
We note that our problem is also related to optimal stopping problem. The main difference is that the optimal stopping problem is infinite-horizon while our problem is finite-horizon and the decision process must stop when the budget is exhausted.


\section{Experiments}
\label{sec:exp}

In this section, we conduct empirical study to show some interesting properties of the proposed Opt-KG policy and compare its performance to  other methods. We note that, first, we observe that several commonly used priors such as the uniform prior ($\B(1,1)$), Jeffery prior ($\B(1/2, 1/2)$) and Haldane prior ($\B(0,0)$) for instances' soft-label $\{\theta_i\}_{i=1}^K$ lead to very similar performance. Therefore, we adopt the uniform prior ($\B(1,1)$) unless otherwise specified. Second, for each simulated experiment, we randomly generate 20 different sets of data and report the averaged accuracy. Here, the accuracy is defined as $\left(|H_T \cap H^*| + |(H_T)^c \cap (H^*)^c|\right)/K$, which is normalized between $[0,1]$.  The deviations for different methods are similar and quite small and thus omitted for the purpose of better visualization and space-saving.

\subsection{Simulated Study}

\subsubsection{Study on Labeling Frequency}

\begin{figure}[!t]
\centering
\subfigure[$T=5K=105$]{
  \includegraphics[width=0.31\textwidth]{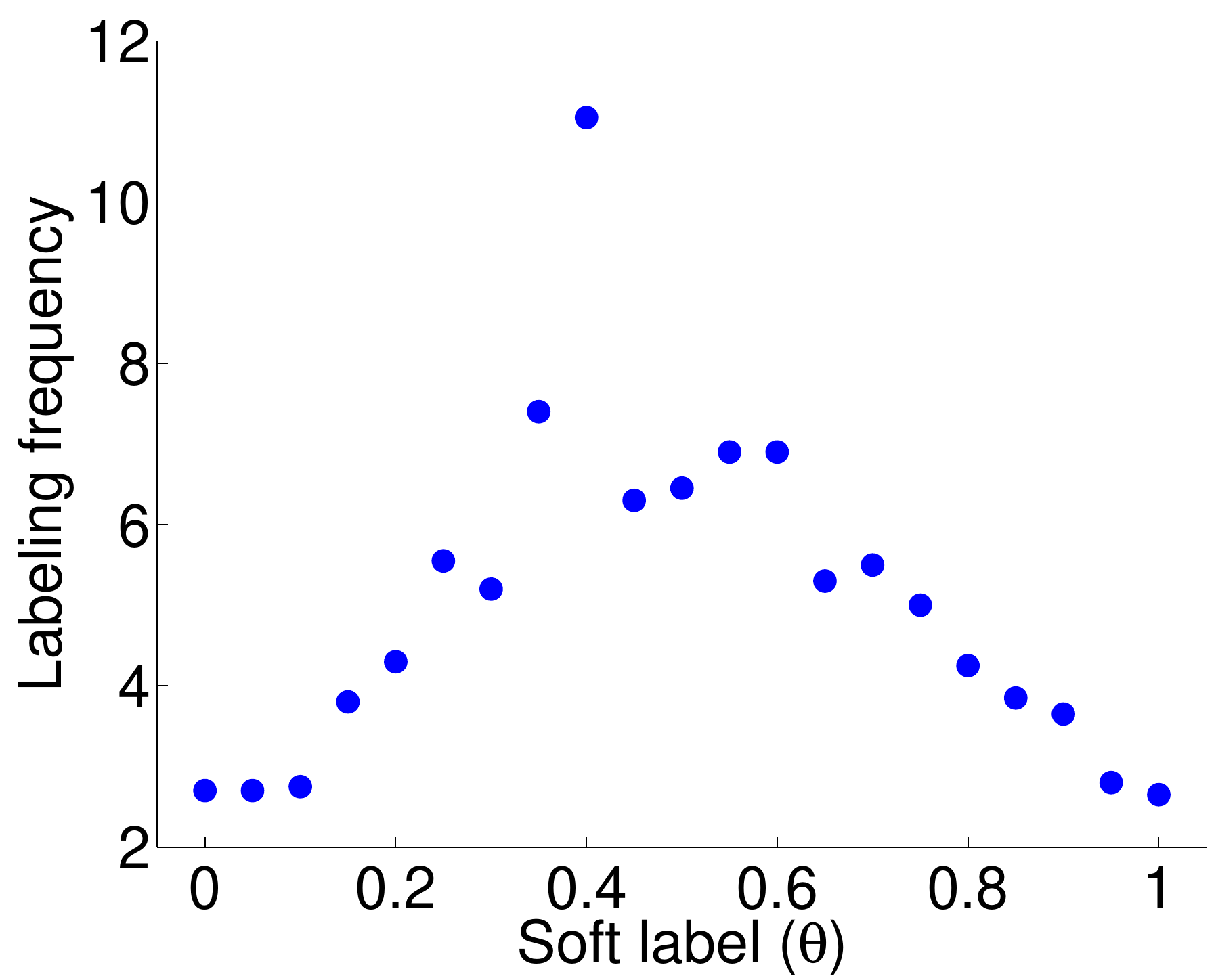}
	    \label{fig:task_cnt_105}
}\subfigure[$T=15K=315$]{
  \includegraphics[width=0.31\textwidth]{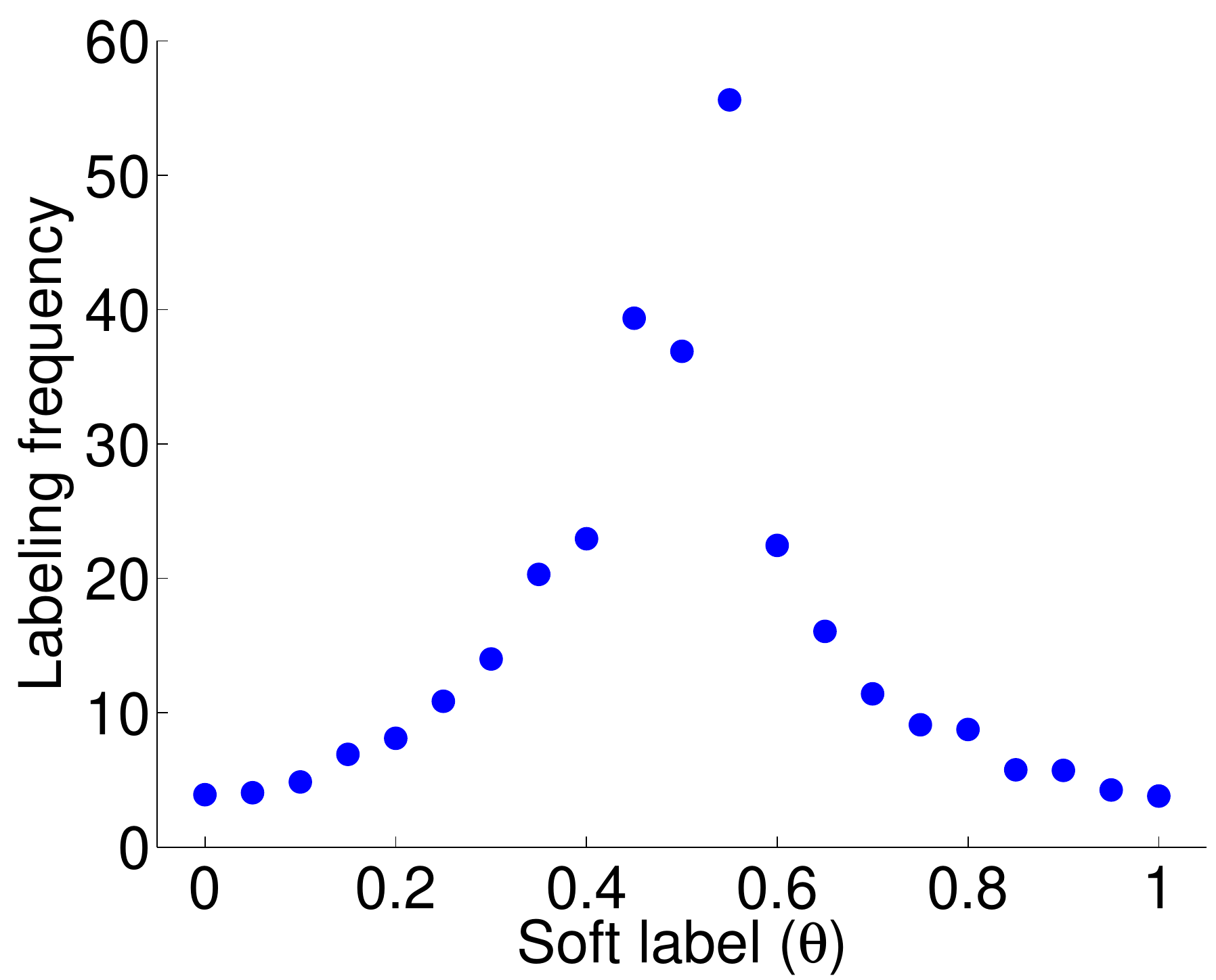}
	    \label{fig:task_cnt_315}
}
\subfigure[$T=50K=1050$]{
  \includegraphics[width=0.31\textwidth]{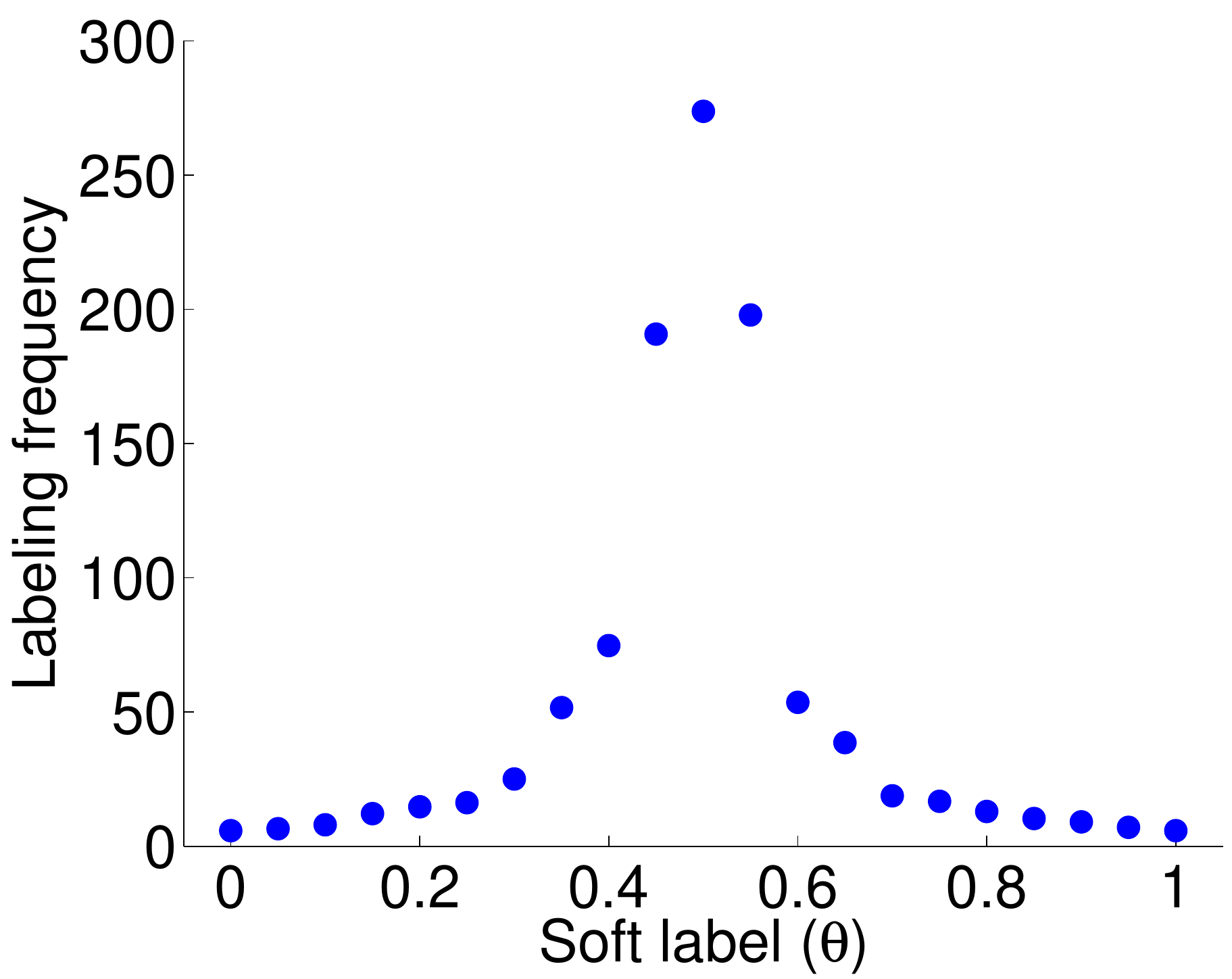}
	    \label{fig:task_cnt_1050}
}
\caption{Labeling counts for instances with different levels of ambiguity.}
\label{fig:task_cnt}
\end{figure}

\begin{figure}[!t]
\centering
\subfigure[$T=5K=105$]{
  \includegraphics[width=0.31\textwidth]{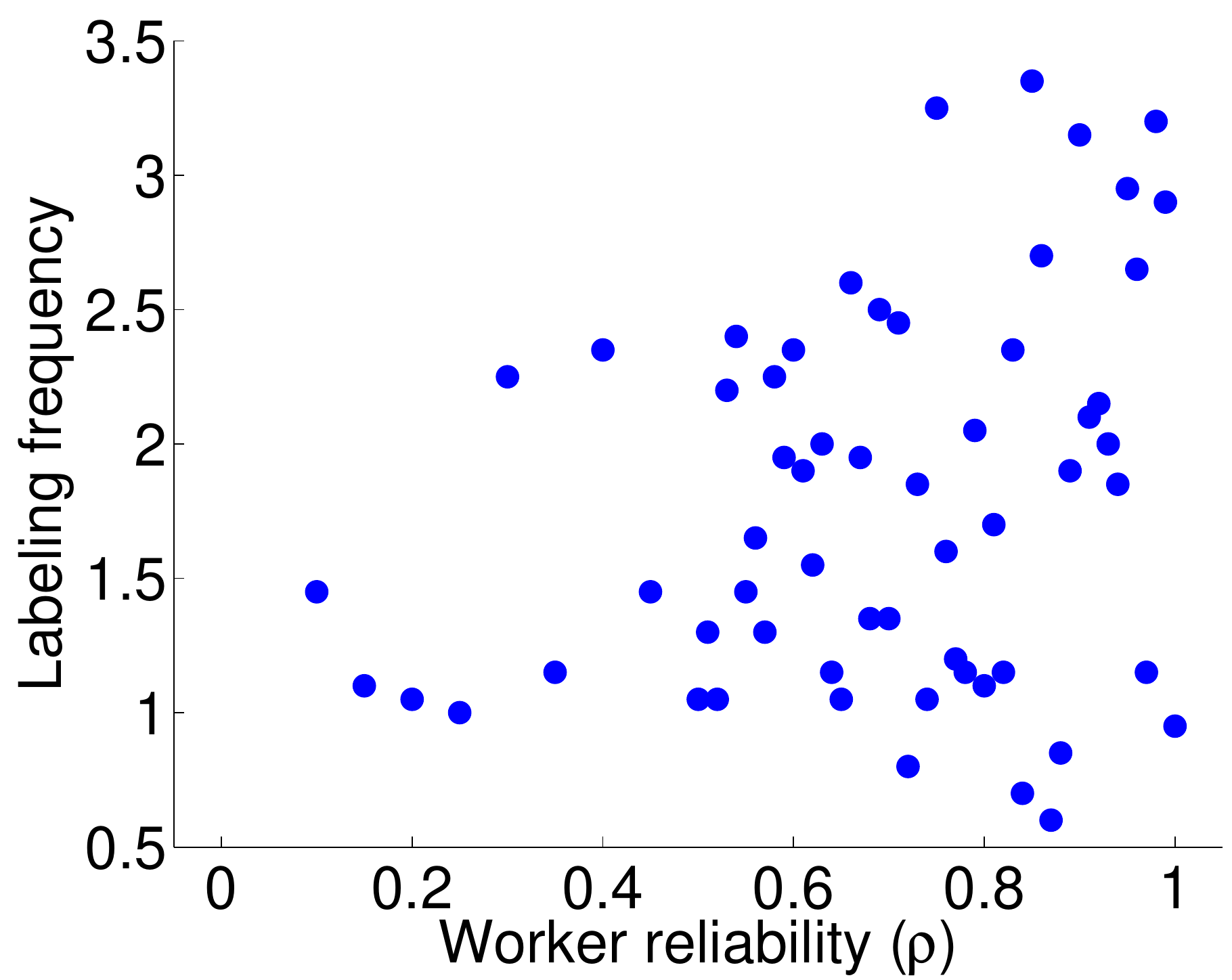}
	    \label{fig:worker_cnt_105}
}\subfigure[$T=15K=315$]{
  \includegraphics[width=0.31\textwidth]{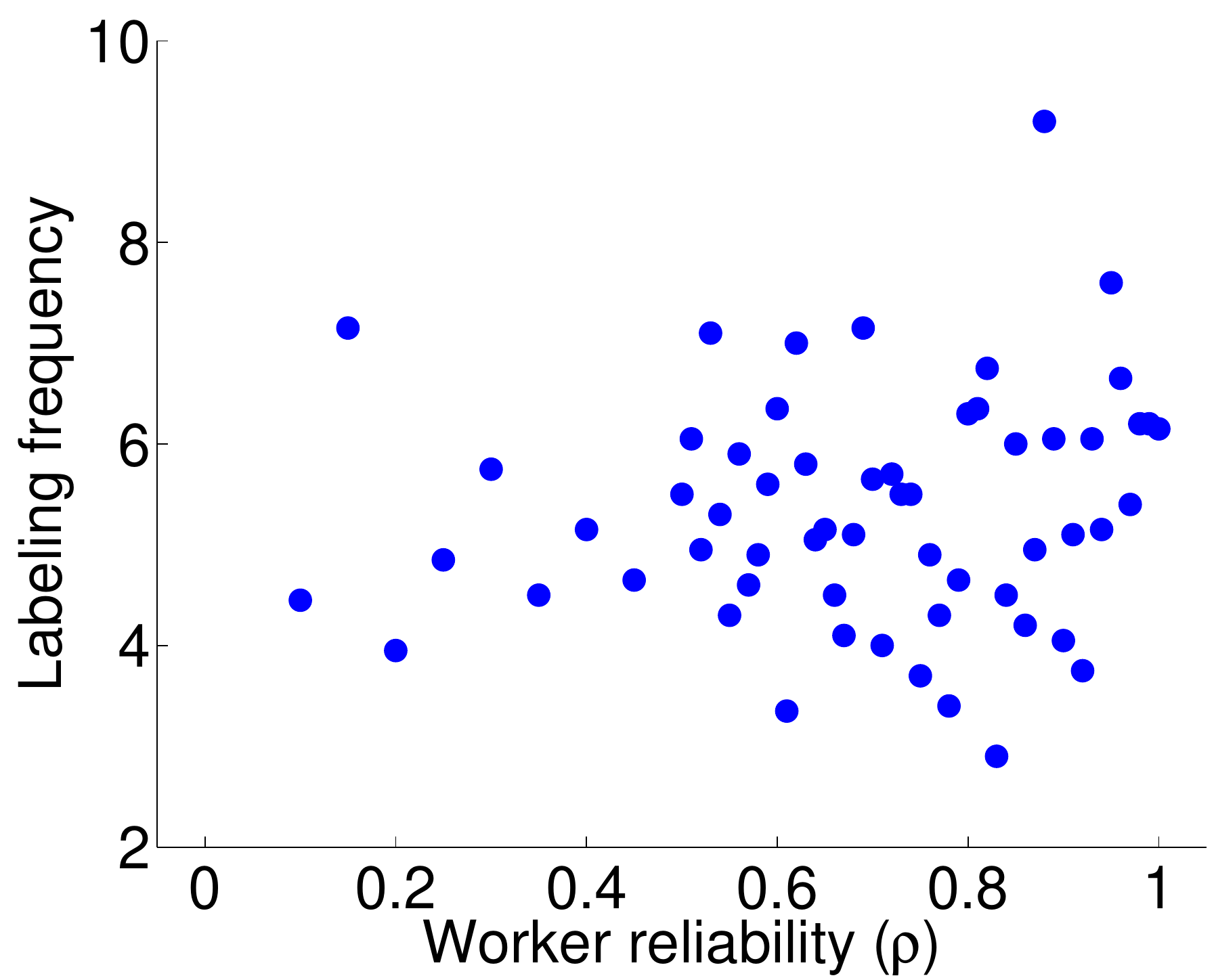}
	    \label{fig:worker_cnt_315}
}
\subfigure[$T=50K=1050$]{
  \includegraphics[width=0.31\textwidth]{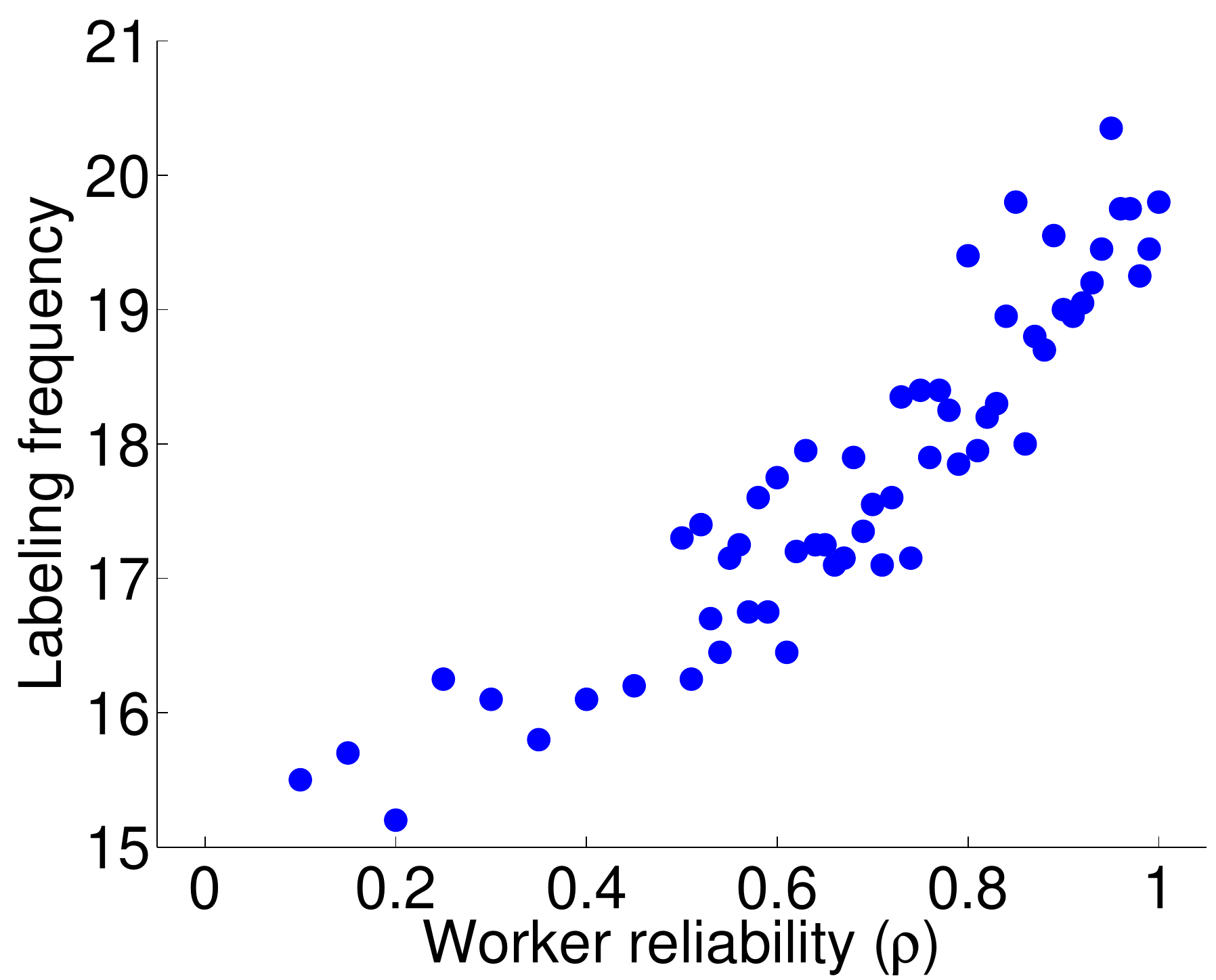}
	    \label{fig:worker_cnt_1050}
}
\caption{Labeling counts for workers with different levels of reliability.}
\label{fig:worker_cnt}
\end{figure}

We first investigate  that, in the homogeneous noiseless worker setting (i.e., workers are fully reliable), how the total budget is allocated among instances with different levels of ambiguity. In particular, we assume there are $K=21$ instances with soft-labels $\btheta=(\theta_1, \theta_2, \theta_3, \ldots, \theta_K)=(0, 0.05, 0.1, \ldots, 1)$. We vary the total budget $T=5K, 15K, 50K$ and report the number of  times that each instance is labeled on average over 20 independent runs. The results are presented in Figure \ref{fig:task_cnt}. It can be seen from Figure \ref{fig:task_cnt} that, more ambiguous instances with $\theta$ close to 0.5 in general receive more labels than those simple instances with $\theta$ close to 0 or 1.   A more interesting observation is that when the budget level is low (e.g., $T=5K$ in Figure \ref{fig:task_cnt_105}), the policy spends less budget on those very ambiguous instances (e.g., $\theta=0.45$ or $0.5$ ), but more budget on exploring less ambiguous instances (e.g., $\theta=0.35$, $0.4$ or $0.6$). When the budget goes higher (e.g., $T=15K$ in Figure \ref{fig:task_cnt_315}), those very ambiguous instances receive more labels but the most ambiguous instance ($\theta=0.5$) not necessarily receives the most labels. In fact, the instances with $\theta=0.45$ and $\theta=0.55$ receive more labels than that of the most ambiguous instance. When the total budget is sufficiently large (e.g., $T=50K$ in Figure \ref{fig:task_cnt_1050}), the most ambiguous  instance receives the most labels since all the other instances have received enough labels to infer their true labels.

Next, we investigate that, in the heterogeneous worker setting, how many instances each worker is assigned. We simulate $K=21$ instances' soft-labels as before and further simulate workers' reliability $\brho=(\rho_1, \rho_2, \ldots, \rho_M)=(0.1, 0.15, \ldots, 0.5, 0.505, 0.515, \ldots, 0.995)$ for $M=59$ workers. Such a simulation ensures that there are more reliable workers, which is in line with actual situation.  We vary the total budget $T=5K, 15K, 50K$ and report the number of instances that each worker is assigned on average over 20 independent runs in Figure \ref{fig:worker_cnt}. As one can see, when the budget level goes up, there is clear trend that more reliable workers receive more instances.

\subsubsection{Prior for instances}

\label{sec:exp_prior_instance}

We investigate how robust Opt-KG is when using the uniform prior for each $\theta_i$. We first simulate $K=50$ instances with each $\theta_i \sim \B(0.5, 0.5)$, $\theta_i \sim \B(2,2)$, $\theta_i \sim \B(2,1)$ or $\theta_i \sim \B(4,1)$. The density functions of these four different Beta distributions are plotted in Figure \ref{fig:beta_task_pdf}.  For each generating distribution of $\theta_i$, we compare Opt-KG using the uniform prior ($\B(1,1)$) (in red line) to Opt-KG with the true generating distribution as the prior (in blue line). The comparison in accuracy with different levels of budget ($T=2K, \ldots, 20K$) is shown in Figure \ref{fig:prior_task}. As we can see, the performance of Opt-KG using two different priors are quite similar for most generating distributions except for $\theta_i \sim \B(4,1)$ (i.e., the highly imbalanced class distribution). When $\theta_i \sim \B(4,1)$, the Opt-KG with uniform prior needs at least $T=16K$ units of budget to match the performance of Opt-KG with true generating distribution as the prior. This result indicates that for balanced class distributions, the uniform prior is a good choice and robust to the underlying distribution of $\theta_i$. For highly imbalanced class distributions, if a uniform prior is adopted, one needs more budget to recover from the inaccurate prior belief.

\begin{figure}[!t]
\centering
\subfigure[$\B(0.5,0.5)$]{
  \includegraphics[width=0.22\textwidth]{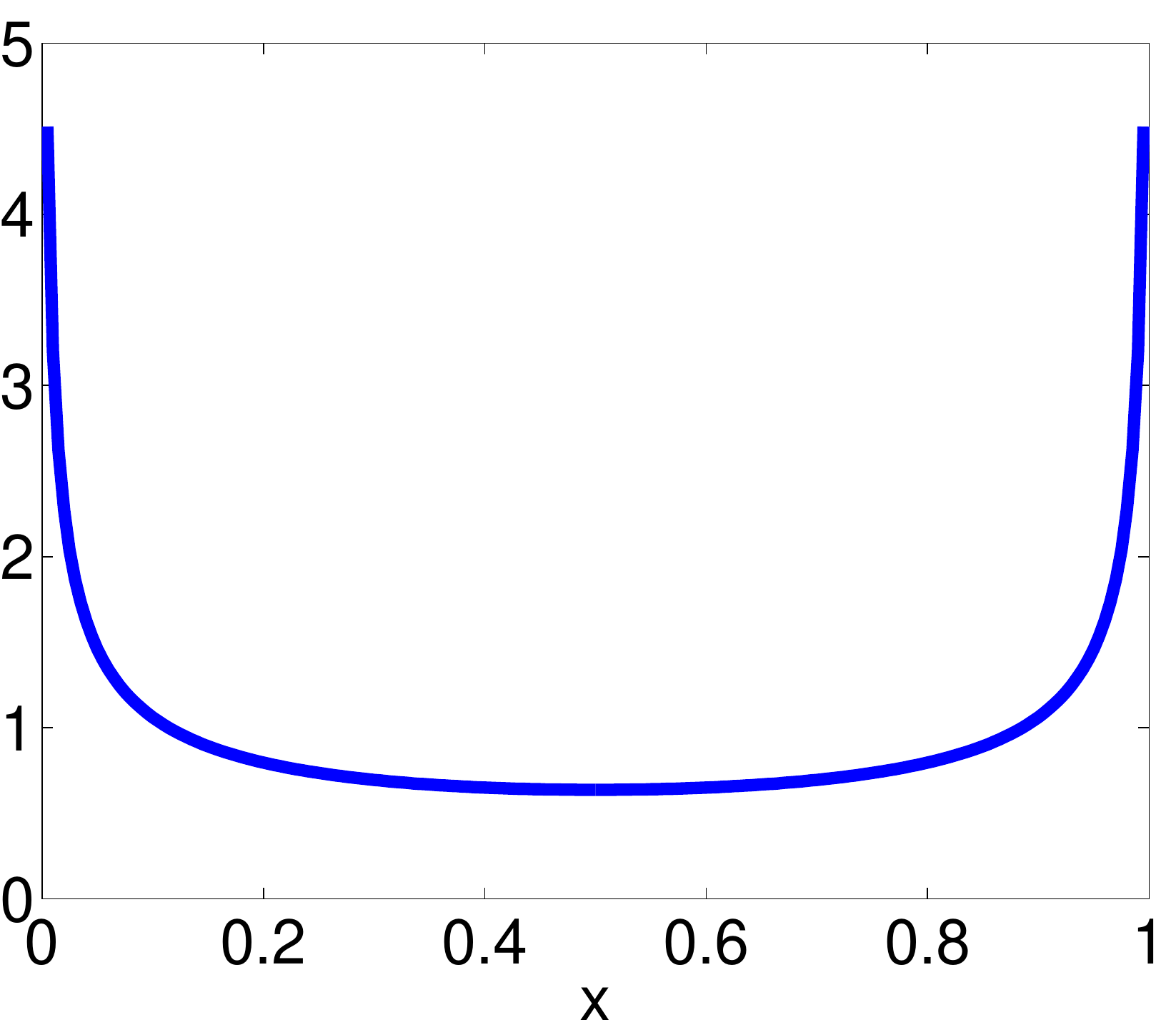}
	    \label{fig:beta_05_05}
}\subfigure[$\B(2,2)$]{
  \includegraphics[width=0.22\textwidth]{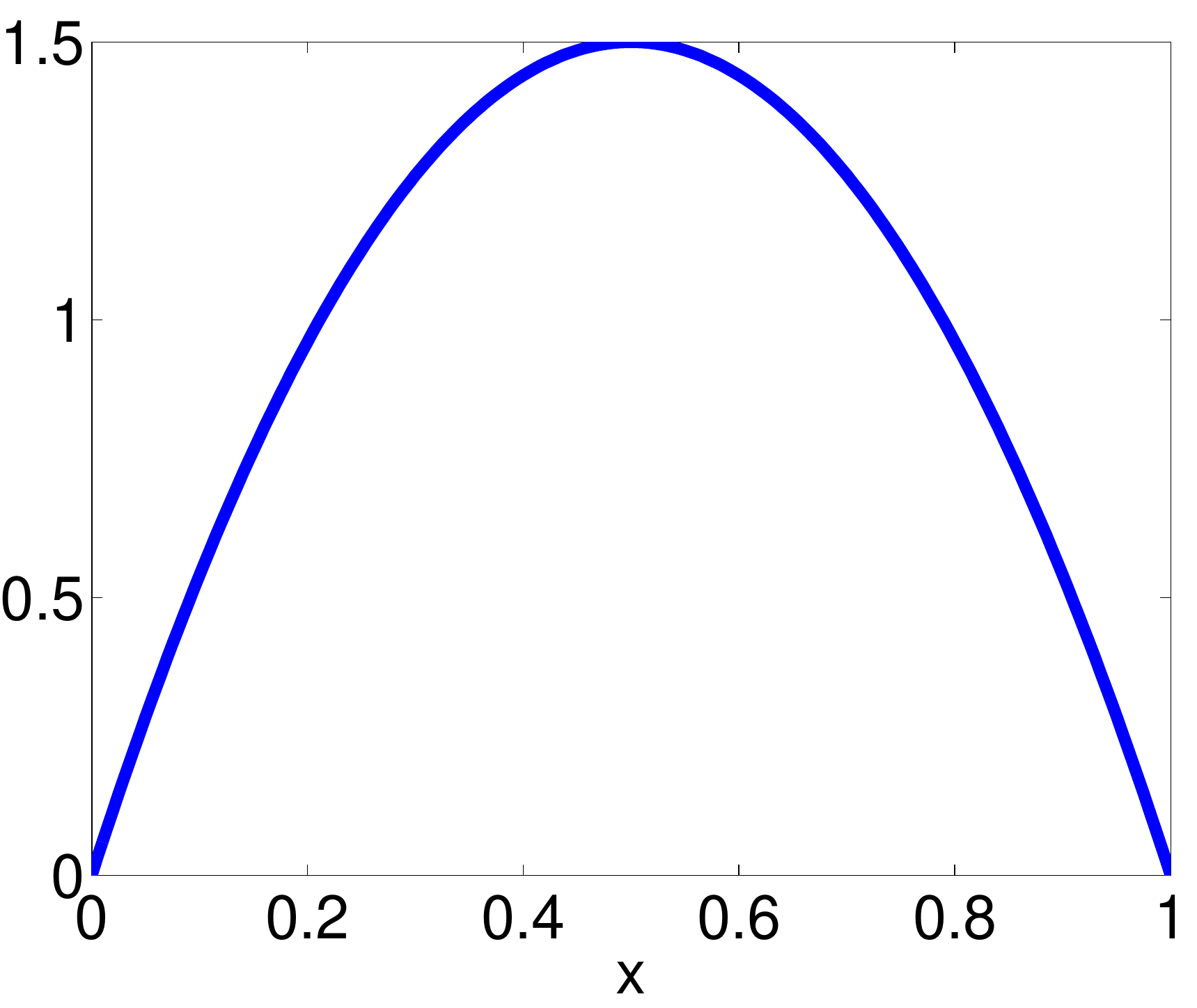}
	    \label{fig:beta_2_2}
}
\subfigure[$\B(2,1)$]{
  \includegraphics[width=0.22\textwidth]{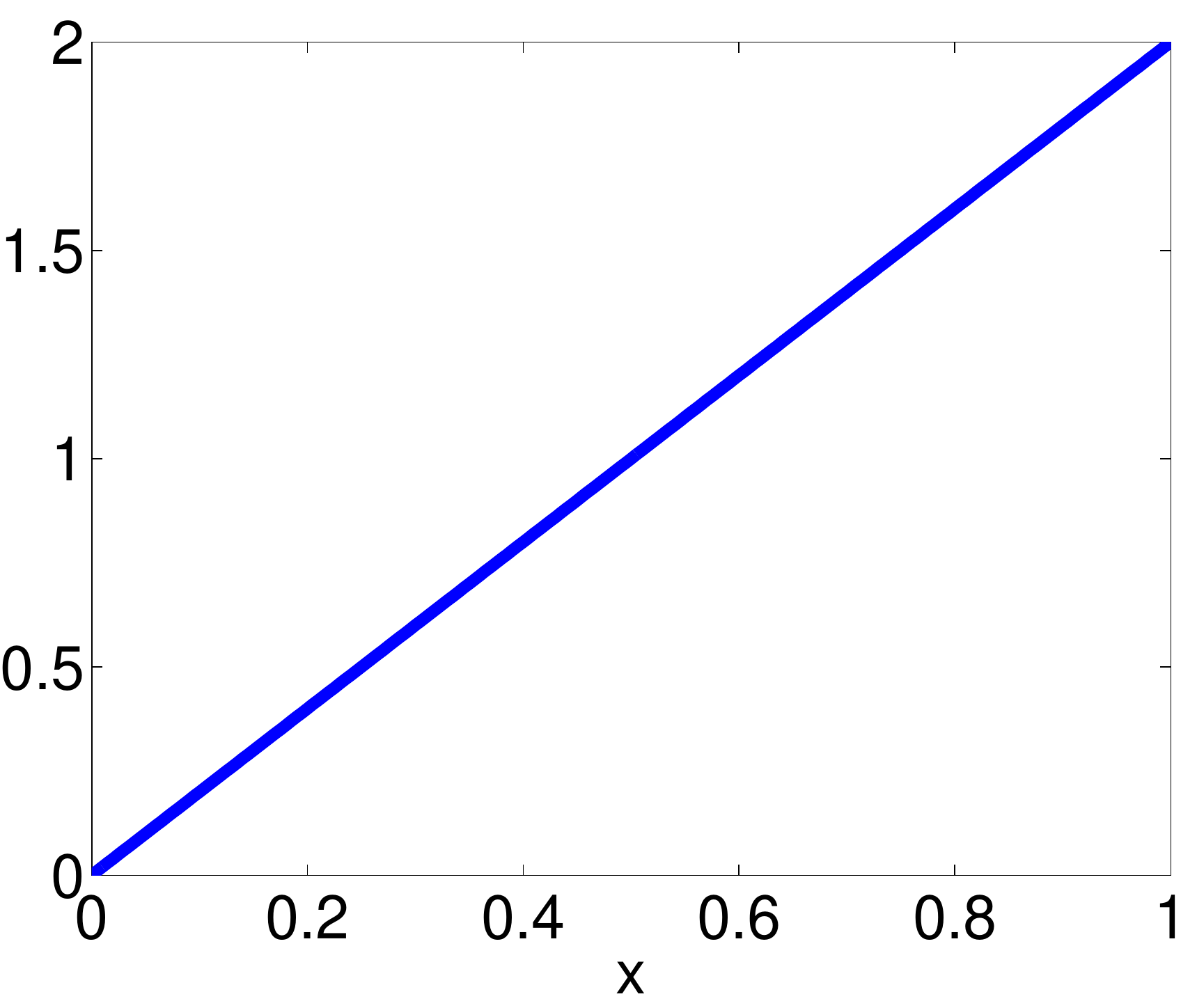}
	    \label{fig:beta_2_1}
}
\subfigure[$\B(4,1)$]{
  \includegraphics[width=0.22\textwidth]{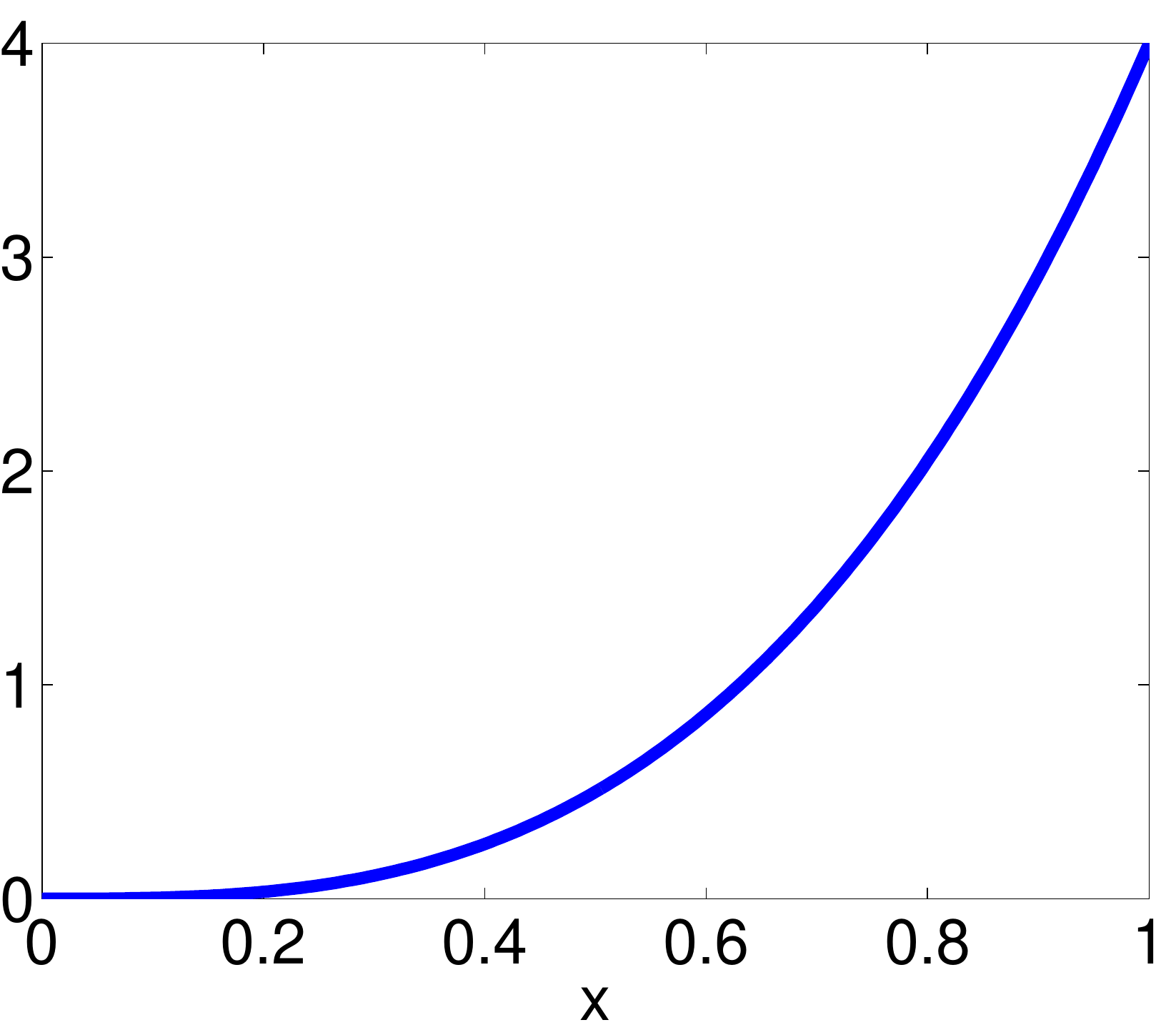}
	    \label{fig:beta_4_1}
}
\caption{Density plot for different Beta distributions for generating each $\theta_i$. Here, (a) represents there are more easier instances; (b) more ambiguous instances; (c)  \& (d) imbalanced class distributions with more positive instances.}
\label{fig:beta_task_pdf}
\end{figure}
\begin{figure}[!h]
\centering \hspace{-2mm}
\subfigure[$\theta_i \sim \B(0.5,0.5)$]{
  \includegraphics[width=0.44\textwidth]{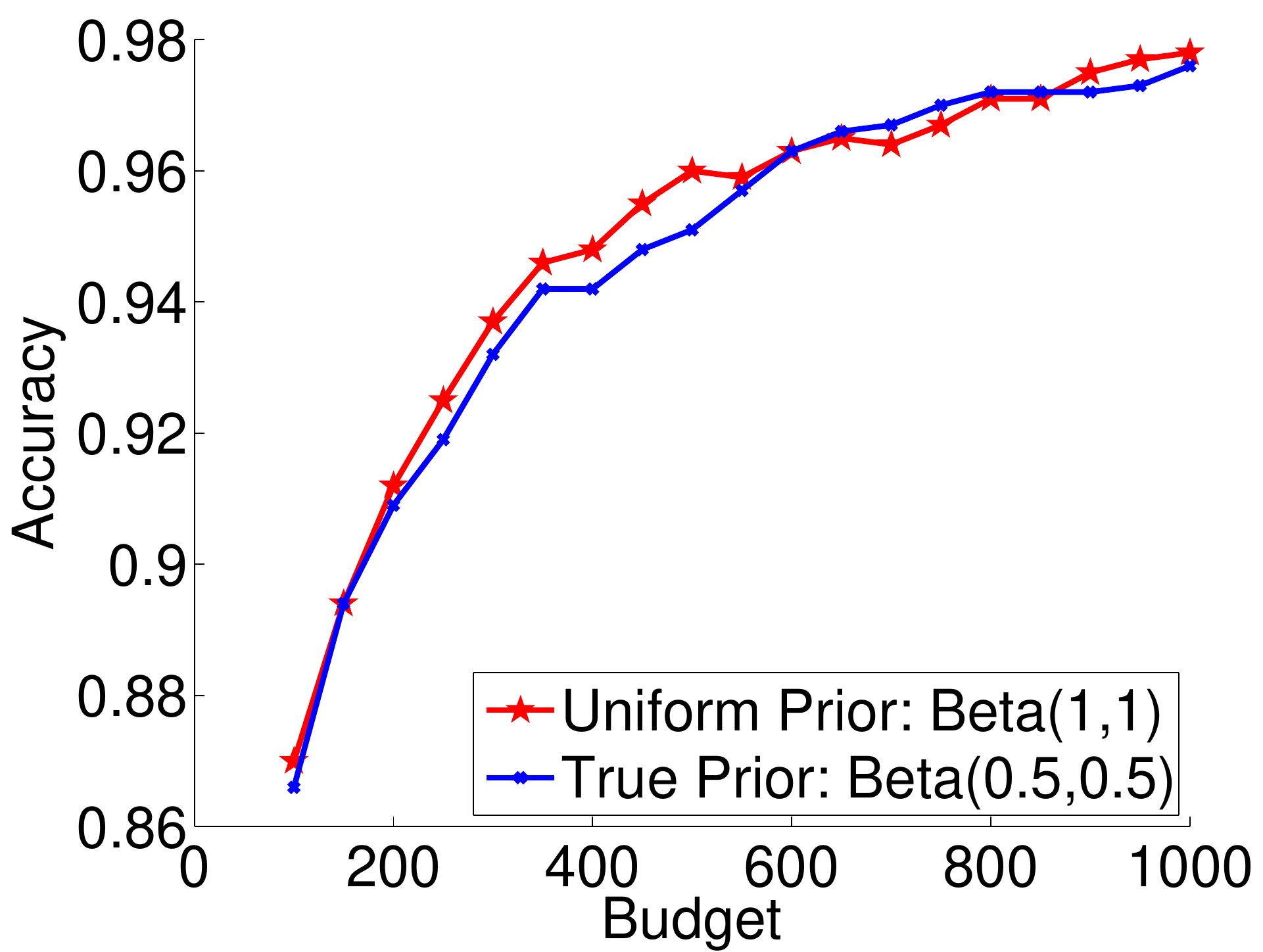}
	    \label{fig:prior_task_05_05}
}\hspace{-2mm}\subfigure[$\theta_i \sim \B(2,2)$]{
  \includegraphics[width=0.44\textwidth]{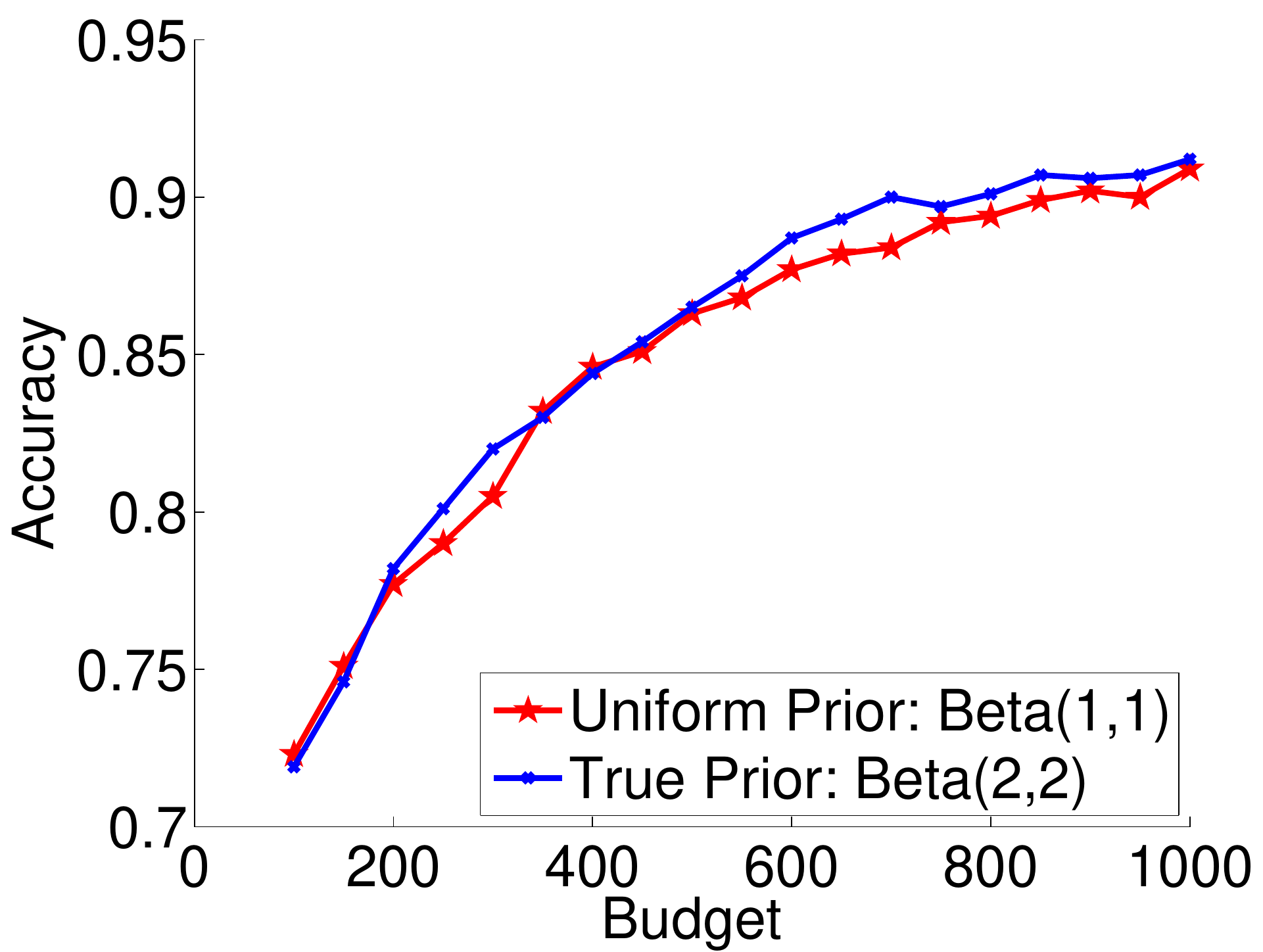}
	    \label{fig:prior_task_2_2}
}\hspace{-2mm}
\subfigure[$\theta_i \sim \B(2,1)$]{
  \includegraphics[width=0.44\textwidth]{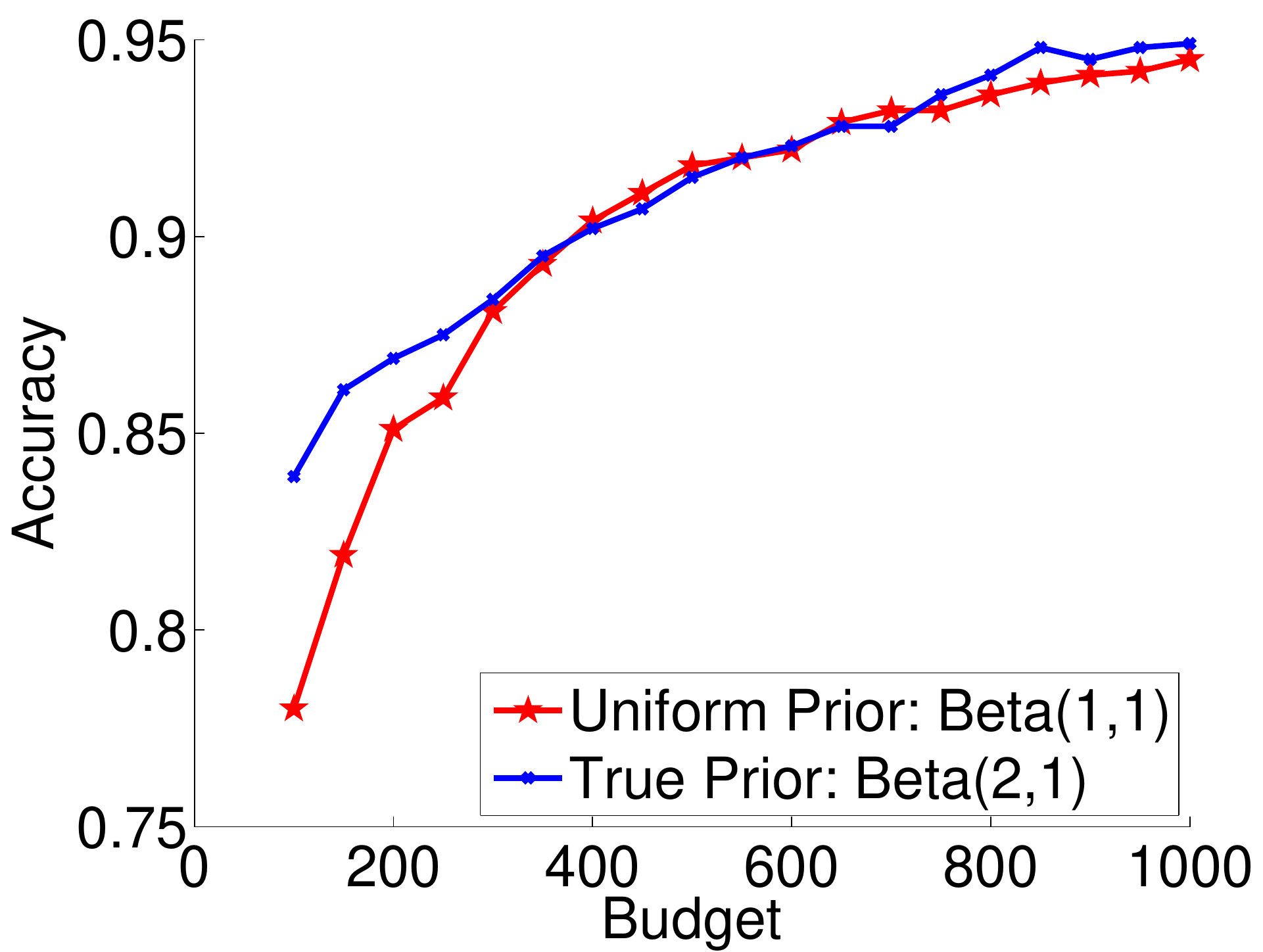}
	    \label{fig:prior_task_2_1}
}\hspace{-2mm}
\subfigure[$\theta_i \sim \B(4,1)$]{
  \includegraphics[width=0.44\textwidth]{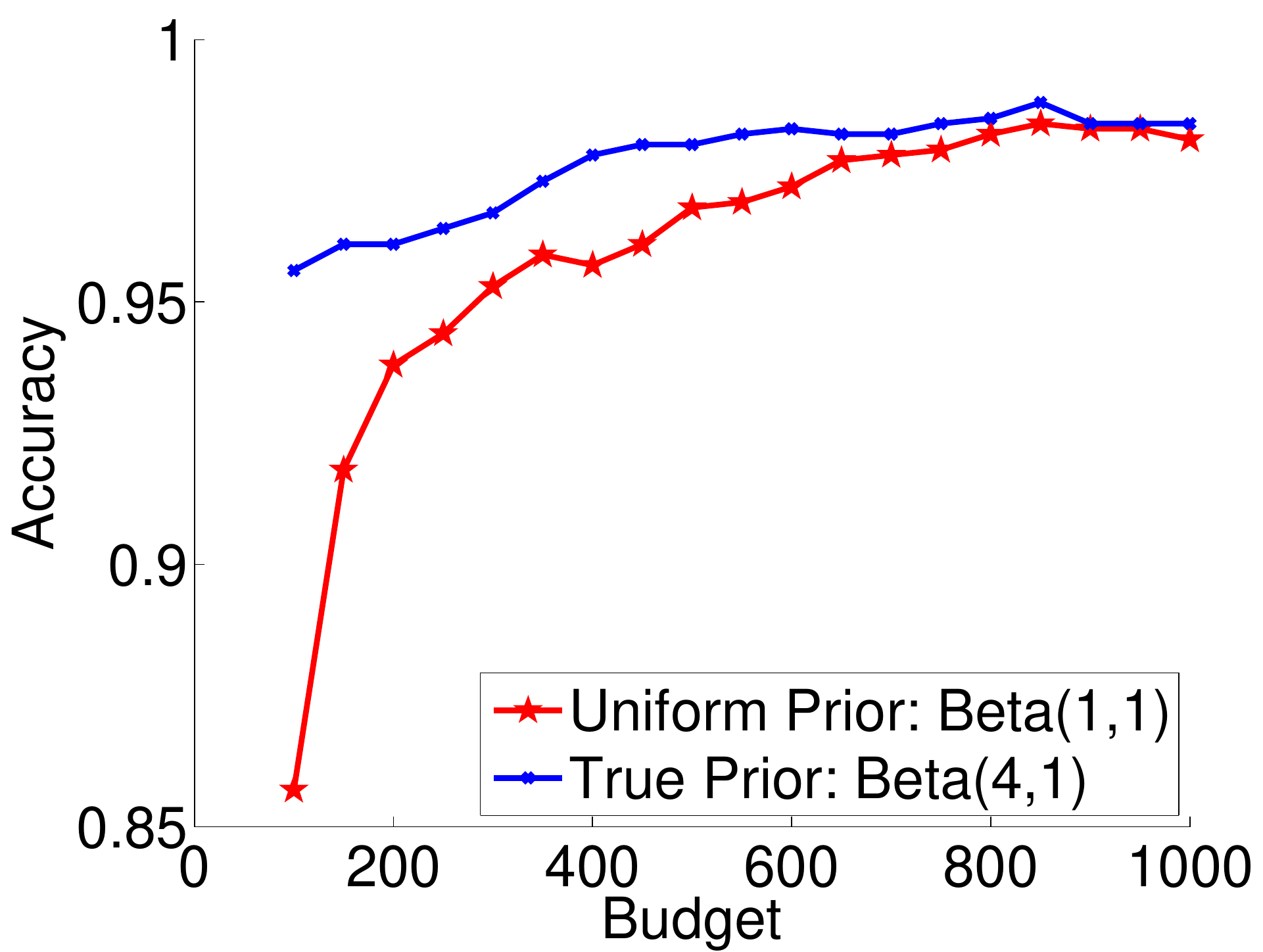}
	    \label{fig:prior_task_4_1_1}
}
\caption{Comparison between Opt-KG using the uniform distribution and true generating distribution as the prior.}
\label{fig:prior_task}
\end{figure}

\subsubsection{Prior on workers}

We investigate how sensitive the prior for the workers' reliability $\rho_j$ is. In particular, we simulate $K=50$ instances with each
$\theta_i \sim \B(1,1)$ and $M=100$ workers with $\rho_j \sim \B(3,1)$,  $\rho_j \sim \B(8,1)$ or  $\rho_j \sim \B(5,2)$. We ensure that there are more reliable workers than spammers or poorly informed workers, which is in line with the actual situation. We use the prior $\B(4,1)$, which indicates that we have the prior belief that most workers preform reasonably well and the averaged accuracy is $4/5=80\%$. In Figure \ref{fig:beta_worker_pdf}, we show different density functions for generating $\rho_j$ and the prior that we use (in Figure \ref{fig:beta_worker_pdf} (d)). For each generating distribution of $\theta_i$, we compare the Opt-KG policy using the prior ($\B(4,1)$) (in red line) to the Opt-KG with the true generating distribution as the prior (in blue line). The comparison in accuracy with different levels of budget ($T=2K, \ldots, 20K$) is shown in Figure \ref{fig:prior_worker}. From Figure \ref{fig:prior_worker}, we observe that the performance of Opt-KG using two different priors are quite similar in all different settings. Hence, we will use $\B(4,1)$ as the prior when the true prior of workers is unavailable.

\begin{figure}[!t]
\centering
\subfigure[$\B(3,1)$]{
  \includegraphics[width=0.22\textwidth]{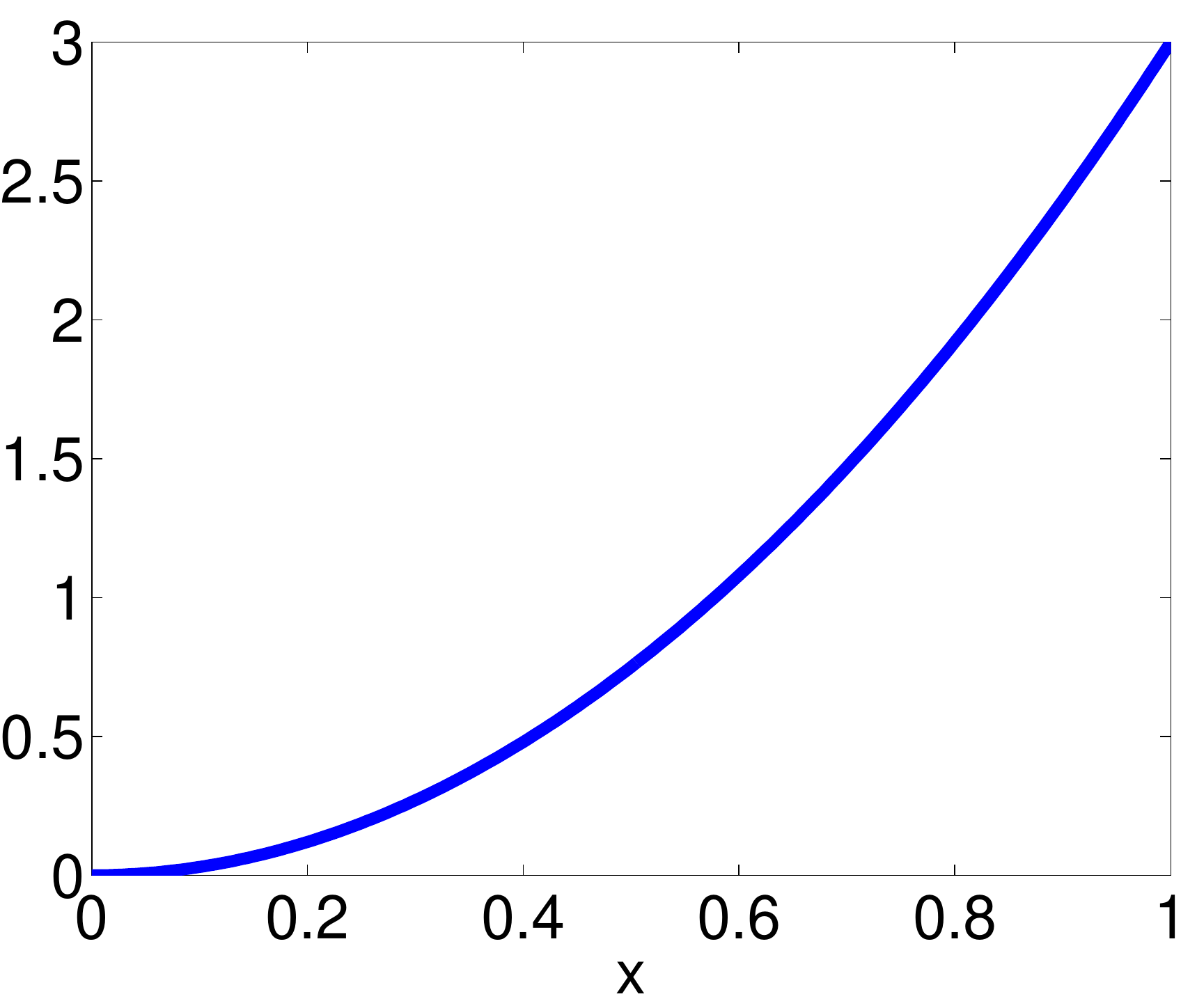}
	    \label{fig:beta_3_1}
}\subfigure[$\B(8,1)$]{
  \includegraphics[width=0.22\textwidth]{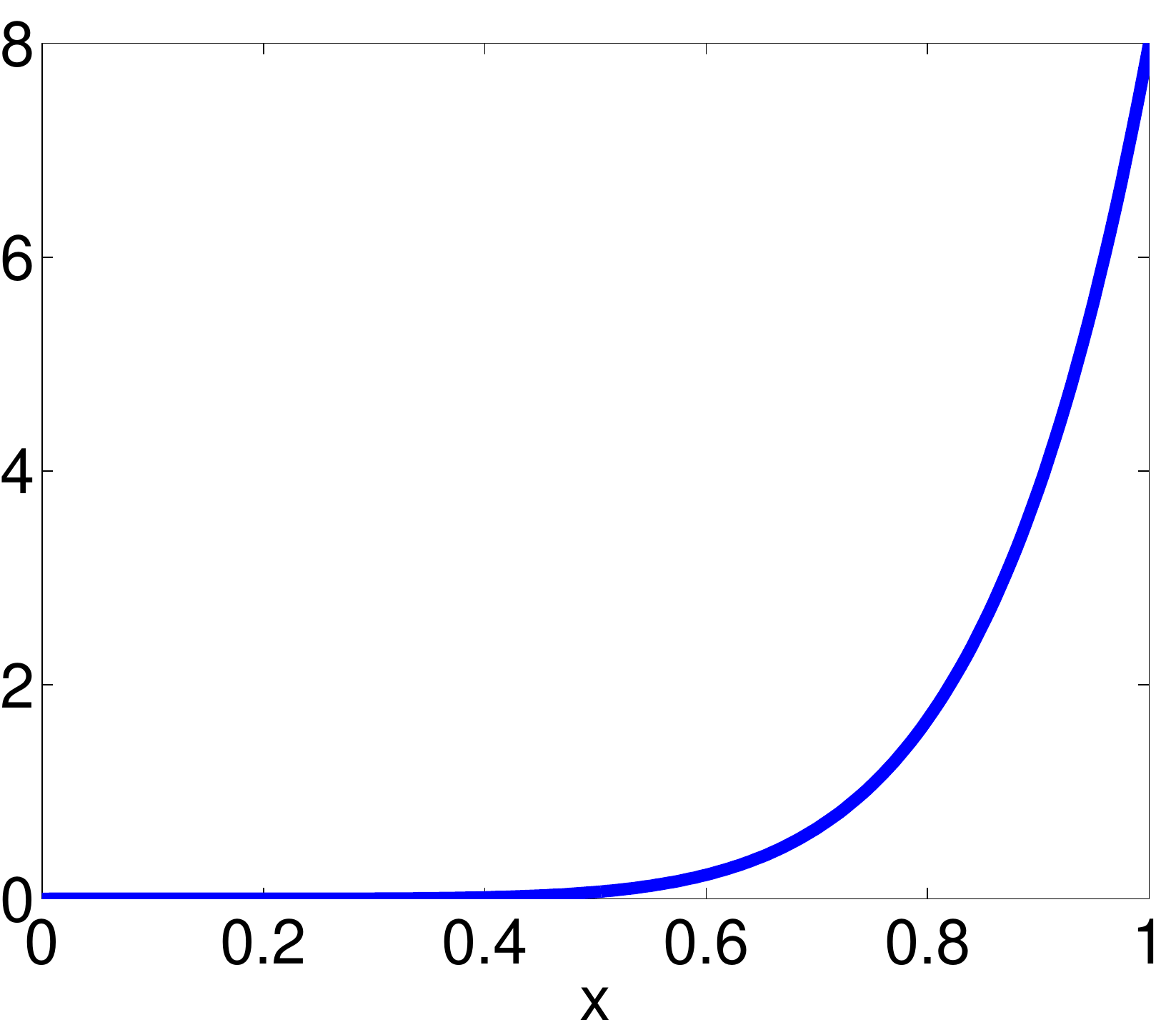}
	    \label{fig:beta_8_1}
}
\subfigure[$\B(5,2)$]{
  \includegraphics[width=0.22\textwidth]{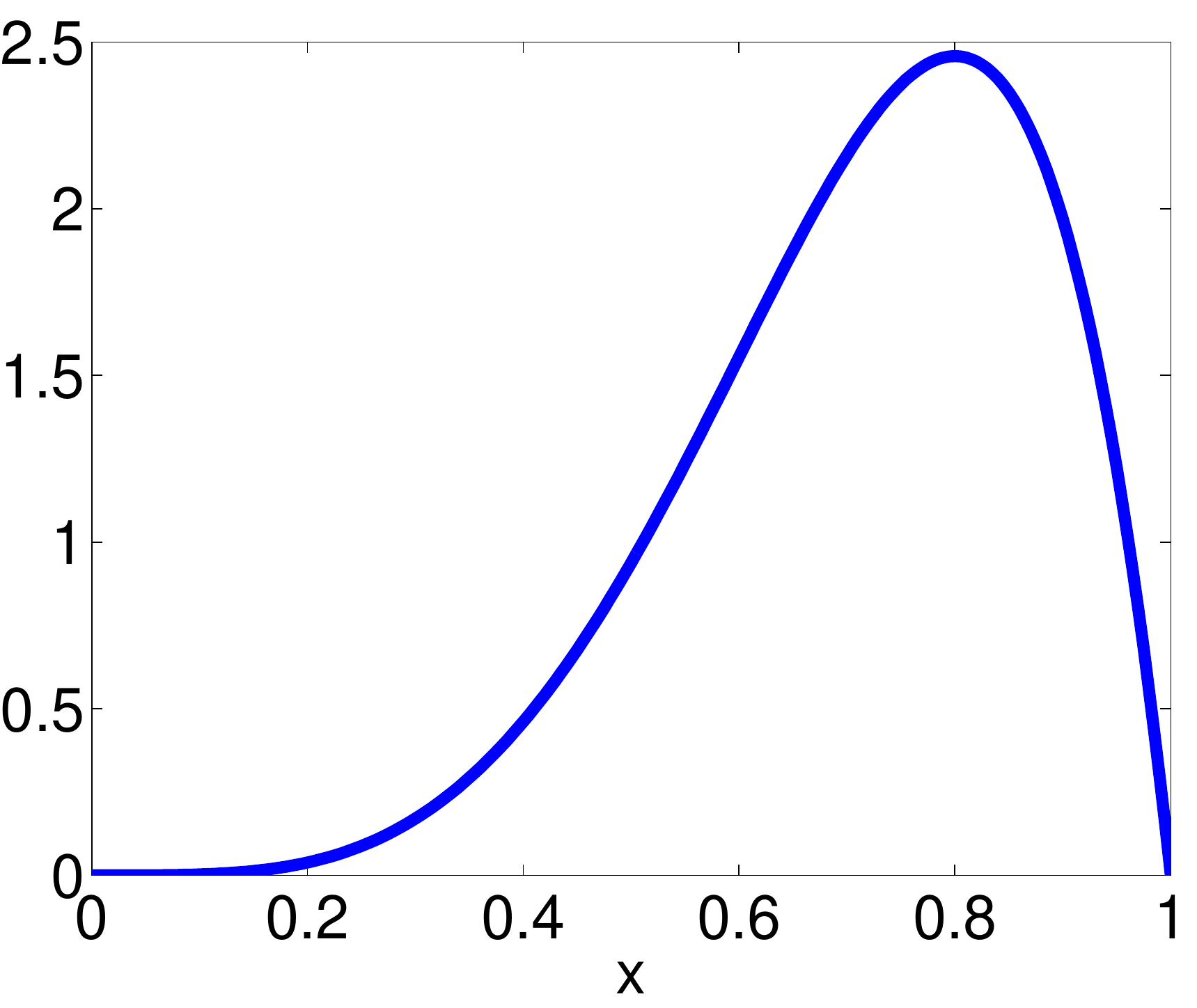}
	    \label{fig:beta_5_2}
}
\subfigure[$\B(4,1)$]{
  \includegraphics[width=0.22\textwidth]{./beta_4_1}
	    \label{fig:beta_4_1_1}
}
\caption{Density plot for different Beta distributions for generating $\rho_j$. The plot in (d) is the one that we use as the prior.}
\label{fig:beta_worker_pdf}
\end{figure}
\begin{figure}[!t]
\centering
\subfigure[$\theta_i \sim \B(3,1)$]{
  \includegraphics[width=0.31\textwidth]{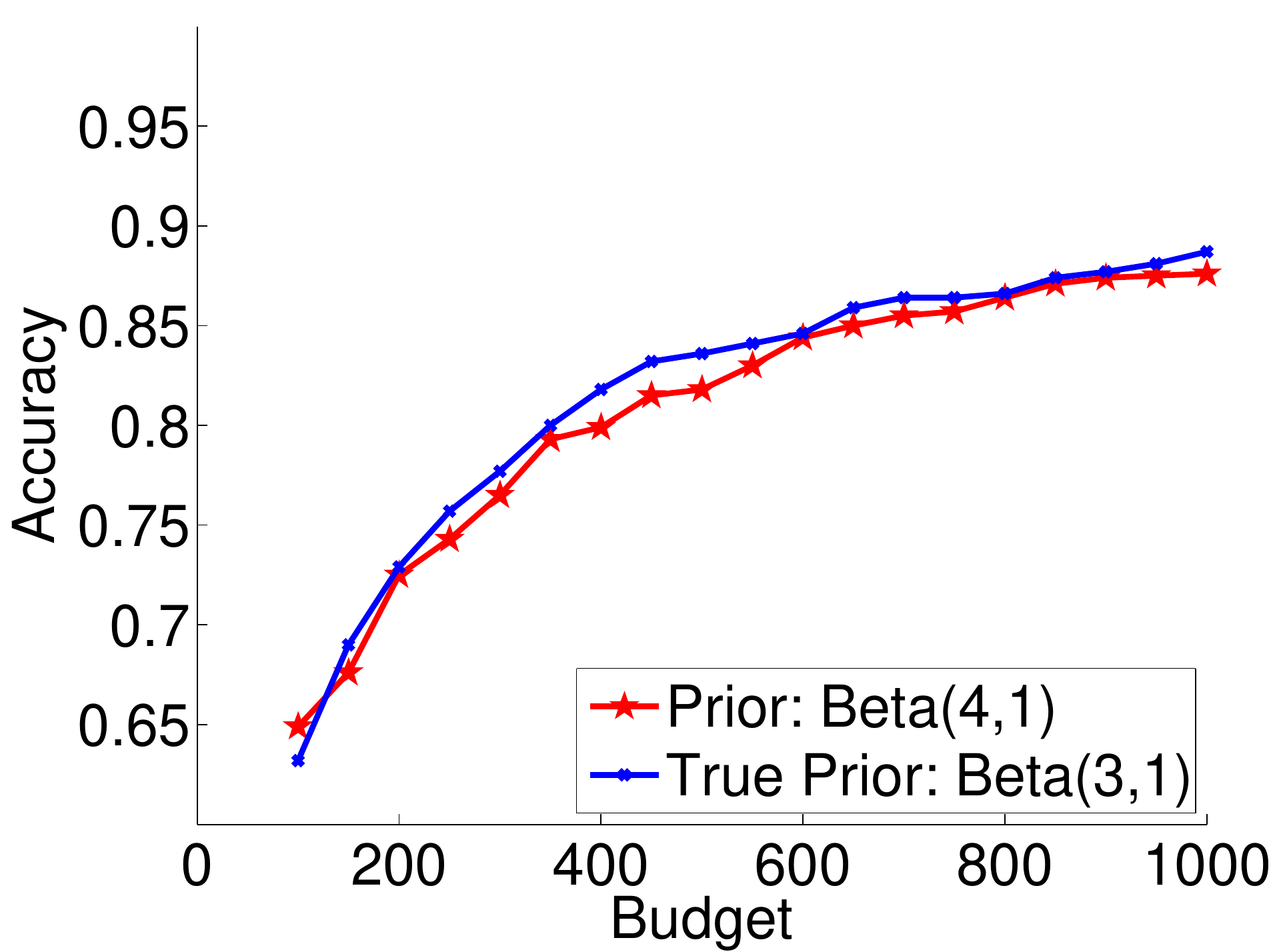}
	    \label{fig:prior_worker_3_1}
}\subfigure[$\theta_i \sim \B(8,1)$]{
  \includegraphics[width=0.31\textwidth]{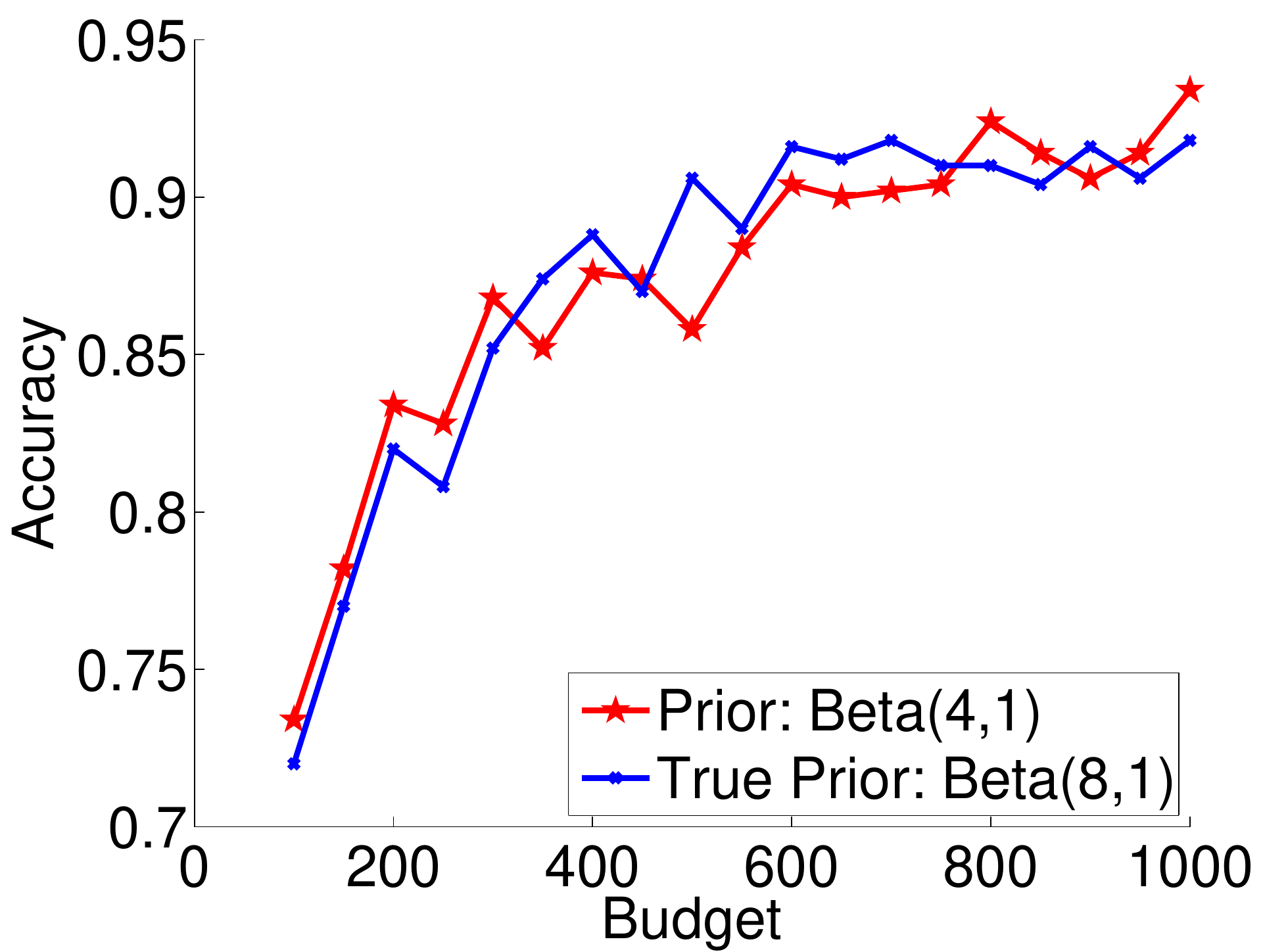}
	    \label{fig:prior_worker_8_1}
}
\subfigure[$\theta_i \sim \B(5,2)$]{
  \includegraphics[width=0.31\textwidth]{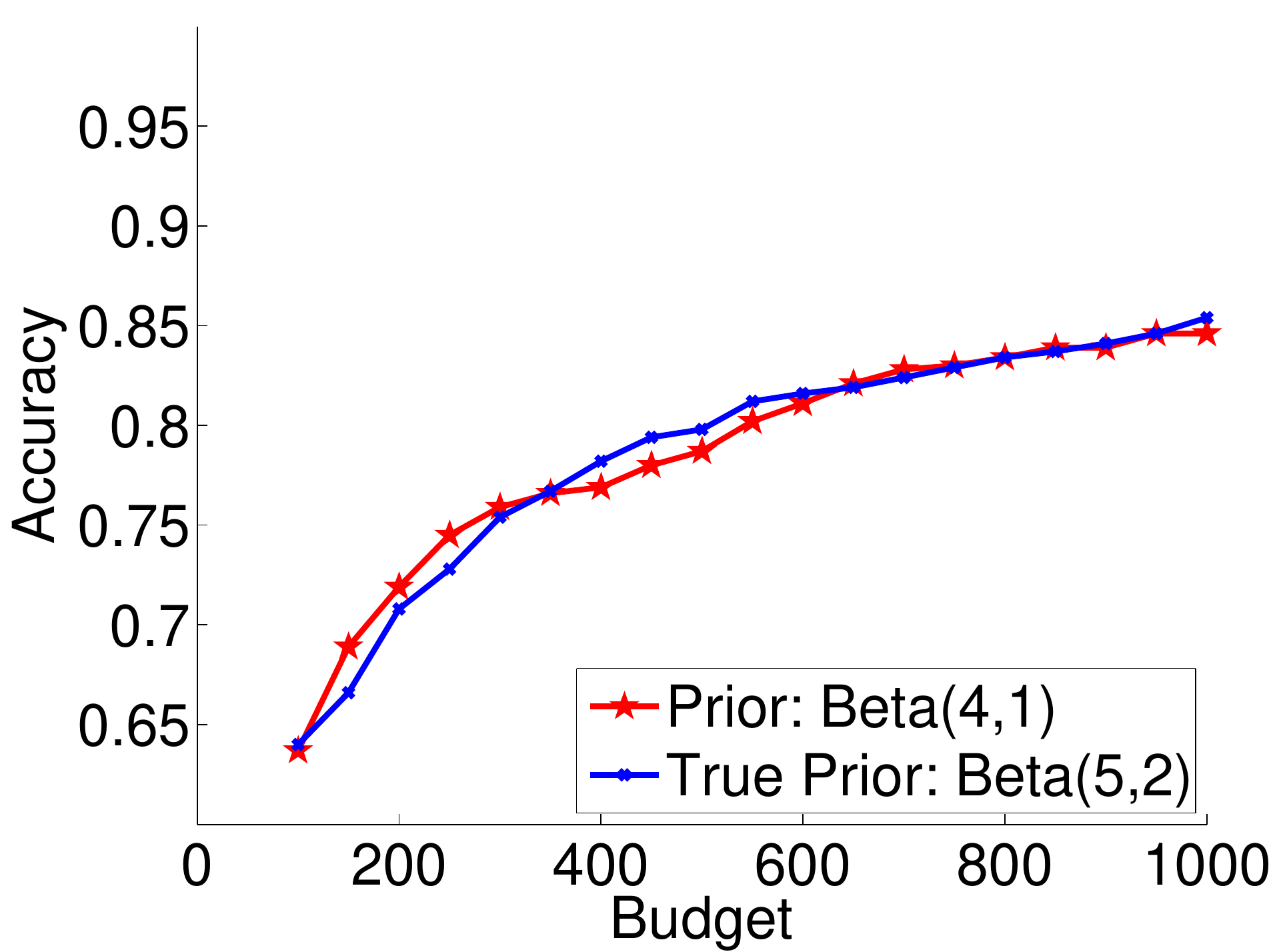}
	    \label{fig:prior_worker_5_2}
}
\caption{Comparison between Opt-KG using $\B(4,1)$ and true generating distribution prior as the prior.}
\label{fig:prior_worker}
\end{figure}

\subsubsection{Performance comparison under the homogeneous noiseless worker setting}

\begin{figure}[!t]
\centering
\subfigure[$\theta_i \sim \B(1,1)$ (True Prior)]{
  \includegraphics[width=0.3\textwidth]{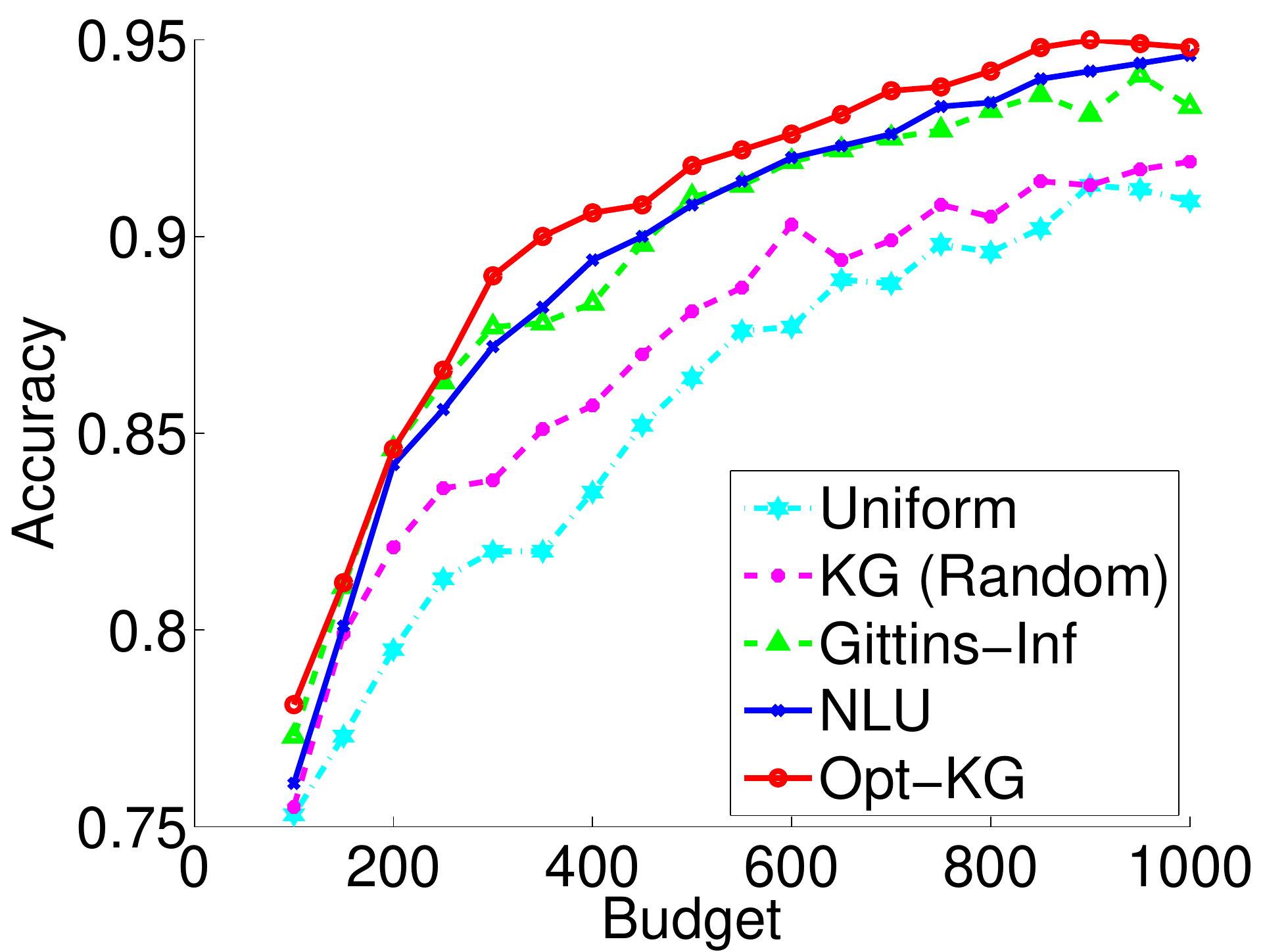}
}\subfigure[$\theta_i \sim \B(0.5,0.5)$ (True Prior)]{
  \includegraphics[width=0.3\textwidth]{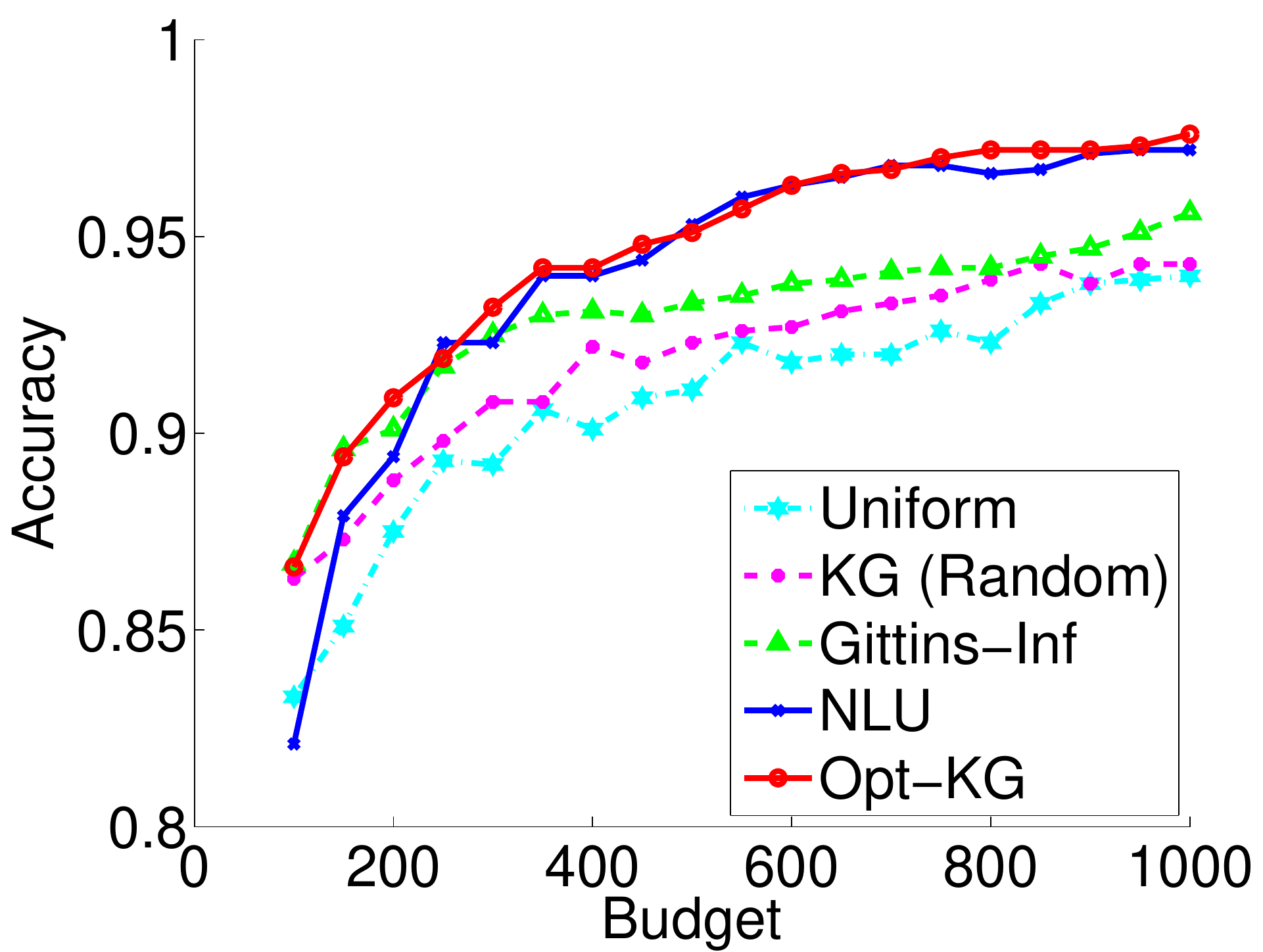}
}\subfigure[$\theta_i \sim \B(0.5,0.5)$ (Uni Prior)]{
  \includegraphics[width=0.3\textwidth]{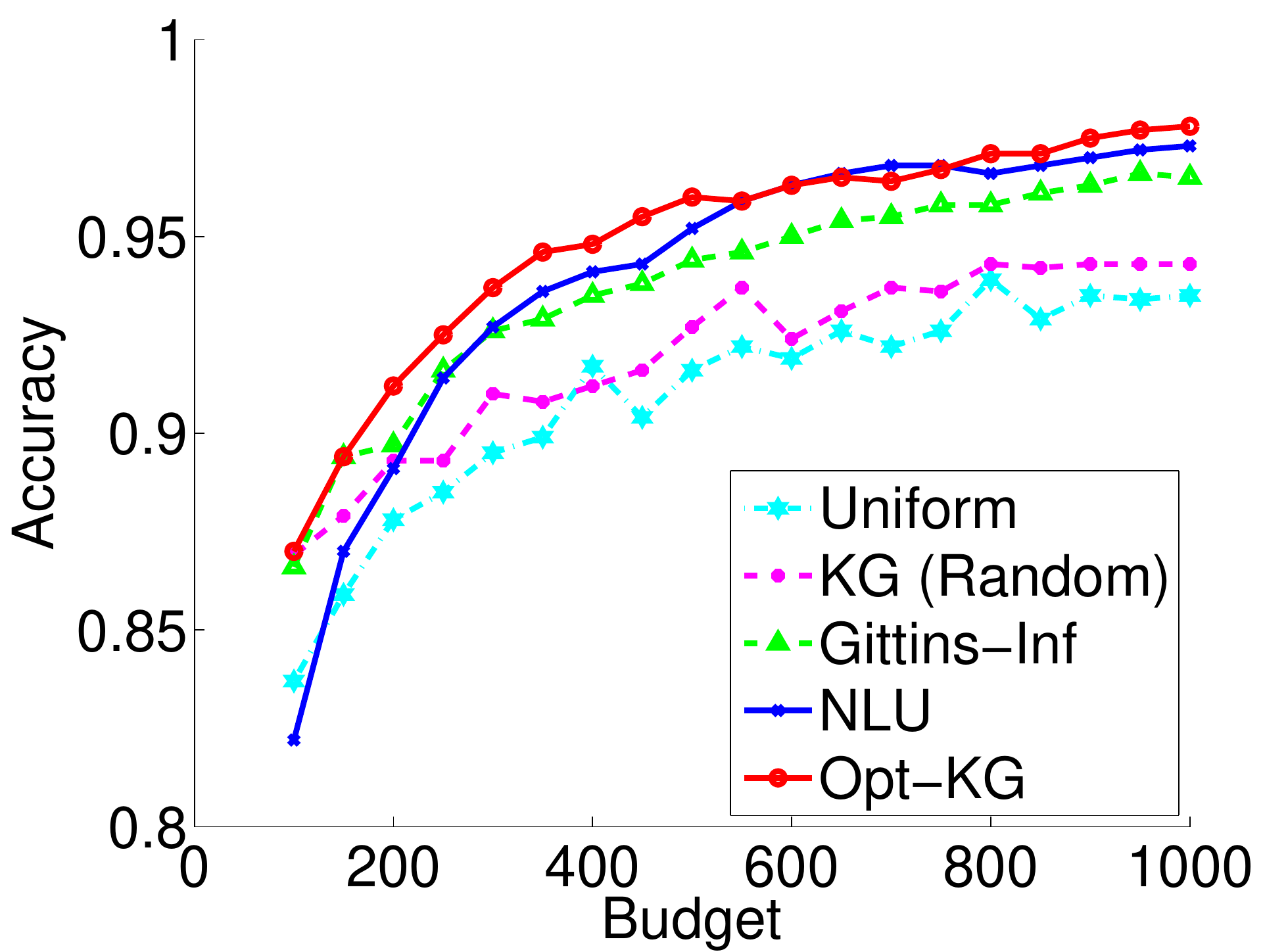}
}\\
\subfigure[$\theta_i \sim \B(2,2)$ (True Prior)]{
  \includegraphics[width=0.32\textwidth]{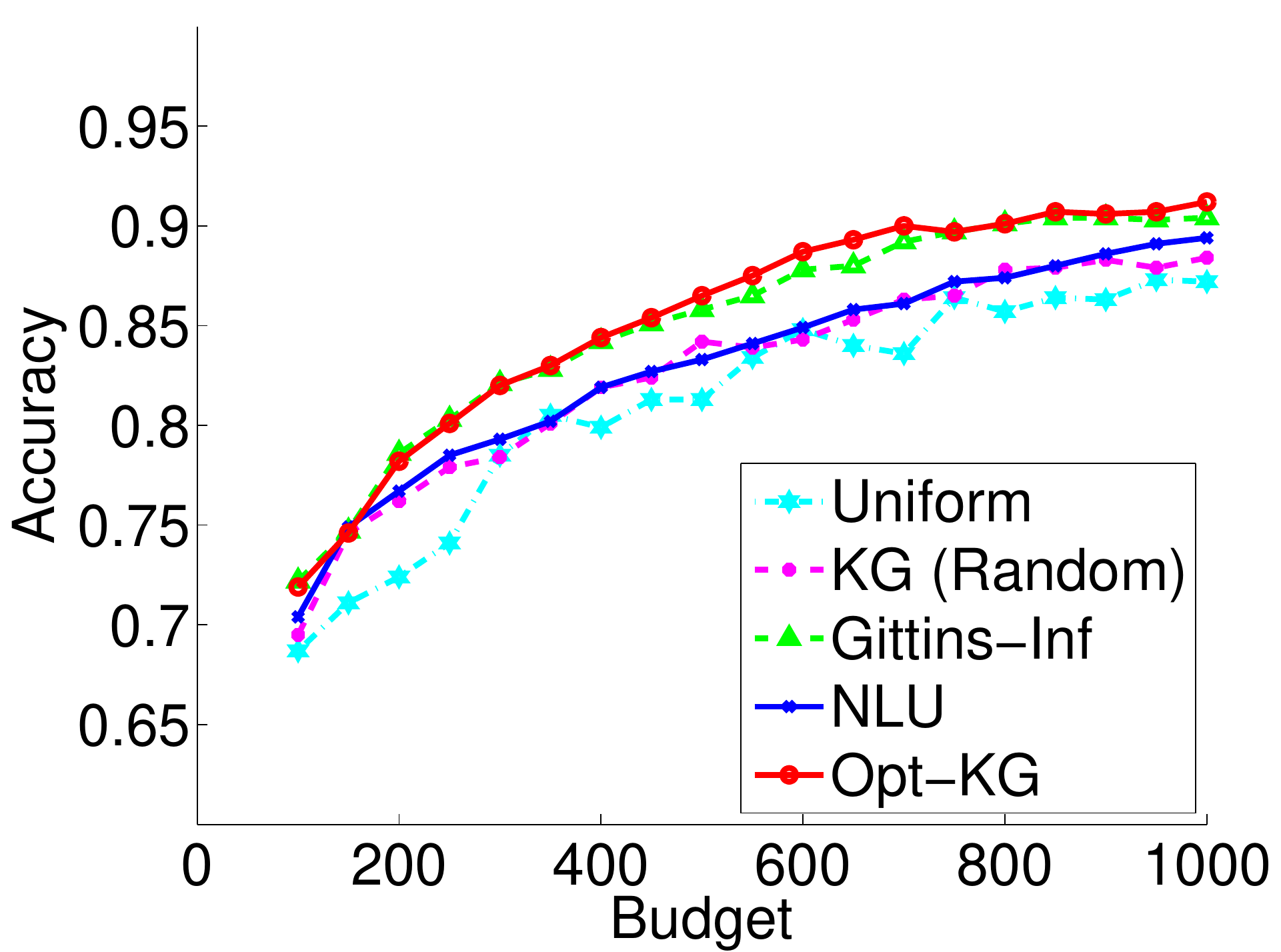}
}\subfigure[$\theta_i \sim \B(2,2)$ (Uni Prior)]{
  \includegraphics[width=0.32\textwidth]{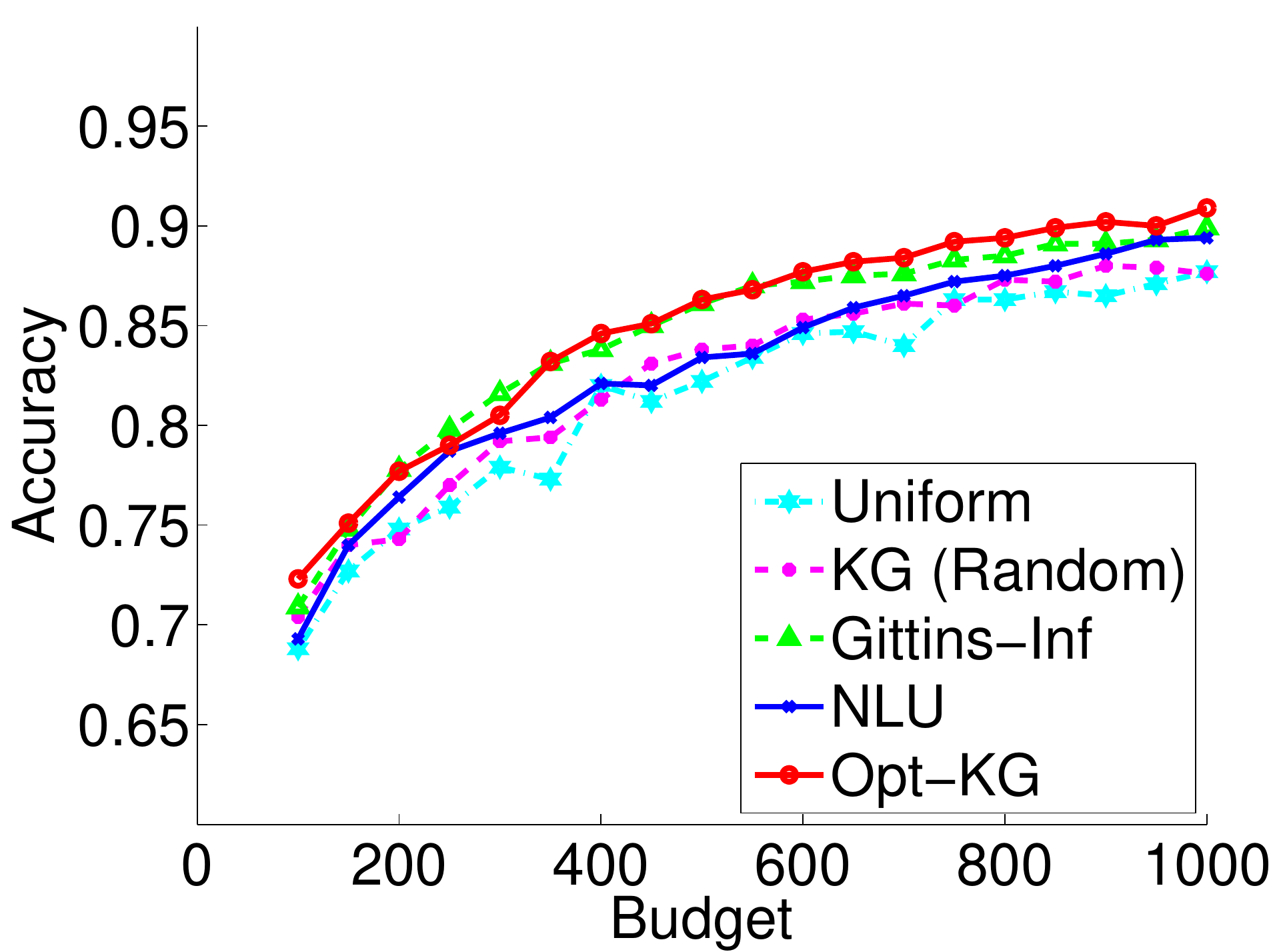}
} \\
\subfigure[$\theta_i \sim \B(2,1)$ (True Prior)]{
  \includegraphics[width=0.32\textwidth]{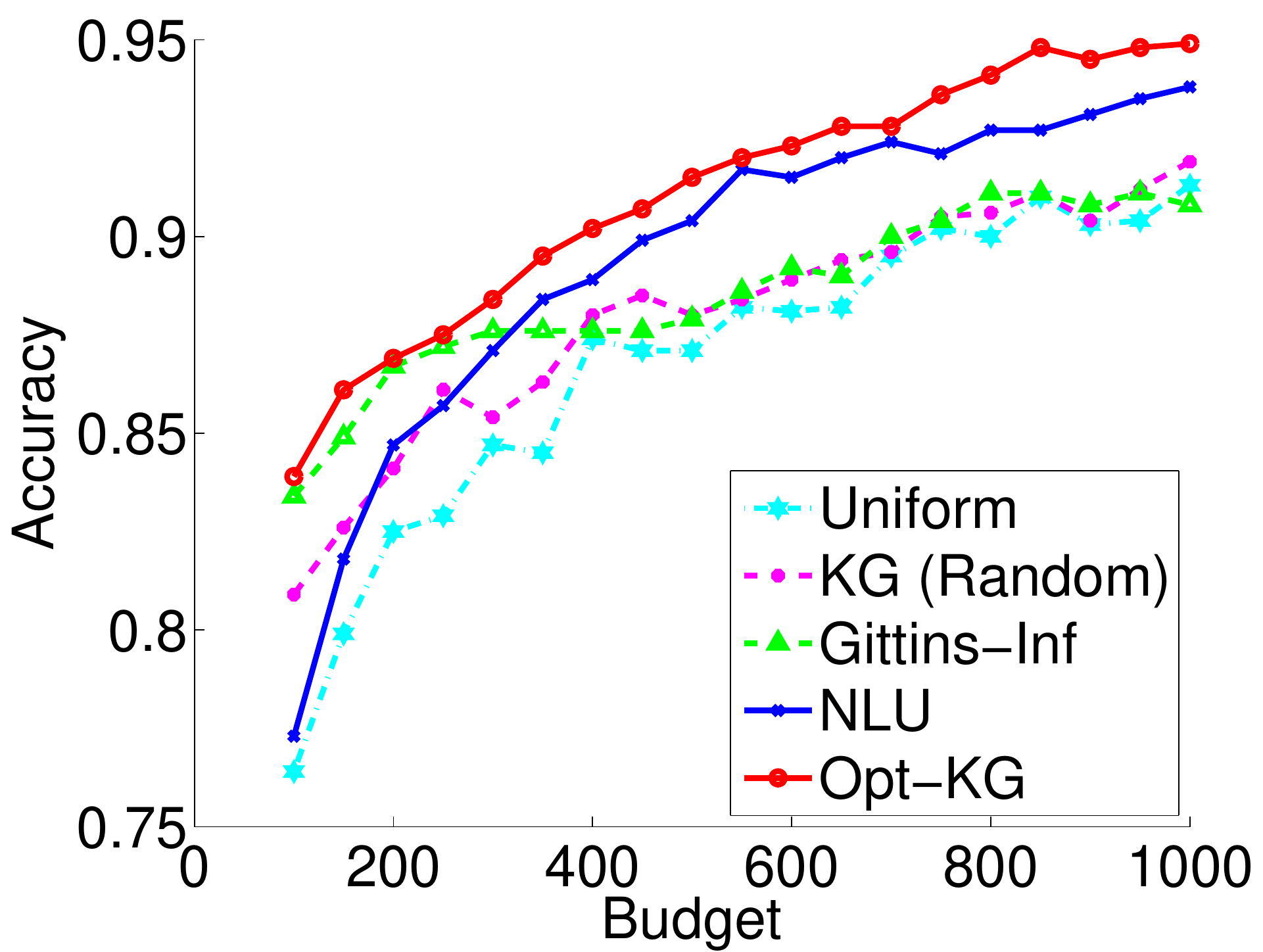}
}
\subfigure[$\theta_i \sim \B(2,1)$ (Uni Prior)]{
  \includegraphics[width=0.32\textwidth]{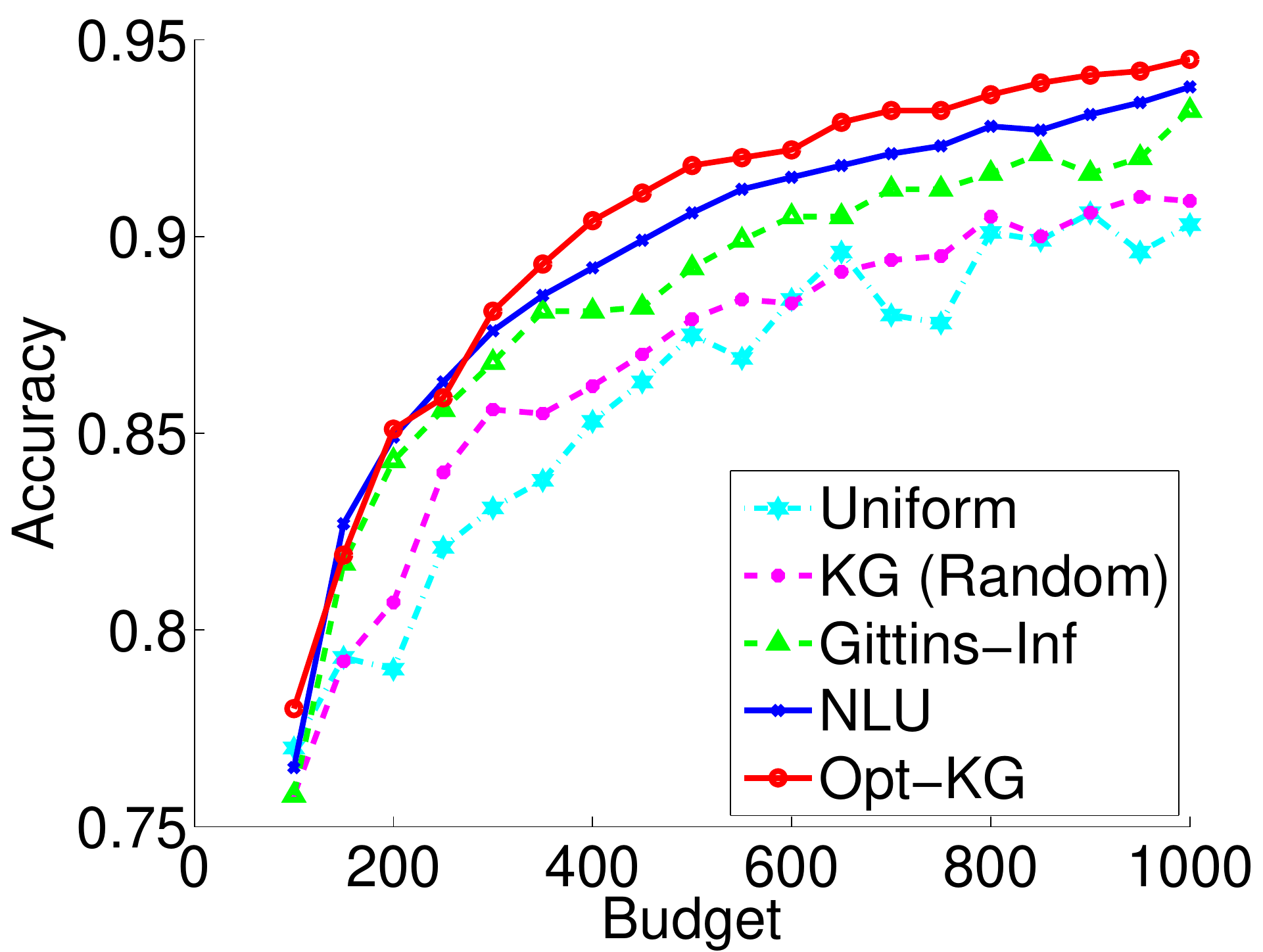}
} \\
\subfigure[$\theta_i \sim \B(4,1)$ (True Prior)]{
  \includegraphics[width=0.32\textwidth]{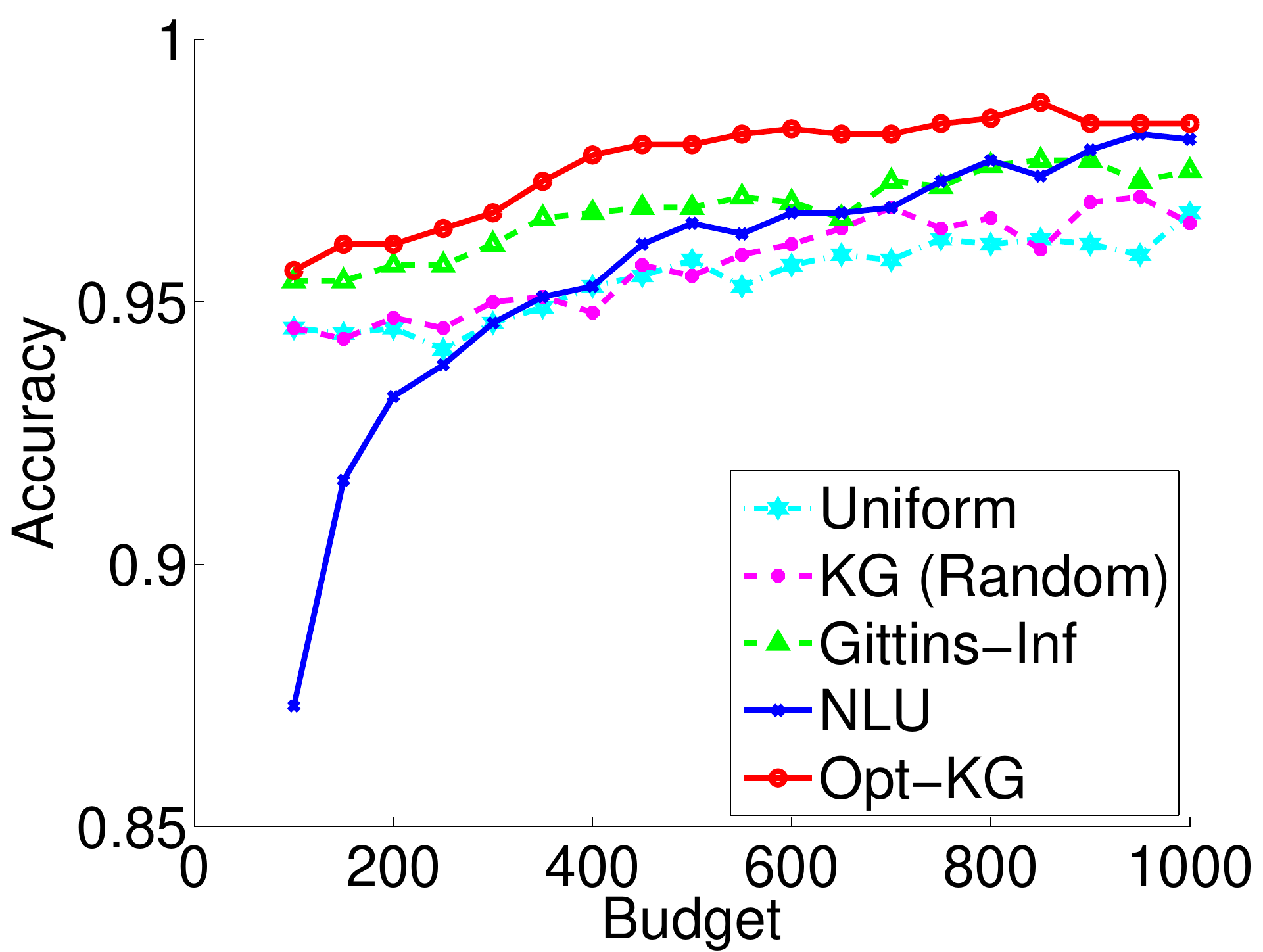}
}\subfigure[$\theta_i \sim \B(4,1)$ (Uni Prior)]{
  \includegraphics[width=0.32\textwidth]{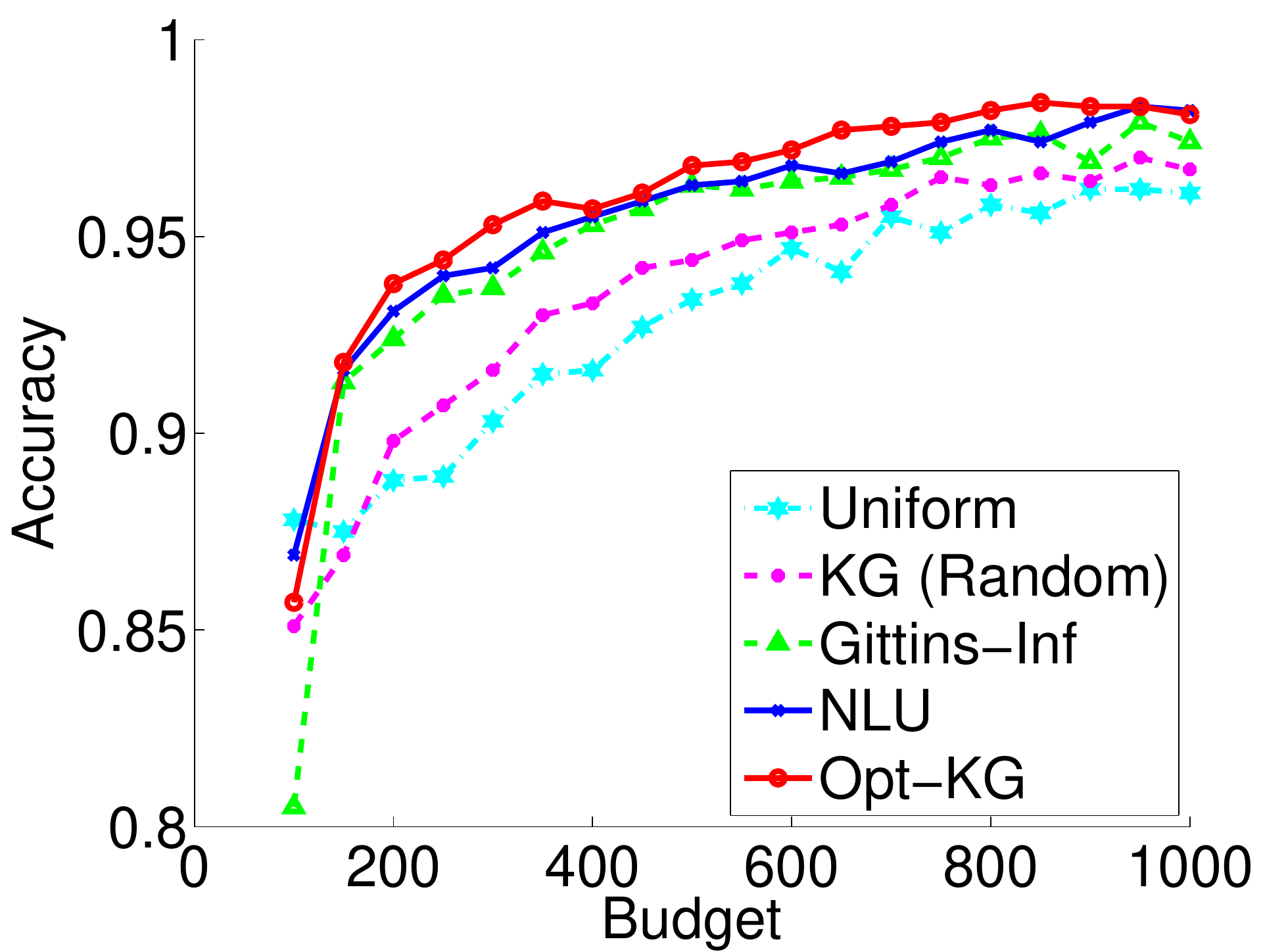}
}
\caption{Performance comparison under the homogeneous noiseless worker setting.}
\label{fig:comp_task}
\end{figure}

We compare the performance of Opt-KG under the homogeneous noiseless worker setting to several other competitors, including
\begin{enumerate}
  \item \textsf{Uniform:} Uniform sampling.
  \item \textsf{KG(Random):} Randomized knowledge gradient \cite{Frazier:08}.
  \item \textsf{Gittins-Inf:}  A Gittins-indexed based policy proposed in \cite{Xie:12} for solving an infinite-horizon Bayesian MAB problem where the reward is discounted by $\delta$. Although it solves a different problem, we apply it as a heuristic by choosing the discount factor $\delta$  such  that $T=1/(1-\delta)$.
  \item \textsf{NLU:} The ``new labeling uncertainty'' method proposed in \cite{Panos:13}.
\end{enumerate}
We note that we do not compare to the finite-horizon Gittins index rule \cite{Mora:11} since its computation is  very expensive. On some small-scale problems, we observe that the finite-horizon Gittins index rule  \cite{Mora:11} has the similar performance as \textsf{Gittins-Inf} in \cite{Xie:12}.

We simulate $K=50$ instances with each $\theta_i \sim \B(1,1)$, $\theta_i \sim \B(0.5, 0.5)$, $\theta_i \sim \B(2,2)$, $\theta_i \sim \B(2,1)$ or $\theta_i \sim \B(4,1)$ (see Figure \ref{fig:beta_task_pdf}). For each of the five settings, we vary the total budget $T=2K, 3K, \ldots, 20K$ and report the mean of accuracy for 20 independently generated sets of $\{\theta_i\}_{i=1}^K$. For the last four settings, we report the comparison among different methods when either using the uniform prior (``uni prior" for short) or the true generating distribution as the prior.  From Figure \ref{fig:comp_task}, the proposed Opt-KG outperforms all the other competitors in most settings regardless the choice of the prior. For $\theta_i \sim  \B(0.5,0.5)$, NLU matches the performance of Opt-KG; and for $\theta_i \sim \B(2,2)$, Gittins-inf matches the performance of Opt-KG.
We also observe that the performance of randomized KG only slightly improves that of uniform sampling.

\subsubsection{Performance comparison under the heterogeneous worker setting}

We compare the proposed Opt-KG under the heterogeneous worker setting to several other competitors:

\makeatletter
\newcommand\mynobreakpar{\par\nobreak\@afterheading}
\makeatother
\newenvironment{myenumerate}{\mynobreakpar\begin{enumerate}}{\end{enumerate}}

\begin{enumerate}
  \item \textsf{Uniform:} Uniform sampling.
  \item \textsf{KG(Random):} Randomized knowledge gradient \cite{Frazier:08}.
  \item \textsf{KOS:} The randomized budget allocation algorithm in \cite{Oh:12}.
\end{enumerate}

We note that several competitors for the homogeneous worker setting (e.g., Gittins-inf and NLU) cannot be directly applied to the heterogeneous worker setting since they fail to model each worker's reliability.

\begin{figure}[!t]
\centering
\subfigure[$\rho_j \sim \B(4,1)$ (True Prior)]{
  \includegraphics[width=0.31\textwidth]{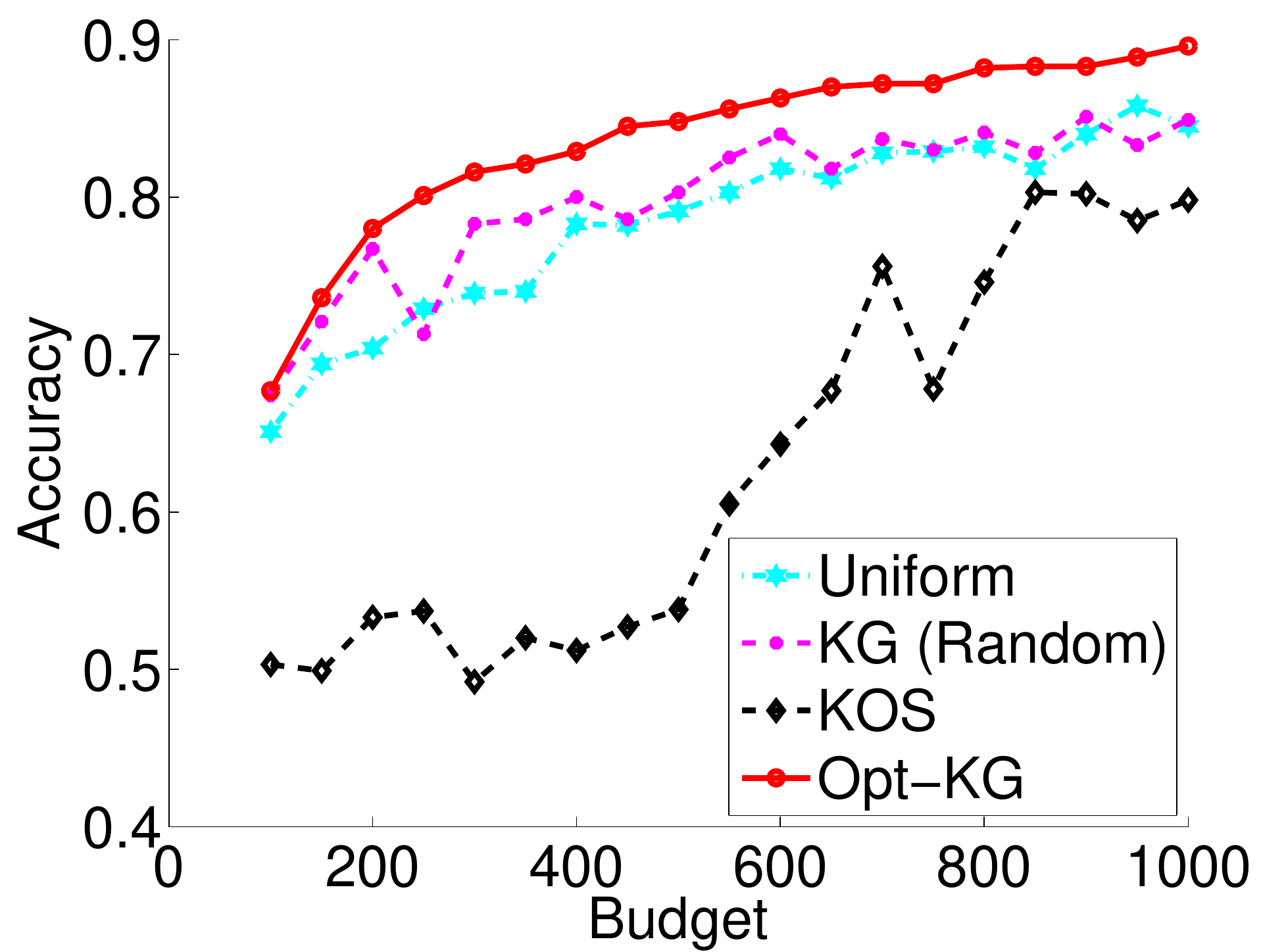}
}\subfigure[$\rho_j \sim \B(3,1)$ (True Prior)]{
  \includegraphics[width=0.31\textwidth]{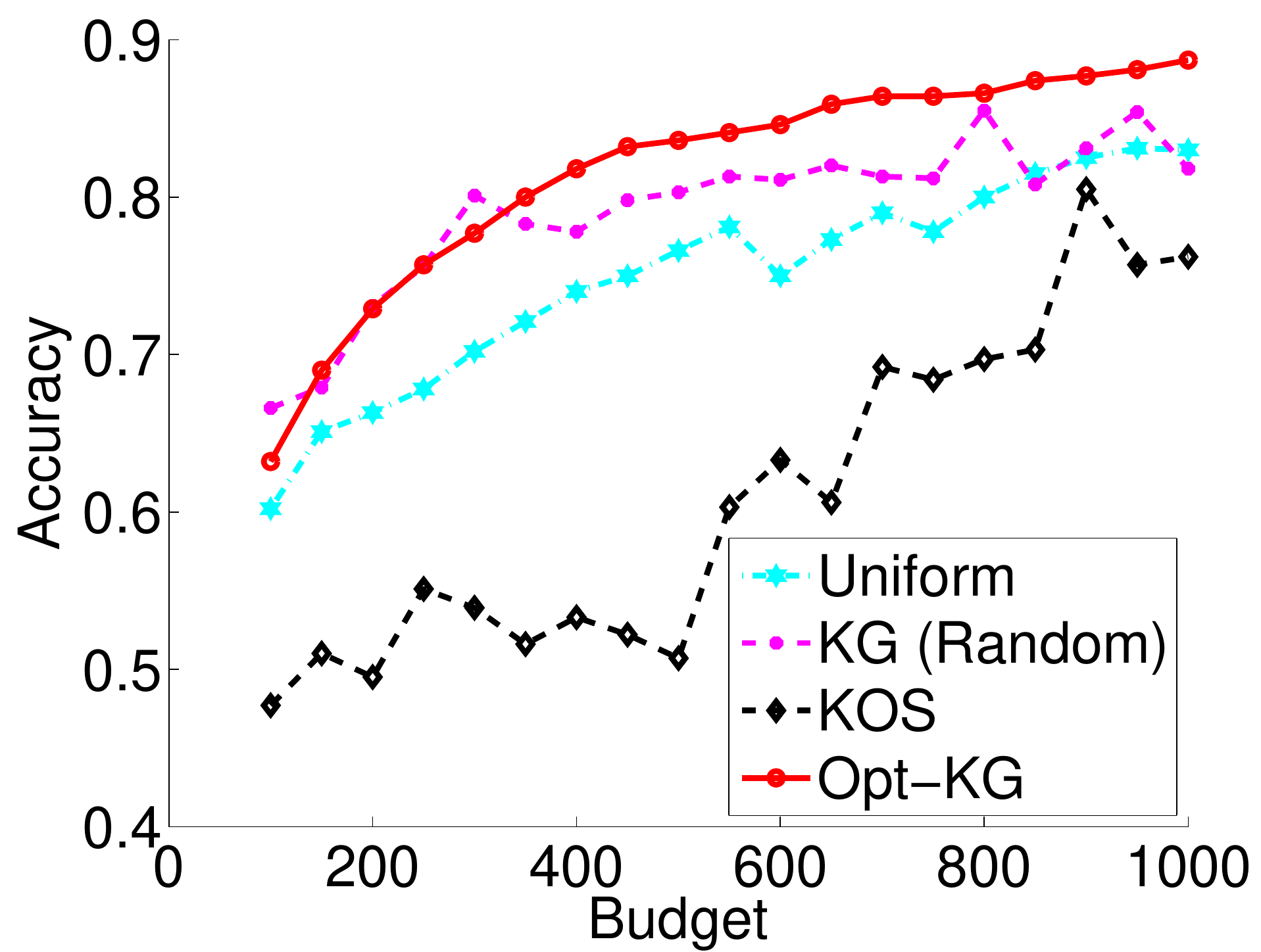}
}\subfigure[$\rho_j \sim \B(3,1)$ ($\B(4,1)$ Prior)]{
  \includegraphics[width=0.31\textwidth]{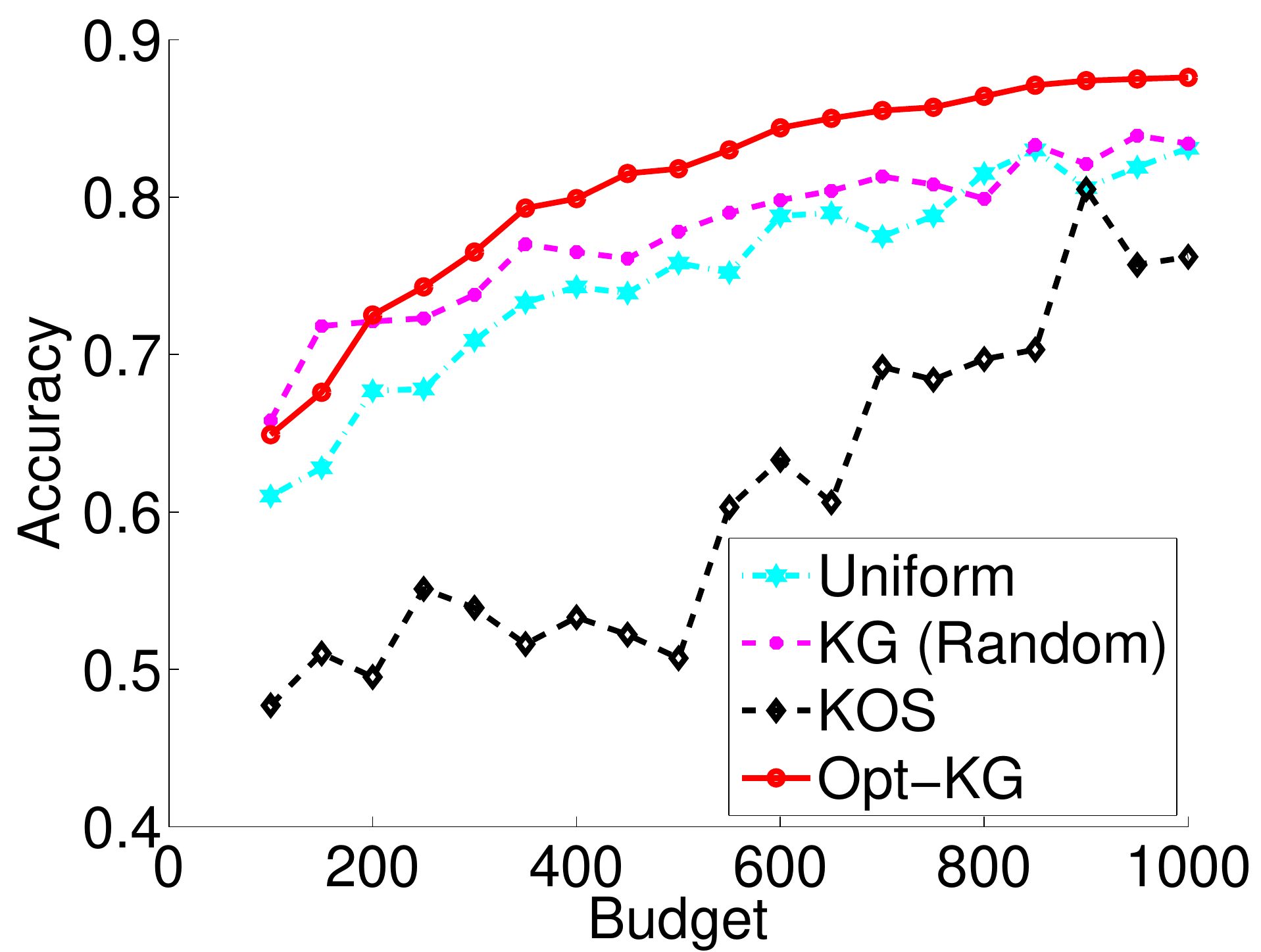}
}\\
\subfigure[$\rho_j \sim \B(8,1)$ (True Prior)]{
  \includegraphics[width=0.35\textwidth]{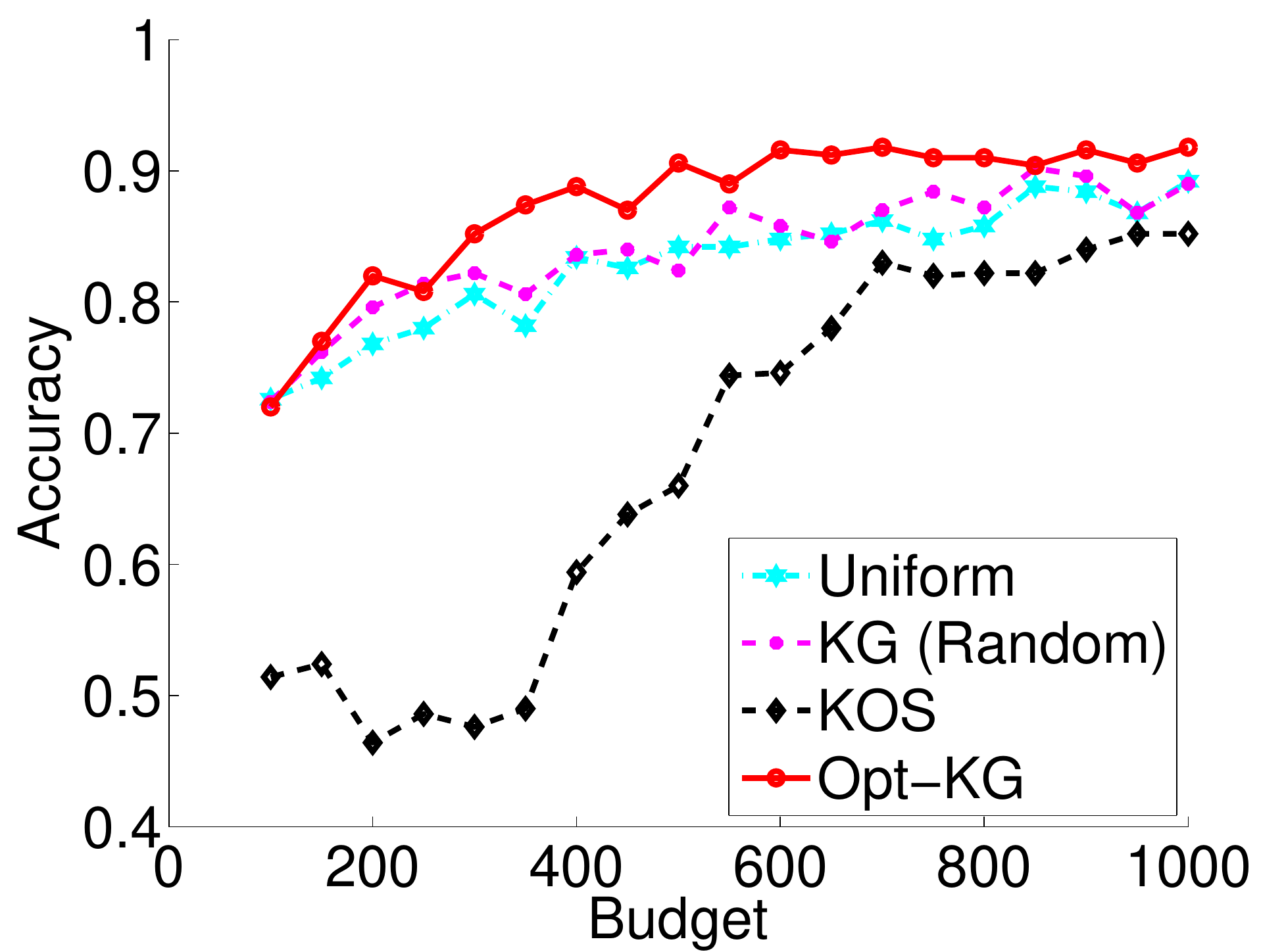}
}\subfigure[$\rho_j \sim \B(8,1)$ ($\B(4,1)$ Prior)]{
  \includegraphics[width=0.35\textwidth]{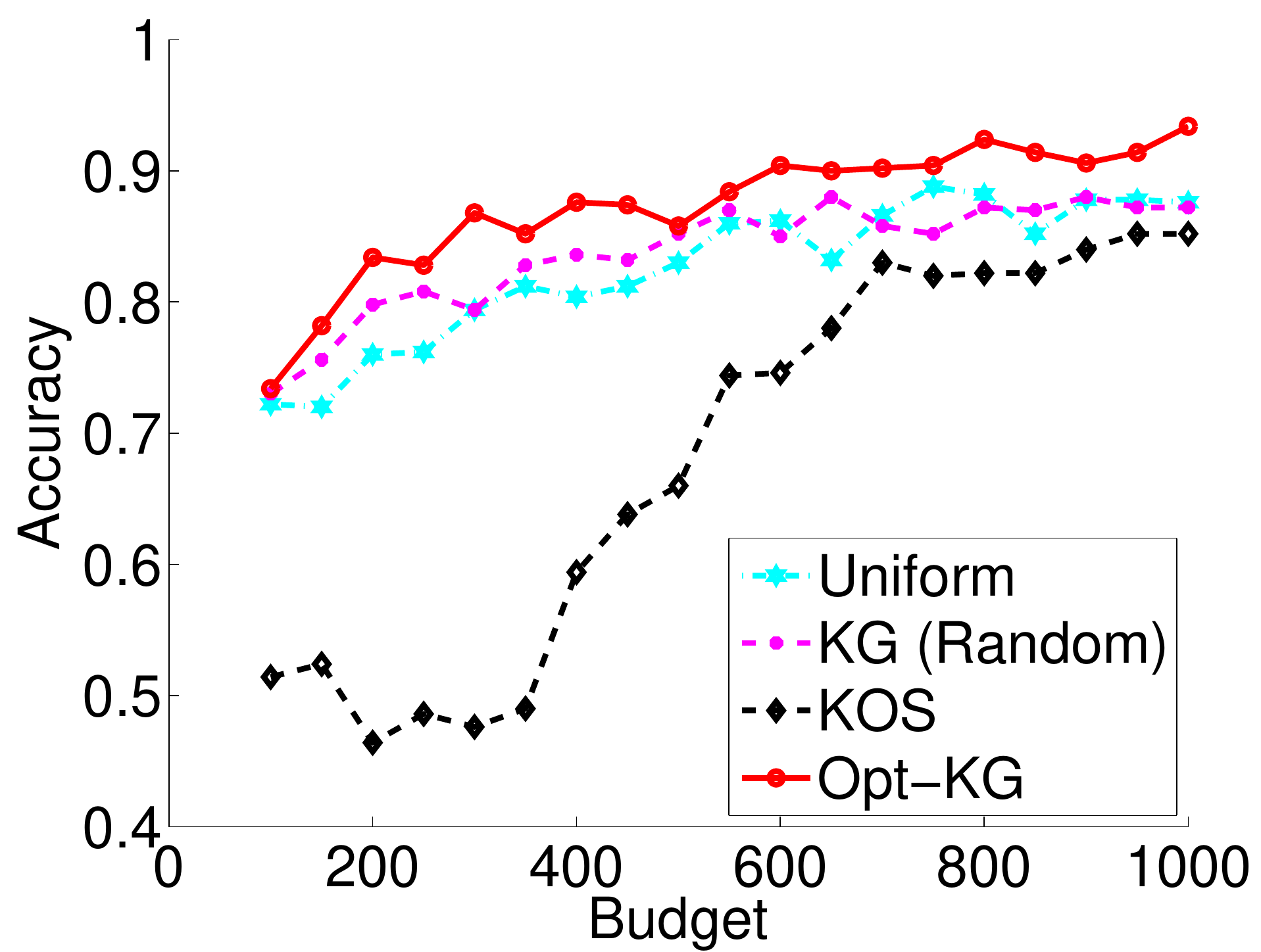}
}\\
\subfigure[$\rho_j \sim \B(5,2)$ (True Prior)]{
  \includegraphics[width=0.35\textwidth]{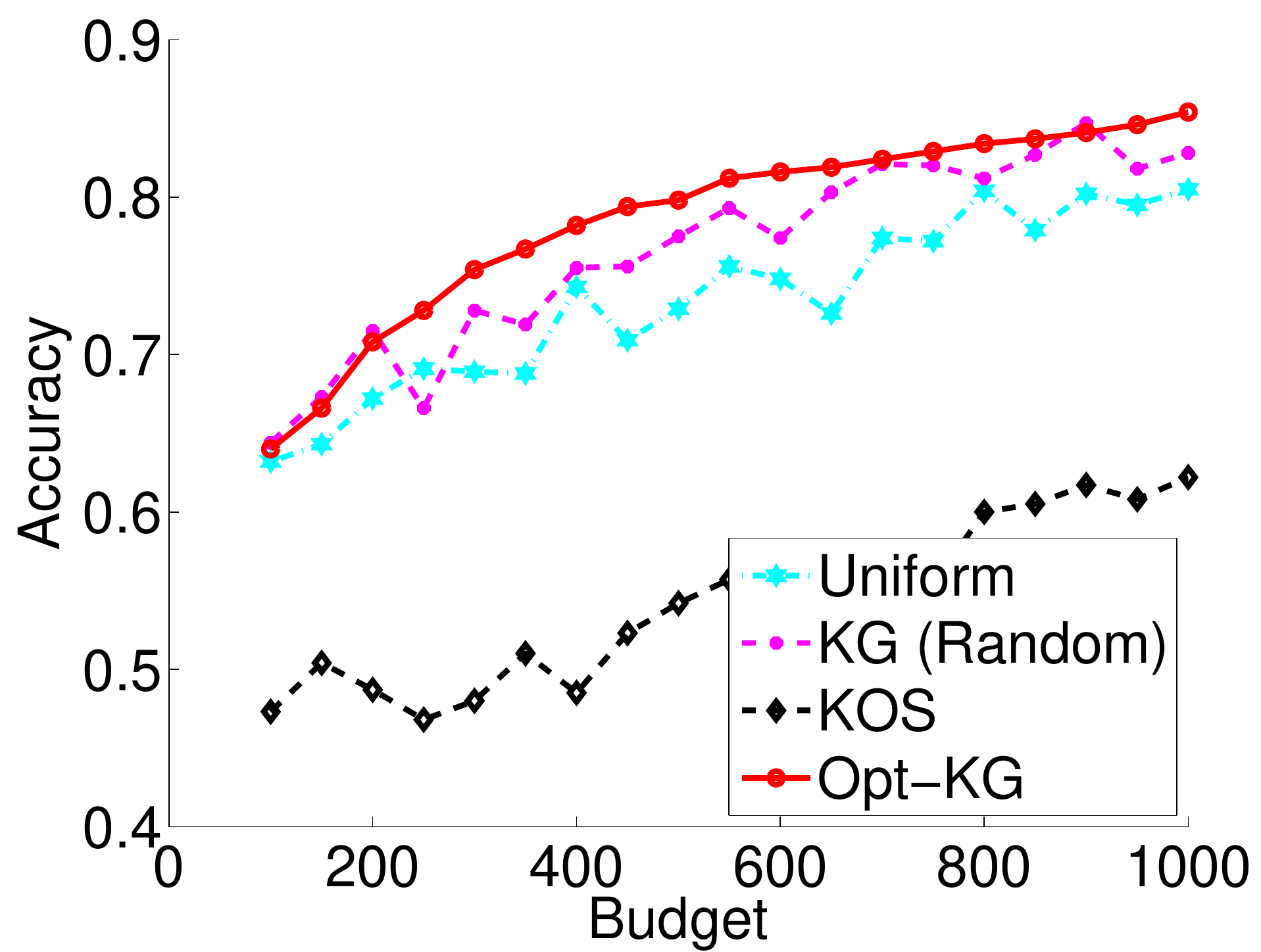}
}\subfigure[$\rho_j \sim \B(5,2)$ ($\B(4,1)$ Prior)]{
  \includegraphics[width=0.35\textwidth]{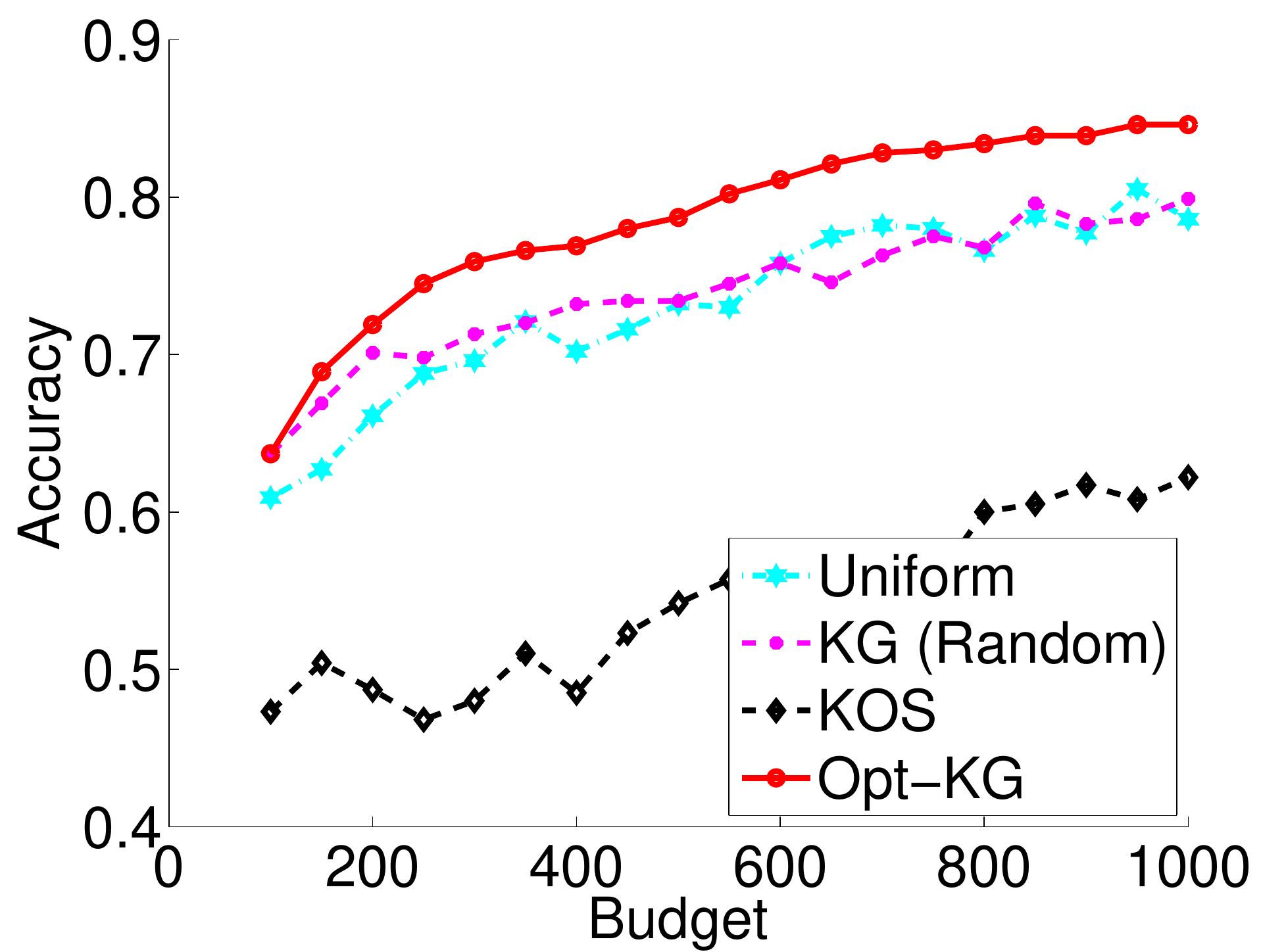}
}
\caption{Performance comparison under the heterogeneous worker setting.}
\label{fig:comp_worker}
\end{figure}

We simulate $K=50$ instances with each $\theta_i \sim \B(1,1)$ and $M=100$ workers with $\rho_j \sim \B(4,1)$, $\rho_j \sim \B(3,1)$,  $\rho_j \sim \B(8,1)$ or  $\rho_j \sim \B(5,2)$ (see Figure \ref{fig:beta_worker_pdf}). For each of the four settings, we vary the total budget $T=2K, 3K, \ldots, 20K$ and report the mean of accuracy for 20 independently generated sets of parameters. For the last three settings, we report the comparison among different methods when either using $\B(4,1)$ prior or the true generating distribution for $\rho_j$ as the prior. From Figure \ref{fig:comp_worker}, the proposed Opt-KG outperforms all the other competitors regardless the choice of the prior.
\subsection{Real Data}

We compare different policies on a standard real dataset for recognizing textual entailment (RTE) (Section 4.3 in \cite{Snow:08}). There are 800 instances and each instance is a sentence pair. Each sentence pair is presented to 10 different workers to acquire binary choices of whether the second hypothesis sentence can be inferred from the first one.   There are in total 164 different workers. We first consider the homogeneous noiseless setting without incorporating the diversity of workers and  use  the uniform prior ($\B(1,1)$) for each $\theta_i$. In such a setting, once we decide to label an instance, we randomly choose a worker (who provides the label in the full dataset) to acquire the label. Due to this randomness, we run each policy 20 times and report the mean of the accuracy  in Figure \ref{fig:rte}.
As we can see, Opt-KG, Gittins-inf and NLU all perform quite well.  We also note that although Gittins-inf performs slightly better than our method on this data,   it requires solving a linear system with $O(T^2)$ variables at each stage, which could be too expensive for large-scale applications. While our Opt-KG policy has a time complexity linear in $KT$ and space complexity linear in $K$, which is much more efficient when a quick
online decision is required. In particular, we present the comparison between Opt-KG and Gittins-inf on the averaged CPU time under different budget levels in Table \ref{tab:comp_time}. As one can see,  Gittins-inf is computationally more expensive than Opt-KG.

\begin{figure}[!t]
\centering
\subfigure[RTE: Homogeneous Noiseless Worker]{
  \includegraphics[width=0.45\textwidth]{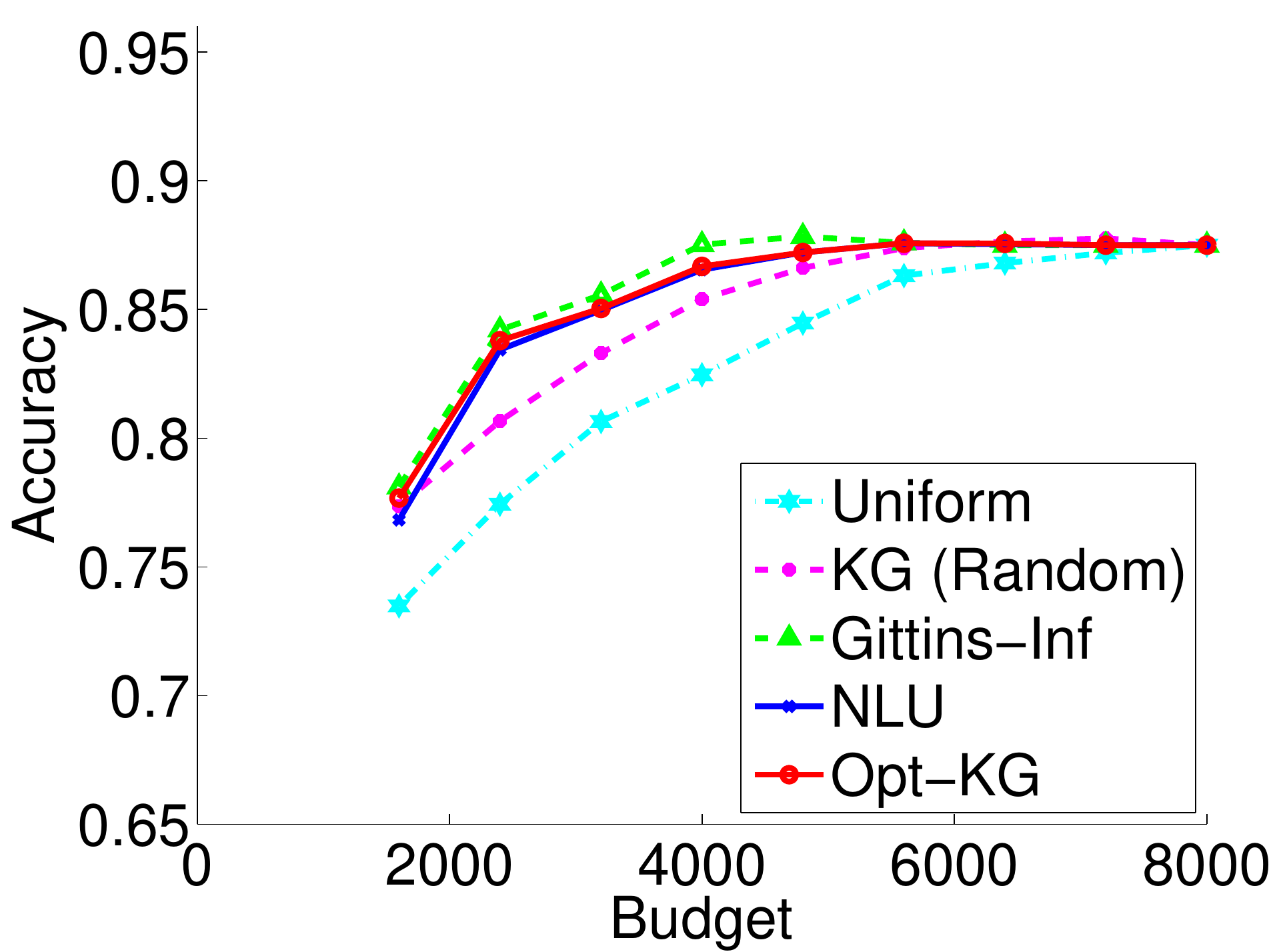}
	    \label{fig:rte}
} \hspace{0.2cm}
\subfigure[RTE: Heterogeneous Worker]{
  \includegraphics[width=0.45\textwidth]{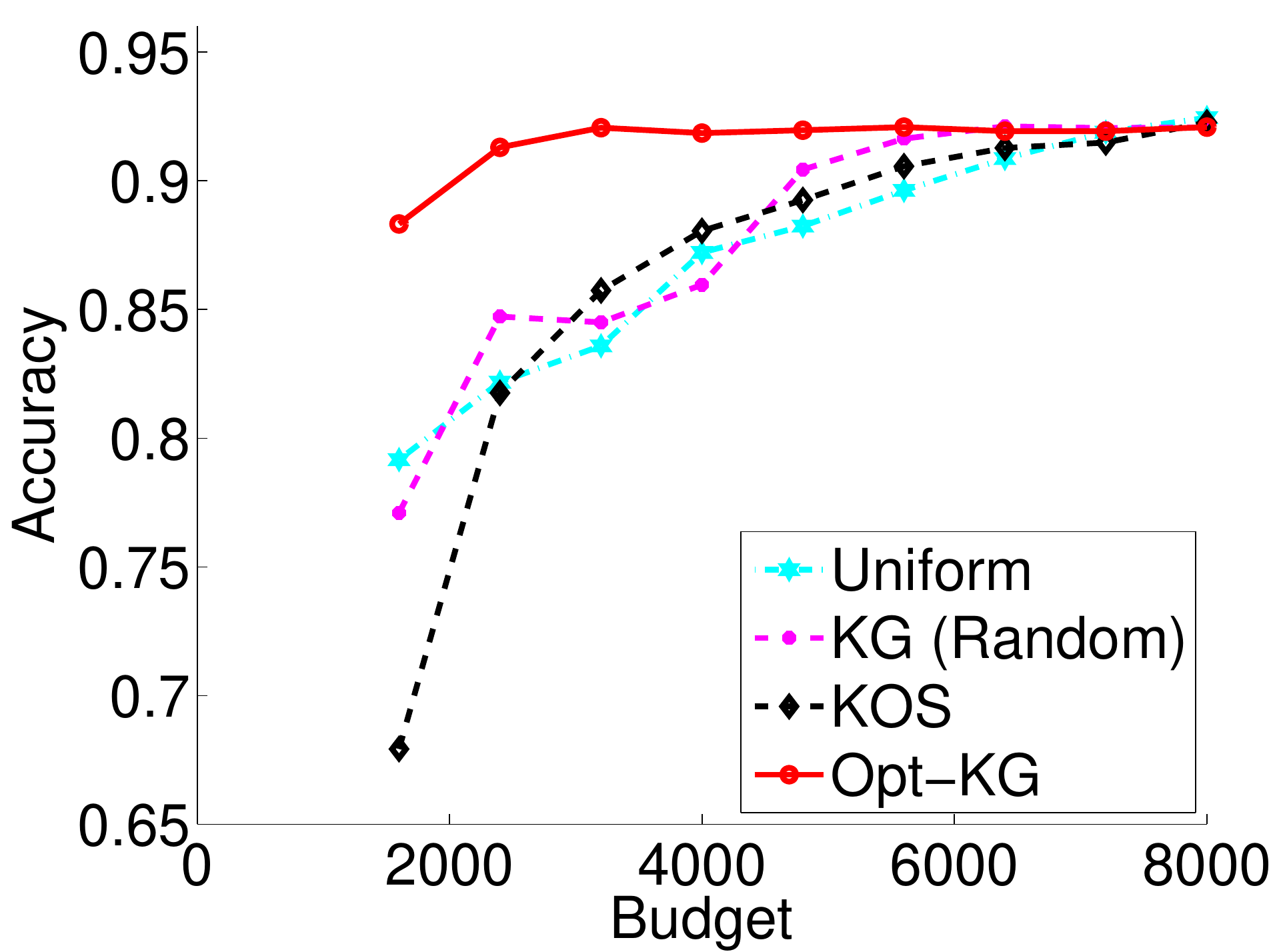}
	    \label{fig:rte_worker}
}
\caption{ Performance comparison on the real dataset.}
\label{fig:rte_full}
\end{figure}

\begin{table}[!t]
\centering
 \caption{Comparison in CPU time (seconds)}
  \begin{tabular}{|c|c|c|c|c|c|} \hline
    Budget $T$ & $2K=1,600$ & $4K=3,200$ & $6K=4,800$ & $10K=8,000$ \\ \hline
    Opt-KG & 1.09 &  2.19 & 3.29 &   5.48 \\ \hline
    Gittins-inf &  25.87  & 35.70 & 45.59 & 130.68 \\ \hline
 \end{tabular}
\label{tab:comp_time}
\end{table}

When the worker reliability is incorporated, we compare different policies in Figure \ref{fig:rte_worker}. We put $\B(4,1)$ prior distribution for each $\rho_j$ which indicates that we have the prior belief that most workers perform reasonably well. Other priors in Figure  \ref{fig:beta_worker_pdf} lead to similar results and thus omitted here. As one can see, the accuracy of Opt-KG is much higher than that of other policies when $T$ is small. It achieves the highest accuracy of $92.05\%$ only using 40\% of the total budget (i.e., on average, each instance is labeled 4 times).  One may also observe that when $T > 4K=3,200$, the performance of Opt-KG does not improve and in fact, slightly downgrades a little bit.   This is mainly due to the restrictiveness of the experimental setting. In particular, since the experiment is conducted on a fixed dataset with partially observed labels, the Opt-KG cannot freely choose instance-worker pairs especially when the budget goes up (i.e., the action set is greatly restricted). According to our experience, such a phenomenon will not happen on experiments when labels can be obtained from any instance-worker pair. Comparing Figure \ref{fig:rte_worker} to \ref{fig:rte}, we also observe that Opt-KG under the heterogeneous worker setting performs much better than  Opt-KG under the homogeneous worker setting, which indicates that it is beneficial to incorporate workers' reliability.

\section{Conclusions and Future Works}
\label{sec:conclusion}

In this paper, we propose to address the problem of budget allocation in crowd labeling. We model the problem using the Bayesian Markov decision process and  characterize the optimal policy using the dynamic programming. We further propose a computationally more attractive approximate policy: optimistic knowledge gradient. Our MDP formulation is a general framework, which can be applied to  binary or multi-class,  contextual or non-contextual crowd labeling problems in either pull or push crowdsourcing marketplaces.

There are several possible future directions for this work. First, it is of great interest to show the consistency of Opt-KG in heterogonous worker setting and further provide the theoretical results on the performance of Opt-KG under finite budget. Second, in this work, we assume that both instances and  workers are equally priced. Although this assumption is standard in many crowd labeling applications, a dynamic pricing strategy as the allocation process proceeds will better motivate those more reliable workers to label more challenge instances. A recent work in \cite{Wang:13} provides some quality-based pricing algorithms for crowd workers and it will be interesting to incorporate their strategies into our dynamic allocation framework.  Third, we assume that the labels provided by the same worker to different instances are independent. It is more interesting to consider that the workers' reliability will be improved during the labeling process when some useful feedback can be provided. Further, since the proposed Opt-KG is a fairly general approximate policy for MDP, it is also interesting to apply it to other statistical decision problems.
\section{Acknowledgement}
We would like to thank Qiang Liu for sharing the code for KOS method; Jing Xie and Peter Frazier for sharing their code for computing infinite-horizon Gittins index; John Platt, Chris J.C. Burges and Kevin P. Murphy for helpful discussions; and anonymous reviewers and the associate editor for their constructive comments on improving the quality of the paper.

\newpage

\appendix

\section*{Appendix}

\section*{Proof of  Proposition \ref{prop:H}}

The final positive set $H_T$ is chosen to maximize the expected accuracy conditioned on $\calF_T$:
\begin{equation}
H_T=\argmax_H  \E\left( \sum_{i=1}^K \left( \mathbf{1}(i \in H) \mathbf{1}(i \in H^*) +  \mathbf{1}(i \not \in H) \mathbf{1}(i \not \in H^*) \right) \Bigg| \calF_T \right)
  \label{eq:exp_acc}
\end{equation}


According to the definition \eqref{Pti} of $P^T_i$, we can re-write \eqref{eq:exp_acc} using the linearity of the expectation:

\begin{eqnarray}
 & & \sum_{i=1}^K \left( \mathbf{1}(i \in H) \Pr(i \in H^* | \calF_T ) + \mathbf{1}(i \not \in H) \Pr(i \not \in H^* | \calF_T) \right)  \nonumber \\
 =&&\sum_{i=1}^K \left( \mathbf{1}(i \in H) P_i^T +  \mathbf{1}(i \not \in H) (1-P_i^T) \right)
\label{eq:exp_acc_1}
\end{eqnarray}

To maximize \eqref{eq:exp_acc_1} over $H$, it easy to see that we should set $i \in H$ if and only if $P_i^T \geq 0.5$. Therefore, we have the  positive set $$H_T = \{i: P_i^T \geq 0.5\}.$$

\if 0
The maximum conditional expected accuracy then takes the form:
\begin{equation}
  \sum_{i=1}^K \max(P_i^T, 1-P_i^T).
\end{equation}

Finally, we prove that  $H_T = \{i: P_i^T \geq 0.5\}=\{i: a^T_i \geq b_i^T\}$ via the following Lemma.
\fi

\section*{Proof of  Corollary \ref{cor:majority_vote}}

Recall that
\begin{eqnarray}
  I(a,b)  =  \Pr(\theta \geq 0.5 | \theta \sim \mathrm{Beta}(a,b))  =  \frac{1}{B(a,b)}\int_{0.5}^{1} t^{a-1}(1-t)^{b-1} \mathrm{d} t,
  \label{eq:I}
\end{eqnarray}
where $B(a,b)$ is the beta function.

It is easy to see that $I(a,b) > 0.5 \Longleftrightarrow I(a,b) > 1- I(a,b)$.  We re-write $1-I(a,b)$  as follows
\begin{eqnarray*}
1-I(a,b) =  \frac{1}{B(a,b)} \int_{0}^{0.5} t^{a-1}(1-t)^{b-1} \mathrm{d} t =   \frac{1}{B(a,b)} \int_{0.5}^{1} t^{b-1}(1-t)^{a-1} \mathrm{d} t,
\end{eqnarray*}
where the second equality is obtained by setting $t:=1-t$. Then we have:
\begin{eqnarray*}
    I(a,b)-(1-I(a,b))  &= & \frac{1}{B(a,b)}  \int_{0.5}^{1} (t^{a-1}(1-t)^{b-1}- t^{b-1}(1-t)^{a-1}) \mathrm{d} t \\
  & = & \frac{1}{B(a,b)}   \int_{0.5}^{1} t^{a-1}(1-t)^{b-1} \left( \left(\frac{t}{1-t} \right)^{a-b}-1 \right ) \mathrm{d} t
\end{eqnarray*}
Since $t > 0.5$, $\frac{t}{1-t}  > 1$. When $a>b$,  $\left(\frac{t}{1-t} \right)^{a-b}>1$ and hence $I(a,b)-(1-I(a,b)) >0$, i.e, $I(a,b)>0.5$. When $a=b$,  $\left(\frac{t}{1-t} \right)^{a-b} \equiv 1$ and  $I(a,b)=0.5$. When $a<b$, $\left(\frac{t}{1-t} \right)^{a-b}<1$ and   $I(a,b)<0.5$.


\section*{Proof of Proposition \ref{prop:reward}}

We use the proof technique in \cite{Xie:12} to prove Proposition \ref{prop:reward}. According to \eqref{eq:value_func}, the value function takes the following form,
\begin{align}
  V(S^0)  = \sup_{\pi } \E^{\pi} \left( \sum_{i=1}^K h(P_i^T) \right).
\end{align}

To decompose the final accuracy $\sum_{i=1}^K h(P_i^T)$ into the incremental reward at each stage, we define $G_0= \sum_{i=1}^K h(P_i^0) $ and $G_{t+1}= \sum_{i=1}^K h(P_i^{t+1}) -  \sum_{i=1}^K h(P_i^{t})$. Then, $ \sum_{i=1}^K h(P_i^T)$ can be decomposed as:  $ \sum_{i=1}^K h(P_i^T) \equiv G_0 + \sum_{t=0}^{T-1} G_{t+1}$. The value function can now be re-written as follows:

\begin{eqnarray*}
   V(S^0) & = &   G_0(S^0) + \sup_{\pi} \sum_{t=0}^{T-1} \E^{\pi} ( G_{t+1} ) \\
       &=  &  G_0(S^0) + \sup_{\pi} \sum_{t=0}^{T-1} \E^{\pi} \left( \E( G_{t+1} |\calF_t) \right) \\
         & = &  G_0(S^0) + \sup_{\pi} \sum_{t=0}^{T-1} \E^{\pi} \left( \E( G_{t+1} |S^{t}, i_t) \right).
\end{eqnarray*}

Here, the first inequality is true because $G_0$ is determinant and independent of $\pi$; the second inequality is due to the tower property of conditional expectation and the third one holds because $G_{t+1}$, which is a function of $P_i^{t+1}$ and $P_i^t$, depends on $\calF_t$ only through $S^t$ and $i_t$. We define incremental expected reward gained by labeling the $i_t$-th instance at the state $S^t$ as follows:

\begin{eqnarray}
R(S^t, i_t)&   = & \E( G_{t+1} |S^{t}, i_t)  = \E\left(\sum_{i=1}^K h(P_i^{t+1}) -  \sum_{i=1}^K h(P_i^{t}) |S^{t}, i_t \right)    \nonumber \\
            &  = & \E \left( h(P_{i_t}^{t+1}) -   h(P_{i_t}^{t}) |S^{t}, i_t \right).
           \label{eq:exp_reward}
\end{eqnarray}

The last equation is due to the fact that only $P_{i_t}^t$ will be changed if the $i_t$-th instance is labeled next.  With the expected reward function in place, the value function in \eqref{eq:value_func} can be re-formulated as:

\begin{equation}
  V(S^0) = G_0(\bs) + \sup_{\pi} \E^{\pi} \left( \sum_{t=0}^{T-1}  R(S^t, i_t ) \Big| S^0 \right).
  \label{eq:value_function}
\end{equation}

\section*{Proof of Proposition \ref{prop:det_KG}}

To prove the failure of deterministic KG, we first show a key property for the expected reward function:
\begin{eqnarray}
\label{eq:reward_R}
  R(a,b) = \frac{a}{a+b} \left(h(I(a+1,b))-h(I(a,b)) \right)  +\frac{b}{a+b} \left(h(I(a,b+1))-h(I(a,b)) \right).
\end{eqnarray}

\begin{lemma}
When $a, b$ are positive integers, if $a= b$, $R(a,b)=\frac{0.5^{2a}}{aB(a,a)}$ and if $a \neq b$, $R(a,b)= 0$.
\label{lem:reward}
\end{lemma}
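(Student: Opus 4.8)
The plan is to reduce everything to a single ``tower identity'' for $I$, and then split into the two cases using the sign information already supplied by Corollary \ref{cor:majority_vote}. The central tool I would establish first is
\[
\frac{a}{a+b}\, I(a+1,b) + \frac{b}{a+b}\, I(a,b+1) = I(a,b),
\]
valid for all positive integers $a,b$. Conceptually this is just the tower property applied to the martingale $P_i^t = \Pr(i \in H^* \mid \calF_t)$: conditioning on the next label of instance $i$ with the transition probabilities of \eqref{eq:tran_prob} gives exactly this relation. For a self-contained derivation I would instead verify it directly from the integral \eqref{eq:I}, using the Beta recurrences $B(a+1,b) = \tfrac{a}{a+b}B(a,b)$ and $B(a,b+1) = \tfrac{b}{a+b}B(a,b)$, which convert the left side into $\frac{1}{B(a,b)}\int_{0.5}^{1} t^{a-1}(1-t)^{b-1}\bigl(t+(1-t)\bigr)\,\mathrm{d}t = I(a,b)$.

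Next I would handle the case $a \neq b$. Taking $a > b$ without loss of generality (the case $a<b$ is symmetric, with $h(I)=1-I$ throughout and the constant absorbed by $p_1+p_2=1$), Corollary \ref{cor:majority_vote} gives $I(\cdot,\cdot) \geq 0.5$ for all three arguments $(a,b)$, $(a+1,b)$, $(a,b+1)$, since $a>b$, $a+1>b$ and $a \geq b+1$. Hence $h(I)=I$ in every term of \eqref{eq:reward_R}, so $R(a,b) = \tfrac{a}{a+b} I(a+1,b) + \tfrac{b}{a+b} I(a,b+1) - I(a,b)$, which vanishes by the tower identity. This disposes of the $a\neq b$ claim immediately.

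For $a=b$, Corollary \ref{cor:majority_vote} gives $h(I(a,a))=0.5$; since $a+1>a$ we have $h(I(a+1,a))=I(a+1,a)$, and using the reflection $I(a,a+1)=1-I(a+1,a)$ (the substitution $t\mapsto 1-t$ in \eqref{eq:I}, already used in the Corollary proof) we get $h(I(a,a+1))=1-I(a,a+1)=I(a+1,a)$. Substituting into \eqref{eq:reward_R} with $p_1=p_2=\tfrac12$ collapses both bracketed terms to the same value, yielding $R(a,a) = I(a+1,a)-0.5$. Finally I would evaluate this constant: writing $2I(a+1,a)-1 = \frac{1}{B(a+1,a)}\int_{0.5}^{1} t^{a-1}(1-t)^{a-1}(2t-1)\,\mathrm{d}t$ and recognizing $t^{a-1}(1-t)^{a-1}(2t-1) = -\tfrac{1}{a}\tfrac{\mathrm{d}}{\mathrm{d}t}\bigl[t^a(1-t)^a\bigr]$, the integral evaluates to $\tfrac{0.5^{2a}}{a}$; combined with $B(a+1,a)=\tfrac12 B(a,a)$ this gives $I(a+1,a)-0.5 = \frac{0.5^{2a}}{a\,B(a,a)}$, as claimed.

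The conceptual steps (the tower identity and the sign bookkeeping) are clean, so the main obstacle is this last closed-form evaluation in the $a=b$ case: it hinges on spotting the exact antiderivative $\tfrac{\mathrm{d}}{\mathrm{d}t}\bigl[t^a(1-t)^a\bigr] = a\,t^{a-1}(1-t)^{a-1}(1-2t)$ and the Beta ratio $B(a+1,a)=\tfrac12 B(a,a)$, which together land precisely on the stated constant.
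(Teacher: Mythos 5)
Your proof is correct, and its skeleton coincides with the paper's: the same split into the cases $a=b$ and $a\neq b$, the same sign bookkeeping via Corollary \ref{cor:majority_vote}, and the same integral identity $\frac{a}{a+b}I(a+1,b)+\frac{b}{a+b}I(a,b+1)=I(a,b)$, which the paper derives inside its $a>b$ case by exactly the manipulation you describe (using $\frac{a}{a+b}\frac{1}{B(a+1,b)}=\frac{b}{a+b}\frac{1}{B(a,b+1)}=\frac{1}{B(a,b)}$), rather than isolating it up front as a ``tower identity.'' The one genuine difference is how the constant is evaluated when $a=b$: the paper simply invokes the incomplete-beta recurrence $I(a+1,b)=I(a,b)+\frac{0.5^{a+b}}{aB(a,b)}$ (its property \eqref{eq:I_a_1}, cited as a known identity for the regularized incomplete beta function), whereas you re-derive the needed special case self-containedly by spotting the antiderivative $\frac{\mathrm{d}}{\mathrm{d}t}\bigl[t^a(1-t)^a\bigr]=a\,t^{a-1}(1-t)^{a-1}(1-2t)$ together with $B(a+1,a)=\tfrac12 B(a,a)$. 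Your route buys independence from the external identity at the cost of a slightly longer computation; the paper's buys brevity. All of your intermediate steps check out, including the reflection step $h(I(a,a+1))=1-I(a,a+1)=I(a+1,a)$ and the symmetric treatment of $a<b$ via $h(I)=1-I$ and $p_1+p_2=1$, which the paper leaves implicit (``we can prove $R(a,b)=0$ in a similar way'').
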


To prove lemma \ref{lem:reward}, we first present several basic properties for $B(a,b)$ and $I(a,b)$, which will be used in all the following theorems and proofs.
\begin{enumerate}
  \item  Properties for $B(a,b)$:
  \begin{align}
    & B(a,b)=B(b,a)     \label{eq:sym_B}\\
    & B(a+1,b)=\frac{a}{a+b} B(a,b)  \label{eq:B_a_1} \\
    & B(a, b+1)=\frac{b}{a+b} B(a,b) \label{eq:B_b_1}
  \end{align}
  \item Properties for $B(a,b)$:
  \begin{align}
    & I(a,b)=1-I(b,a)     \label{eq:sym_I}\\
    & I(a+1,b)=I(a,b)+ \frac{0.5^{a+b}}{a B(a,b)}   \label{eq:I_a_1} \\
    & I(a,b+1)=I(a,b)- \frac{0.5^{a+b}}{b B(a,b)}  \label{eq:I_b_1}
  \end{align}
  The properties for $I(a,b)$ are derived from the basic property of regularized incomplete beta function \footnote{\url{http://dlmf.nist.gov/8.17}}.
\end{enumerate}

\begin{proof}[Proof of Lemma \ref{lem:reward}]

When $a=b$, by Corollary \ref{cor:majority_vote}, we have $I(a+1,b)>0.5$, $I(a,b)=0.5$ and $I(a,b+1) < 0.5$. Therefore, the expected reward \eqref{eq:reward_R} takes the following form:
\begin{eqnarray*}
  R(a,b) & = & 0.5(I(a+1,a) -I(a,a))+   0.5((1-I(a,a+1)) -I(a,a))\\
         & = &  I(a+1,a) -I(a,a)  =  \frac{0.5^{2a}}{aB(a,a)}\\
\end{eqnarray*}

When $a > b$, since $a,b$ are integers,  we have $a \geq b+1$ and hence $I(a+1,b)>0.5, I(a,b)>0.5, I(a,b+1)\geq 0.5$ according to Corollary \ref{cor:majority_vote}. The expected reward \eqref{eq:reward_R} now becomes:

\begin{align*}
  R(a,b) =&  \frac{a}{a+b} I(a+1,b) + \frac{b}{a+b} I(a,b+1) -I(a,b)  \\
         =&   \frac{a}{a+b}  \frac{1}{B(a+1,b)} \int_{0.5}^{1} t\cdot t^{a-1}(1-t)^{b-1} \mathrm{d} t \\
          & +\frac{b}{a+b}  \frac{1}{B(a,b+1)} \int_{0.5}^{1} \ t^{a-1} (1-t) (1-t)^{b-1} \mathrm{d} t   -I(a,b)\\
        = & \frac{1}{B(a,b)} \int_{0.5}^{1} (t+(1-t)) \cdot t^{a-1}(1-t)^{b-1} \mathrm{d} t-I(a,b)\\
        =& I(a,b)-I(a,b)=0.
\end{align*}

Here we use \eqref{eq:B_a_1} and \eqref{eq:B_b_1} to show that $\frac{a}{a+b}  \frac{1}{B(a+1,b)}=\frac{b}{a+b}  \frac{1}{B(a,b+1)}=\frac{1}{B(a,b)}$.

When $a\leq b-1$, we can prove $R(a,b)=0$ in a similar way.
\end{proof}

With Lemma \ref{lem:reward} in place, the proof for Proposition \ref{prop:det_KG} is straightforward. 
Recall that the deterministic KG policy chooses the next instance according to $$i_t =\argmax_i R(S^t, i) =  \argmax_i R(a_i^t, b_i^t),$$and breaks the tie by selecting the one with the smallest index. Since $R(a,b)>0$ if and only if $a= b$, at the initial stage $t=0$, $R(a_i^0, b_i^0)>0$ for those instances $i \in \calE=\{i: a_i^0=b_i^0\}$.  The policy will first select  $i_0 \in \calE$ with the largest $R(a_i^0, b_i^0)$. After obtaining the label $y_{i_0}$, either $a_{i_0}^0$  or $b_{i_0}^0$ will add one and hence $a_{i_0}^1 \neq b_{i_0}^1$ and $R(a_{i_0}^1, b_{i_0}^1)=0$. The policy will select another instance $i_1 \in \calE$ with the ``current'' largest expected reward and the expected reward for $i_1$ after obtaining the label $y_{i_1}$ will then become zero. As a consequence, the KG policy will label each instance in $\calE$ for the first $|\calE|$ stages and $R(a_i^{|\calE|}, b_i^{|\calE|})=0$ for all $i\in\{1,\ldots, K\}$. Then the deterministic policy will break the tie selecting the first instance to label. From now on, for any $t \geq |\calE|$, if $a_1^t  \neq b_1^t$, then the expected reward $R(a_1^t,b_1^t)=0$. Since the expected reward for other instances are all zero, the policy will still label the first instance. On the other hand, if $a_1^t = b_1^t$, and the first instance is the only one with the positive expected reward and the policy will label it. Thus Proposition \ref{prop:det_KG} is proved.

\begin{remark}
  For randomized KG, after getting one  label for each instance in $\calE$ for the first $|\calE|$ stages, the expected reward for each instance has become zero. Then randomized KG will uniformly select one instance to label. At any stage $t \geq |\calE|$, if there exists one instance $i$ (at most one instance) with $a_i^t=b_i^t$, the KG policy will provide the next label for $i$; otherwise, it will randomly select an instance to label.
\end{remark}

\section*{Proof of Theorem \ref{thm:opt_KG}}

To prove the consistency of the Opt-KG policy, we first show the exact values for $R^+_{\alpha}(a,b)=\max(R_1(a,b), R_2(a,b))$.
\begin{enumerate}
  \item When $a \geq b+1$:
  \begin{align*}
    R_1(a,b) &  = I(a+1,b) - I(a,b) =\frac{0.5^{a+b}}{a B(a,b)} > 0 ; \\
    R_2(a,b) & = I(a,b+1) -I(a,b)= - \frac{0.5^{a+b}}{b B(a,b)} < 0.
  \end{align*}
  Therefore,
  \begin{equation*}
      R^+(a,b) =R_1(a,b)=\frac{0.5^{a+b}}{a B(a,b)} > 0.
  \end{equation*}
  \item When $a=b$:
  \begin{align*}
        R_1(a,b) &= I(a+1,a) - I(a, a) = \frac{0.5^{2a}}{a B(a,a)} ; \\
        R_2(a,b) &= 1-I(a,a+1) - I(a, a) = \frac{0.5^{2a}}{a B(a,a)}.
  \end{align*}
  Therefore, we have $R_1=R_2$ and
  \begin{equation*}
      R^+(a,b) =R_1(a,b)=R_2(a,b)= \frac{0.5^{2a}}{a B(a,a)}> 0.
  \end{equation*}
  \item When $b-1 \geq a  $:
  \begin{align*}
      R_1(a,b) & = I(a,b) - I(a+1,b)=-\frac{0.5^{a+b}}{a B(a,b)}<0 ;\\
      R_2(a,b) & = I(a,b)-I(a,b+1)=\frac{0.5^{a+b}}{b B(a,b)}>0 .
  \end{align*}
  Therefore
  \begin{equation*}
      R^+(a,b) =R_2(a,b)= \frac{0.5^{a+b}}{b B(a,b)} > 0.
  \end{equation*}
\end{enumerate}
We note that the values of $R^+(a,b)$ for different $a,b$  are plotted in Figure \ref{fig:cvar_right} in main text.

As we can see $R^+(a,b)>0$ for any positive integers $(a,b)$, we first prove that
\begin{equation}
\lim_{a+b \rightarrow \infty} R^{+}(a,b)=0
\label{eq:lim_R+}
\end{equation}
in the following Lemma.

\begin{lemma}Properties for   $R^+(a,b)$:
\begin{enumerate}
  \item $R(a,b)$ is symmetric, i.e., $R^+(a,b)=R^+(b,a)$.
  \item $\lim_{a \rightarrow \infty } R^+(a,a)=0$.
  \item For any fixed $a \geq 1 $, $R^+(a+k,a-k)=R^+(a-k,a+k)$ is monotonically decreasing in $k$ for $k=0, \ldots, a-1$.
  \item When $a \geq b$,  for any fixed $b$, $R^+(a,b)$ is monotonically decreasing in $a$. By the symmetry of $R^+(a,b)$, when $b \geq a$,  for any fixed $a$,  $R^+(a,b)$ is monotonically decreasing in $b$.
\end{enumerate}
  By the above four properties, we have $\lim_{(a+b) \rightarrow \infty} R^{+}(a,b)=0$.
  \label{lem:R+}
\end{lemma}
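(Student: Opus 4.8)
The plan is to derive each of the four properties directly from the closed forms for $R^+(a,b)$ that were computed immediately above the lemma, namely $R^+(a,b)=\frac{0.5^{a+b}}{a\,B(a,b)}$ when $a\ge b$ and $R^+(a,b)=\frac{0.5^{a+b}}{b\,B(a,b)}$ when $b\ge a$ (the two agreeing on the diagonal), and then to assemble them into the limit. Property~1 is immediate: for $a\ge b$ the transposed pair $(b,a)$ falls into the opposite case, and substituting $B(b,a)=B(a,b)$ from \eqref{eq:sym_B} into the two closed forms shows $R^+(a,b)=R^+(b,a)$.

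For Property~2 I would convert $R^+(a,a)=\frac{0.5^{2a}}{a\,B(a,a)}$ into a product. Writing $B(a,a)=\Gamma(a)^2/\Gamma(2a)$ and using $a\Gamma(a)=\Gamma(a+1)$ gives, for integer $a$, $R^+(a,a)=\tfrac12\binom{2a}{a}4^{-a}=\tfrac12\prod_{k=1}^{a}\frac{2k-1}{2k}$. Since $\sum_k \frac{1}{2k}=\infty$, the logarithm of this product diverges to $-\infty$, so $R^+(a,a)\to 0$; alternatively one may simply quote the central-binomial asymptotic $\binom{2a}{a}4^{-a}\sim(\pi a)^{-1/2}$.

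Properties~3 and 4 are the computational core, and each reduces to a one-line ratio test once $B$ is written through $\Gamma$. For Property~3, the closed form gives $R^+(a+k,a-k)=\frac{0.5^{2a}}{(a+k)\,B(a+k,a-k)}$, so it suffices to show the denominator $(a+k)B(a+k,a-k)=\frac{\Gamma(a+k+1)\Gamma(a-k)}{\Gamma(2a)}$ is increasing in $k$; the ratio of consecutive terms collapses to $\frac{a+k+1}{a-k-1}>1$ for $0\le k\le a-2$, whence $R^+$ is decreasing in $k$. For Property~4, with $b$ fixed and $a\ge b$, set $g(a)=\frac{0.5^{a+b}}{a\,B(a,b)}=\frac{0.5^{a+b}\,\Gamma(a+b)}{\Gamma(a+1)\Gamma(b)}$; then $g(a+1)/g(a)=\frac{a+b}{2(a+1)}<1$, the inequality holding precisely because $a\ge b$ forces $a+b<2a+2$. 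The claim for $b\ge a$ follows by Property~1.

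Finally, for the limit I would use symmetry to assume $a\ge b$ and collapse any state onto the diagonal. If $a+b=2s$ is even, Property~3 with $k=(a-b)/2$ yields $R^+(a,b)\le R^+(s,s)$. If $a+b$ is odd, then $a\ge b+1$, so Property~4 gives $R^+(a,b)\le R^+(a-1,b)$, a state of even sum $2s$ with $s=\lfloor(a+b)/2\rfloor$, and the even case again bounds it by $R^+(s,s)$. Thus in all cases $R^+(a,b)\le R^+(s,s)$ with $s=\lfloor(a+b)/2\rfloor\to\infty$ as $a+b\to\infty$, and Property~2 forces $R^+(a,b)\to 0$. I expect the only genuinely delicate point to be this last odd/even bookkeeping: Property~3 only controls anti-diagonals of even total, so Property~4 is needed to push an odd-sum state onto an even diagonal before invoking the diagonal decay of Property~2.
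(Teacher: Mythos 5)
Your proof is correct and takes essentially the same route as the paper's: each of the four properties is established by the same one-line ratio computations on the closed forms of $R^+$ (your $\Gamma$-function manipulations are equivalent to the paper's use of the Beta recursions $B(a+1,b)=\tfrac{a}{a+b}B(a,b)$), and the final limit uses the identical even/odd reduction onto the diagonal at $\lfloor (a+b)/2\rfloor$ via Properties 3 and 4 before invoking Property 2. The only cosmetic difference is your central-binomial expression $R^+(a,a)=\tfrac12\binom{2a}{a}4^{-a}$ where the paper bounds the telescoping product by $e^{-\sum_i 1/(2i)}$; both yield the same diagonal decay.
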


\begin{proof}[Proof of Lemma \ref{lem:R+}]

We first prove these four properties.

\begin{itemize}
  \item Property 1: By the fact that $B(a,b)=B(b,a)$, the symmetry of $R^+(a,b)$ is straightforward.
  \item Property 2: For $a >1 $,  $\frac{R^+(a,a)}{R^+(a-1,a-1)}=\frac{2a-1}{2a}<1$ and hence $R^+(a,a)$ is monotonically decreasing in $a$. Moreover,
                    \begin{align*}
                      R^+(a,a)&=R^{+}(1,1) \prod_{i=2}^{a} \frac{2i-1}{2i}  = R^{+}(1,1) \prod_{i=2}^{a} (1-\frac{1}{2i}) \leq R^{+}(1,1) e^{ -\sum_{i=2}^{a} \frac{1}{2i} }
                    \end{align*}
                    Since $\lim_{a \rightarrow \infty} \sum_{i=2}^{a} \frac{1}{2i}  = \infty$ and $R^+(a,a) \geq 0$,   $\lim_{a \rightarrow \infty} R^+(a,a)=0$.
  \item Property 3: For any $k \geq 0$,
                    \begin{align*}
                         \frac{R^+(a+(k+1), a-(k+1))}{R^+(a+k, a-k)}  =  \frac{(a+k)B(a+k, a-k)}{(a+k+1)B(a+(k+1), a-(k+1))}
                         =  \frac{a-(k+1)}{a+(k+1)} <1.
                   \end{align*}
  \item Property 4: When $a \geq b$, for any fixed $b$:
                    \begin{align*}
                       \frac{R^+(a+1, b)}{R^+(a, b)}=  \frac{aB(a,b)}{2(a+1)B(a+1,b)}
                         =  \frac{a(a+b)}{2a(a+1)} <1.
                    \end{align*}
\end{itemize}

According  to the third property, when $a+b$ is an even number, we have $R^+(a,b) < R^+(\frac{a+b}{2},\frac{a+b}{2})$. According to the fourth property, when $a+b$ is an odd number and $a \geq b+1$, we have $R^+(a,b) < R^{+}(a-1,b) < R^+(\frac{a+b-1}{2},\frac{a+b-1}{2})$; while when $a+b$ is an odd number and $a \leq b-1$,  we have $R^+(a,b) < R^{+}(a,b-1) < R^+(\frac{a+b-1}{2},\frac{a+b-1}{2})$. Therefore,
\begin{equation*}
  R^+(a,b) < R^+\left( \lfloor \frac{a+b}{2} \rfloor, \lfloor\frac{a+b}{2} \rfloor\right).
\end{equation*}
According to the second property such that $\lim_{a \rightarrow \infty } R^+(a,a)=0$, we obtain \eqref{eq:lim_R+}.
\end{proof}

Using Lemma \ref{lem:R+}, we first show that, in any sample path, the Opt-KG will label each instance infinitely many times  as $T$ goes to infinity. Let $\eta_i(T)$ be a random variable representing the number of times that the $i$-th instance has been labeled until the stage $T$ using Opt-KG. Given a sample path $\omega$, let $\calI(\omega) = \{i: \lim_{T \rightarrow \infty} \eta_i(T)(\omega) < \infty\}$ be the set of instances that has been labeled only finite number of times as $T$ goes to infinity in this sample path. We need to prove that $\calI(\omega)$ is an empty set for any $\omega$. We prove it by contradiction. Assuming that $\calI(\omega)$  is not empty, then after a certain stage $\widehat{T}$, instances in $\calI(\omega)$ will never be labeled.  By Lemma \ref{lem:R+}, for any $j \in \calI^c$,  $\lim_{T \rightarrow \infty} R^+(a_j^T(\omega), b_j^T(\omega)) =0$. Therefore, there will exist $\bar{T} > \widehat{T}$ such that:
\begin{eqnarray*}
  \max_{j \in \calI^c} R^+(a_j^{\bar{T}}(\omega), b_j^{\bar{T}}(\omega))& <& \max_{i \in \calI} R^+(a_i^{\widehat{T}}(\omega), b_i^{\widehat{T}}(\omega)) =  \max_{i \in \calI} R^+(a_i^{\bar{T}}(\omega), b_i^{\bar{T}}(\omega)).
\end{eqnarray*}
Then according to the Opt-KG policy, the next instance to be labeled must be in $\calI(\omega)$, which leads to the contradiction.  Therefore, $\calI(\omega)$ will be an empty set for any  $\omega$. 

 Let $Y_i^s$ be the random variable which takes the value $1$ if the $s$-th label of the $i$-th instance is 1 and the value $-1$ if the $s$-th label is 0. It is easy to see that $\E(Y_i^s|\theta_i)=\Pr(Y_i^s=1|\theta_i)=\theta_i$. Hence, $Y_i^s$, $s=1,2,\dots$ are independent and identically distributed  random variables. By  the fact that $\lim_{T \rightarrow \infty} \eta_T(i)=\infty$ in all sample paths and using the strong law of large number, we conclude that, conditioning on $\theta_i$, $i=1,\dots,K$, the conditional probability of
\begin{equation*}
  \lim_{T \rightarrow \infty} \frac{a_i^T-b_i^T}{\eta_i(T)} =  \lim_{T \rightarrow \infty} \frac{\sum_{s=1}^{\eta_i(T)} Y_i^s}{\eta_i(T)} =\E(Y_i^s | \theta_i)=2\theta_i-1
\end{equation*}
for all $i=1,\dots,K$, is one.
According to Proposition \ref{prop:H}, we have $H_T=\{i: a_i^T \geq b_i^T\}$ and $H^*=\{i: \theta_i \geq 0.5\}$. The accuracy is $\text{Acc}(T)=\frac{1}{K} \left(|H_T \cap H^*|+|H_T^c \cap (H^*)^c| \right).$ We have:

\begin{align*}
     & \Pr(\lim_{T \rightarrow \infty}\text{Acc}(T)=1|\{\theta_i\}_{i=1}^K)
    =\Pr\left( \lim_{T \rightarrow \infty} (|H_T \cap H^*|+|H_T^c \cap (H^*)^c|) =K | \{\theta_i\}_{i=1}^K\right)  \\
                        \geq& \Pr\left( \lim_{T \rightarrow \infty} \frac{a_i^T-b_i^T}{\eta_i(T)} =2\theta_i-1 , \forall i=1,\ldots, K |\{\theta_i\}_{i=1}^K \right)
                        =1,
\end{align*}

whenever $\theta_i\neq 0.5$ for all $i$.
The last inequality is due to the fact that, as long as $\theta_i$ is not $0.5$ in any $i$, any sample path that gives the event $\lim_{T \rightarrow \infty} \frac{a_i^T-b_i^T}{\eta_i(T)} =2\theta_i-1 , \forall i=1,\ldots, K$ also gives the event $\lim_{T \rightarrow \infty}(a_i^T-b_i^T)=\text{sgn}(2\theta_i-1)(+\infty) $, which further implies $ \lim_{T \rightarrow \infty} (|H_T \cap H^*|+|H_T^c \cap (H^*)^c|) =K $.

Finally, we have:

\begin{align*}
     \Pr\left(\lim_{T \rightarrow \infty}\text{Acc}(T)=1 \right)
&=  \E_{\{\theta_i\}_{i=1}^K}\left[\Pr\left(\lim_{T \rightarrow \infty}\text{Acc}(T)=1|\{\theta_i\}_{i=1}^K\right)\right]  \\
& =  \E_{\{\theta_i:\theta_i\neq 0.5\}_{i=1}^K}\left[\Pr\left(\lim_{T \rightarrow \infty}\text{Acc}(T)=1|\{\theta_i\}_{i=1}^K \right)\right]\\
&=  \E_{\{\theta_i:\theta_i\neq 0.5\}_{i=1}^K}\left[1\right]=1,
\end{align*}

where the second equality is because $\{\theta_i:\exists i, \theta_i=0.5\}$ is a zero measure set.

\section*{Proof of Proposition \ref{prop:pessimistic_KG}}

Recall that our random reward is a two-point distribution with the probability $p_1= \frac{a}{a+b}$ of being $R_1(a,b)  =  h(I(a+1,b))-h(I(a,b))$ and $p_2=\frac{b}{a+b}$ of being $R_2(a,b)= h(I(a,b+1))-h(I(a,b))$. The pessimistic KG selects the next instance which maximizes $R^{-}(a,b)= \min(R_1(a,b), R_2(a,b))$. To show that the policy is inconsistent, we first compute the exact values for $R^{-}(a,b)$ for positive integers $(a,b)$.

Utilizing Corollary \ref{cor:majority_vote} and the basic properties of $I(a,b)$ in \eqref{eq:sym_I}, \eqref{eq:I_a_1}, \eqref{eq:I_b_1},  we have:
\begin{enumerate}
  \item When $a \geq b+1$:
  \begin{align*}
    R_1(a,b) &  = I(a+1,b) - I(a,b) =\frac{0.5^{a+b}}{a B(a,b)} > 0 ; \\
    R_2(a,b) & = I(a,b+1) -I(a,b)= - \frac{0.5^{a+b}}{b B(a,b)} < 0.
  \end{align*}
  Therefore,
  \begin{equation*}
      R^-(a,b) =R_2(a,b)=- \frac{0.5^{a+b}}{b B(a,b)} < 0.
  \end{equation*}
  \item When $a=b$:
  \begin{align*}
        R_1(a,b) &= I(a+1,a) - I(a, a) = \frac{0.5^{2a}}{a B(a,a)} ; \\
        R_2(a,b) &= 1-I(a,a+1) - I(a, a) = \frac{0.5^{2a}}{a B(a,a)}.
  \end{align*}
  Therefore, we have $x_1=x_2$ and
  \begin{equation*}
      R^-(a,b) =R_1(a,b)=R_2(a,b)= \frac{0.5^{2a}}{a B(a,a)}> 0.
  \end{equation*}
  \item When $b-1 \geq a  $:
  \begin{align*}
      R_1(a,b) & = I(a,b) - I(a+1,b)=-\frac{0.5^{a+b}}{a B(a,b)}<0 ;\\
      R_2(a,b) & = I(a,b)-I(a,b+1)=\frac{0.5^{a+b}}{b B(a,b)}>0 .
  \end{align*}
  Therefore
  \begin{equation*}
      R^-(a,b) =R_1(a,b)=- \frac{0.5^{a+b}}{a B(a,b)} < 0.
  \end{equation*}

\end{enumerate}

We summarize the properties of $R^-(a,b)$ in the next Lemma.
\begin{lemma} Properties for   $R^-(a,b)$:
\begin{enumerate}
  \item $R^-(a,b)>0 $ if and only if $a=b$.
  \item $R^-(a,b)$ is symmetric, i.e., $R^-(a,b)=R^-(b,a)$
  \item When $a=b+1$, then $R^-(a,b)=R^-(b+1,b)$ is monotonically increasing in $b$. By the symmetry of $R^-(a,b)$, when $b=a+1$, $R^-(a,b)=R^-(a,a+1)$ is monotonically increasing in $a$.
  \item When $a \geq b+1$, for any fixed $b$, $R^-(a,b)$ is monotonically increasing in $a$. By the symmetry of $R^-(a,b)$,  when $b \geq a+1$, for any fixed $a$, $R^-(a,b)$ is monotonically increasing in $b$.
\end{enumerate}
\label{lem:cvar_left}
\end{lemma}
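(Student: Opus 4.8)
The plan is to read off all four properties directly from the three closed-form expressions for $R^-(a,b)$ established immediately above, together with the elementary Beta identities \eqref{eq:sym_B}, \eqref{eq:B_a_1} and \eqref{eq:B_b_1}; no new analytic ingredient is required. Property~1 is immediate from the case analysis: on the diagonal $R^-(a,b)=\frac{0.5^{2a}}{aB(a,a)}>0$, while in both off-diagonal cases $R^-(a,b)<0$, so $R^-(a,b)>0$ if and only if $a=b$.

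For Property~2 (symmetry) I would compare the two off-diagonal formulas. If $a\geq b+1$ then $R^-(a,b)=-\frac{0.5^{a+b}}{bB(a,b)}$; swapping the arguments places the pair $(b,a)$ into the case $b\leq a-1$, whose formula gives $R^-(b,a)=-\frac{0.5^{a+b}}{bB(b,a)}$. Since $B(a,b)=B(b,a)$ by \eqref{eq:sym_B}, these two expressions agree, and the diagonal case $a=b$ is trivially symmetric.

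Properties~3 and~4 are monotonicity statements, which I would both handle by forming the ratio of consecutive values and simplifying with \eqref{eq:B_a_1} (and, where convenient, \eqref{eq:B_b_1}). For Property~4, with $b$ fixed and $a\geq b+1$, the identity $B(a+1,b)=\frac{a}{a+b}B(a,b)$ gives
$$\frac{R^-(a+1,b)}{R^-(a,b)}=\frac{0.5\,B(a,b)}{B(a+1,b)}=\frac{a+b}{2a},$$
which lies in $(0,1)$ because $a>b$. For Property~3, setting $a=b+1$ one finds $R^-(b+1,b)=-\frac{0.5^{2b}}{bB(b,b)}$, so that $R^-(b+1,b)=-R^+(b,b)$; the cleanest route is then to invoke the monotone decrease of $R^+(a,a)$ from Lemma~\ref{lem:R+}, though a direct computation using \eqref{eq:B_a_1} and \eqref{eq:B_b_1} to relate $B(b+1,b+1)$ to $B(b,b)$ gives the consecutive ratio $\frac{2b+1}{2b+2}\in(0,1)$ as well.

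The one point that requires care — and which I regard as the only genuine subtlety rather than a real obstacle — is the sign bookkeeping. Because $R^-$ is \emph{negative} off the diagonal, a consecutive ratio lying in $(0,1)$ means each successive term is \emph{less negative}, hence strictly larger, which is precisely the claimed monotone increase. Once this translation is stated explicitly, all four properties follow by the routine algebra sketched above.
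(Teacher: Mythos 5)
Your proposal is correct: all four properties follow exactly as you describe, and I verified your key computations (the ratio $\frac{a+b}{2a}\in(0,1)$ for Property~4, the identity $R^-(b+1,b)=-R^+(b,b)$ and consecutive ratio $\frac{2b+1}{2b+2}$ for Property~3, and the symmetry check via $B(a,b)=B(b,a)$), including the essential sign-bookkeeping observation that a ratio in $(0,1)$ between negative terms means the sequence is increasing. The paper itself omits this proof entirely (stating the properties ``can be seen clearly'' from its figure and that the algebra is ``simple''), so your argument supplies exactly the omitted algebra, in the same style as the paper's proof of the analogous Lemma~\ref{lem:R+} for $R^+(a,b)$.
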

For better visualization, we plot values of $R^-(a,b)$ for different $a,b$ in Figure \ref{fig:cvar_left}.  All the properties in Lemma \ref{lem:cvar_left} can be seen clearly from Figure \ref{fig:cvar_left}. The proof of these properties are based on simple algebra and thus omitted here.
\begin{figure}[!t]
\centering
  \includegraphics[width=0.4\textwidth]{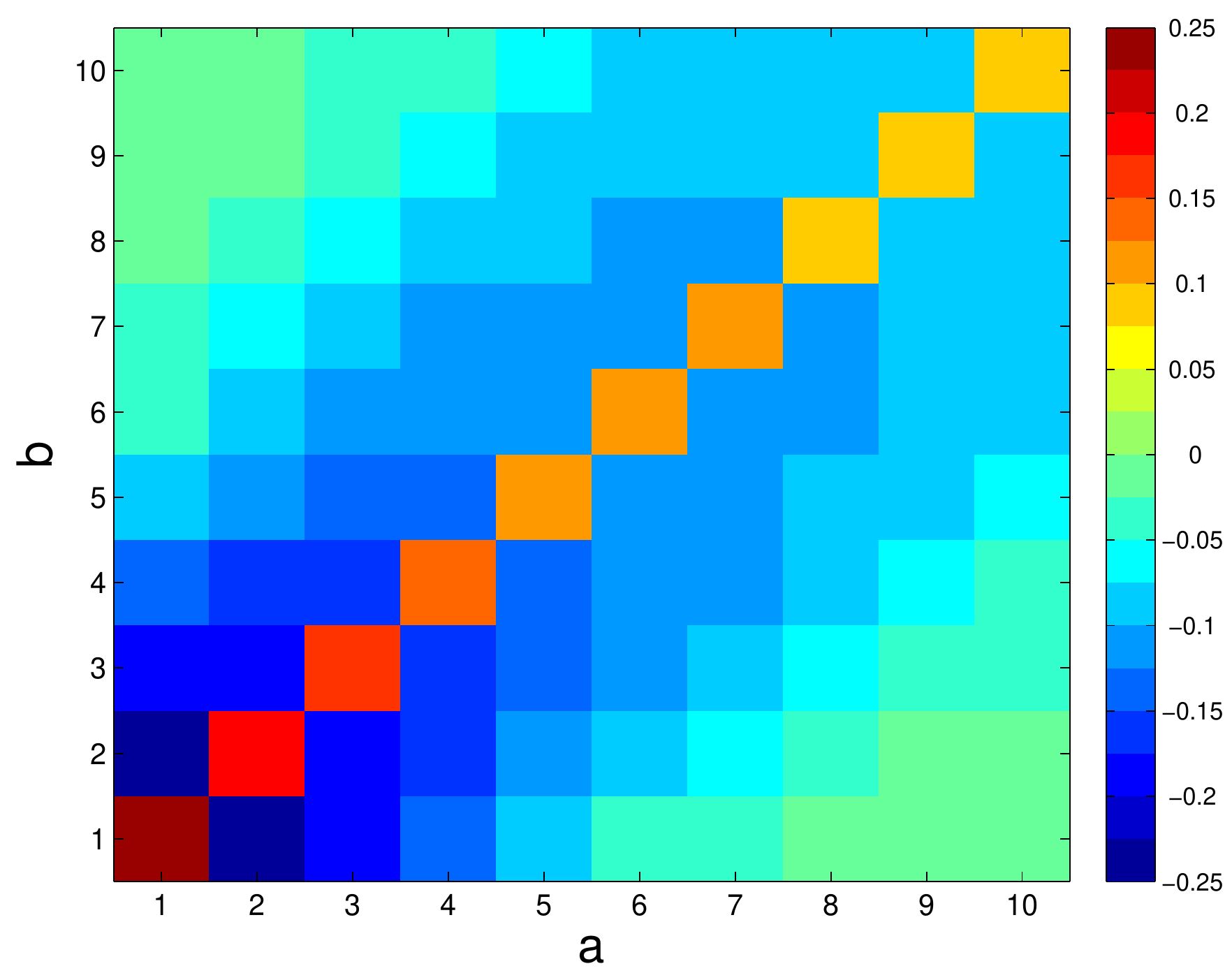}
  \vspace{-3mm}
  \caption{Illustration of $R^{-}(a,b)$.}
  \label{fig:cvar_left}
\end{figure}

From Lemma \ref{lem:cvar_left}, we can conclude that for any positive integers $a,b$ with $a+b \neq 3$:
\begin{eqnarray}
\label{eq:cvar_1_2}
R^-(1,2) = R^-(2,1) < R^-(a,b).
\end{eqnarray}
Recall that the pessimistic KG selects:
$$i_t =\argmax_{i\in \{1, \ldots, K\}} R^-(a_i^t, b_i^t).$$
When starting from the uniform prior with $a_i^0=b_i^0=1$ for all $i\in \{1\ldots, K\}$, the corresponding $R^-(a_i^0,b_i^0)=R^-(1,1)>0$. After obtaining a label for any instance $i$, the Beta parameters for $\theta_i$ will become either $(2,1)$ or $(1,2)$ with $R^-(1,2) = R^-(2,1) <0$. Therefore, for the first $K$ stages, the pessimistic KG policy will acquire the label for each instance once. For any instance $i$, we have either $a_i^{K}=2, b_i^{K}=1$ or $a_i^{K}=1, b_i^{K}=2$ at the stage $K$. Then the pessimistic KG policy will select the first instance to label. According to \eqref{eq:cvar_1_2}, for any $t \geq K$,  $R^-(a^t_1,b^t_1) > R^-(1,2)= R^-(2,1)$. Therefore, the pessimistic KG policy will consistently acquire the label for the first instance. Since the tie will only appear at the stage $K$, the randomized pessimistic KG  will also consistently select a single instance to label after $K$ stages.

\section*{Incorporate Reliability of Heterogeneous Workers}


As we discussed in Section \ref{sec:worker} in main text, we approximate the posterior so that at any stage for all $i, j$,  $\theta_i$ and $\rho_j$ will follow  Beta distributions. In particular, assuming at the current state $\theta_i \sim \B(a_i,b_i)$ and $\rho_j \sim \B(c_j,d_j)$, the posterior distribution conditioned on $z_{ij}$ takes the following form:
\begin{eqnarray*}
  p(\theta_i, \rho_j |z_{ij}=1) = \frac{\Pr(z_{ij}=1| \theta_i, \rho_j) \B(a_i,b_i)\B(c_j,d_j)}{\Pr(z_{ij}=1)} \\
   p(\theta_i, \rho_j |z_{ij}=-1) = \frac{\Pr(z_{ij}=-1| \theta_i, \rho_j) \B(a_i,b_i)\B(c_j,d_j)}{\Pr(z_{ij}=-1)}
\end{eqnarray*}
where the likelihood $\Pr(z_{ij}=z| \theta_i, \rho_j)$  for $z=1,-1$ is defined in \eqref{eq:Z_1} and \eqref{eq:Z_0}, i.e.,
\begin{eqnarray*}
  \Pr(z_{ij}= 1| \theta_i, \rho_j) & = &  \theta_i \rho_j + (1-\theta_i)(1-\rho_j) \\
  \Pr(z_{ij}= -1| \theta_i, \rho_j) & = & (1-\theta_i) \rho_j + \theta_i(1-\rho_j)
\end{eqnarray*}
Also,
\begin{align*}
  \Pr(z_{ij}=1)&=\E(\Pr(z_{ij}=1| \theta_i, \rho_j))               =\E(\theta_i)\E(\rho_j) + (1-\E(\theta_i)) (1-\E(\rho_j)) \\
               &=\frac{a_i}{a_i + b_i} \frac{c_j}{c_j+d_j}+\frac{b_i}{a_i + b_i} \frac{d_j}{c_j+d_j}.
\end{align*}
\begin{align*}
  \Pr(z_{ij}=-1)&=\E(\Pr(z_{ij}=-1| \theta_i, \rho_j))
               =(1-\E(\theta_i))\E(\rho_j) + \E(\theta_i) (1-\E(\rho_j)) \\
               &=\frac{b_i}{a_i + b_i} \frac{c_j}{c_j+d_j}+\frac{a_i}{a_i + b_i} \frac{d_j}{c_j+d_j}.
\end{align*}

The posterior distributions $p(\theta_i, p_j |z_{ij}=z )$ no longer takes the form of the product of Beta distributions on $\theta_i$ and $p_j$. Therefore, we use variational approximation by first assuming the conditional independence of $\theta_i$ and $\rho_j$:
\begin{eqnarray*}
p(\theta_i, \rho_j | z_{ij}=z) \approx  p(\theta_i| z_{ij}=z )p(\rho_j| z_{ij}=z )
\end{eqnarray*}
In fact, the exact form of marginal distributions can be calculated as follows:
\begin{align*}
  p(\theta_i| z_{ij}=1)&=  \frac{\theta_i\E(\rho_j) + (1-\theta_i) (1-\E(\rho_j))}{\Pr(z_{ij}=1)}\B(a_i,b_i) \\
  p(\rho_j| z_{ij}=1) &=  \frac{\E(\theta_i)\rho_j + (1-\E(\theta_i)) (1-\rho_j)}{\Pr(z_{ij}=1)}\B(c_j,d_j) \\
  p(\theta_i| z_{ij}=-1)&=  \frac{(1-\theta_i)\E(\rho_j) + \theta_i (1-\E(\rho_j))}{\Pr(z_{ij}=-1)}\B(a_i,b_i) \\
  p(\rho_j| z_{ij}=-1) &=  \frac{(1-\E(\theta_i))\rho_j + \E(\theta_i) (1-\rho_j)}{\Pr(z_{ij}=-1)}\B(c_j,d_j).
\end{align*}
To approximate the marginal distribution as Beta distribution, we use the moment matching technique. In particular, we approximate
$ p\left(\theta_i| z_{ij}=z\right) \approx  \B(\tilde{a}_i(z), \tilde{b}_i(z)) $  such that
\begin{align}
\tE_z(\theta_i) \doteq \E_{p(\theta_i| z_{ij}=z)} (\theta_i) &= \frac{\tilde{a}_i(z)}{\tilde{a}_i(z)+\tilde{b}_i(z)},  \label{eq:mom_theta_1}\\
\tE_z(\theta_i^2) \doteq \E_{p(\theta_i| z_{ij}=z)} (\theta_i^2) & = \frac{\tilde{a}_i(z)(\tilde{a}_i(z)+1)}{(\tilde{a}_i(z)+\tilde{b}_i(z))(\tilde{a}_i(z)+\tilde{b}_i(z)+1)},\label{eq:mom_theta_2}
\end{align}
where $\frac{\tilde{a}_i(z)}{\tilde{a}_i(z)+\tilde{b}_i(z)}$ and $\frac{\tilde{a}_i(z)(\tilde{a}_i(z)+1)}{(\tilde{a}_i(z)+\tilde{b}_i(z))(\tilde{a}_i(z)+\tilde{b}_i(z)+1)}$ are the first and second order moment of $\B(\tilde{a}_i(z), \tilde{b}_i(z))$.
To make \eqref{eq:mom_theta_1} and \eqref{eq:mom_theta_2} hold, we have:
\begin{align}
  \tilde{a}_i(z) &=\tE_z(\theta_i) \frac{\tE_z(\theta_i)-\tE_z(\theta_i^2)}{\tE_z(\theta_i^2)-\left(\tE_z(\theta_i)\right)^2}, \label{eq:a_worker} \\
  \tilde{b}_i(z) &=(1-\tE_z(\theta_i)) \frac{\tE_z(\theta_i)-\tE_z(\theta_i^2)}{\tE_z(\theta_i^2)-\left(\tE_z(\theta_i)\right)^2}. \label{eq:b_worker}
\end{align}
Similarly,  we approximate $p\left(\rho_j| z_{ij}=z \right) \approx  \B(\tilde{c}_j(z), \tilde{d}_j(z))$, such that
\begin{align}
\tE_z(\rho_j) \doteq \E_{p(\rho_j| z_{ij}=z)} (\rho_j) &= \frac{\tilde{c}_j(z)}{\tilde{c}_j(z)+\tilde{d}_j(z)},  \label{eq:mom_rho_1}\\
\tE_z(\rho_j^2) \doteq \E_{p(\rho_j| z_{ij}=z)} (\rho_j^2) & = \frac{\tilde{c}_j(z)(\tilde{c}_j(z)+1)}{(\tilde{c}_j(z)+\tilde{d}_j(z))(\tilde{c}_j(z)+\tilde{d}_j(z)+1)},\label{eq:mom_rho_2}
\end{align}
where $\frac{\tilde{c}_j(z)}{\tilde{c}_j(z)+\tilde{d}_j(z)}$ and $\frac{\tilde{c}_j(z)(\tilde{c}_j(z)+1)}{(\tilde{c}_j(z)+\tilde{d}_j(z))(\tilde{c}_j(z)+\tilde{d}_j(z)+1)}$ are the first and second order moment of $\B(\tilde{c}_j(z), \tilde{d}_j(z))$.
To make \eqref{eq:mom_theta_1} and \eqref{eq:mom_theta_2} hold, we have:
\begin{align}
  \tilde{c}_j(z) &=\tE_z(\rho_j) \frac{\tE_z(\rho_j)-\tE_z(\rho_j^2)}{\tE_z(\rho_j^2)-\left(\tE_z(\rho_j)\right)^2}, \label{eq:c_worker}\\
  \tilde{d}_j(z) &=(1-\tE_z(\rho_j)) \frac{\tE_z(\rho_j)-\tE_z(\rho_j^2)}{\tE_z(\rho_j^2)-\left(\tE_z(\rho_j)\right)^2}. \label{eq:d_worker}
\end{align}

Furthermore, we can compute the exact values for $\tE_z(\theta_i)$, $\tE_z(\theta_i^2)$, $\tE_z(\rho_j)$ and $\tE_z(\rho_j^2)$ as follows.
\begin{align*}
  \tE_1(\theta_i) & =  \frac{\E(\theta_i^2)\E(\rho_j)+ (\E(\theta_i)-\E(\theta_i^2))(1-\E(\rho_j))}{ p(z_{ij}=1)}
                  = \frac{a_i((a_i+1)c_j+b_id_j)}{(a_i+b_i+1)(a_ic_j+b_id_j)} ; \\
  \tE_1(\theta_i^2) & =  \frac{\E(\theta_i^3)\E(\rho_j)+ (\E(\theta_i^2)-\E(\theta_i^3))(1-\E(\rho_j))}{ p(z_{ij}=1)}
                      = \frac{a_i(a_i+1)((a_i+2)c_j+b_id_j)}{(a_i+b_i+1)(a_i+b_i+2)(a_ic_j+b_id_j)}; \\
  \tE_{-1}(\theta_i) & =  \frac{(\E(\theta_i)-\E(\theta_i^2))\E(\rho_j) + \E(\theta_i^2)(1-\E(\rho_j))}{ p(z_{ij}=-1)}
                   =  \frac{a_i(b_ic_j+(a_i+1)d_j)}{(a_i+b_i+1)(b_ic_j+a_id_j)}; \\
  \tE_{-1}(\theta_i^2)& = \frac{(\E(\theta_i^2)-\E(\theta_i^3))\E(\rho_j) + \E(\theta_i^3)(1-\E(\rho_j))}{ p(z_{ij}=-1)}
                   =  \frac{a_i(a_i+1)(b_ic_j+(a_i+2)d_j)}{(a_i+b_i+1)(a_i+b_i+2)(b_ic_j+a_id_j)}; \\
  \tE_1(\rho_j)  & = \frac{\E(\theta_i)\E(\rho_j^2)+ (1-\E(\theta_i))(\E(\rho_j)-\E(\rho_j^2))}{ p(z_{ij}=1)}
                  = \frac{c_j(a_i(c_j+1)+b_id_j)}{(c_j+d_j+1)(a_ic_j+b_id_j)} ; \\
  \tE_1(\rho_j^2) &   =  \frac{\E(\theta_i)\E(\rho_j^3)+ (1-\E(\theta_i))(\E(\rho_j^2)-\E(\rho_j^3))}{ p(z_{ij}=1)}
                   = \frac{c_j(c_j+1)(a_i(c_j+2)+b_id_j)}{(c_j+d_j+1)(c_j+d_j+2)(a_ic_j+b_id_j)}; \\
 \tE_{{-1}}(\rho_j)& =  \frac{(1-\E(\theta_i))\E(\rho_j^2) + \E(\theta_i)(\E(\rho_j)-\E(\rho_j^2))}{ p(z_{ij}=-1)}
                = \frac{c_j(b_i(c_j+1)+a_id_j)}{(c_j+d_j+1)(b_ic_j+a_id_j)}; \\
 \tE_{-1}(\rho_j^2)  & =  \frac{(1-\E(\theta_i))\E(\rho_j^3) + \E(\theta_i)(\E(\rho_j^2)-\E(\rho_j^3))}{ p(z_{ij}=-1)}
                   = \frac{c_j(c_j+1)(b_i(c_j+2)+a_id_j)}{(c_j+d_j+1)(c_j+d_j+2)(b_ic_j+a_id_j)}.
\end{align*}

Assuming that at a certain stage, $\theta_i$ follows a Beta posterior $\B(a_i,b_i)$ and $\rho_j$ follows a Beta posterior $\B(c_j,d_j)$, the reward of getting  positive and negative labels for the $i$-th instance from the $j$-th worker are:
\begin{align}
  R_1(a_i,b_i,c_j,d_j)&=h(I(\tilde{a}_i(z=1), \tilde{b}_i(z=1)))-h(I(a_i, b_i)) \\
  R_2(a_i,b_i,c_j,d_j)&=h(I(\tilde{a}_i(z=-1), \tilde{b}_i(z=-1)))-h(I(a_i, b_i)),
\end{align}
where $\tilde{a}_i(z=\pm 1)$ and  $\tilde{b}_i(z=\pm 1)$ are defined in \eqref{eq:a_worker} and \eqref{eq:b_worker}, which further depend on $c_j$ and $d_j$ through $\tE_z(\theta_i)$ and $\tE_z(\theta_i^2)$. With the reward in place,  we can directly apply the Opt-KG policy in the heterogeneous worker setting.

\section*{Extensions}

\subsection*{Utilizing Contextual Information}

When each instance is associated with a $p$-dimensional feature vector $\bx_i \in \mathbb{R}^p$, we incorporate the feature information in our budget allocation problem by assuming:
\begin{equation}
  \theta_i = \sigma(\langle \bw,  \bx_i \rangle) \doteq \frac{1}{1+\exp\{-\langle \bw,  \bx_i \rangle \}},
\end{equation}
where $\sigma(x)=\frac{1}{1+\exp\{-x\}}$ is the sigmoid function and $\bw$ is assumed to be drawn from a Gaussian prior  $N(\bmu_0, \bSigma_0)$. At the $t$-th stage with the state $S^t=(\bmu_t, \bSigma_t)$ and  $\bw \sim (\bmu_t, \bSigma_t)$, the decision maker chooses the $i_t$-th instance to be labeled and observes the label $y_{i_t} \in \{-1,1\}$. The posterior distribution $p(\bw | y_{i_t}, S^t) \propto p( y_{i_t} |\bw)p(\bw |S^t)$  has the following log-likelihood:
 \begin{align*}
& \ln  p(\bw | y_{i_t}, S^t)  = \ln p( y_{i_t} |\bw) + \ln p(\bw |S^t) + \text{const} \\
= & \mathbf{1}(y_{i_t}=1) \ln  \sigma(\langle \bw, \bx_{i_{t}} \rangle) + \mathbf{1}(y_{i_t}=-1)  \ln  \left( 1- \sigma(\langle \bw, \bx_{i_{t}} \rangle) \right)
  -\frac{1}{2} (\bw - \bmu_t)' \bOmega_t (\bw- \bmu_t)   +\text{const},
\end{align*}
where $\bOmega_t = (\bSigma_t)^{-1}$ is the precision matrix. To approximate  $p(\bw | y_{i_t}, \bmu_t, \bSigma_t)$ by a Gaussian distribution $N(\bmu_{t+1}, \bSigma_{t+1})$, we use the Laplace method (see Chapter 4.4 in \cite{Bishop:PRML}). In particular, the mean of the posterior Gaussian is the MAP (maximum a posteriori) estimator of $\bw$:
\begin{equation}
  \mu_{t+1} = \argmax_{\bw}  \ln  p(\bw | y_{i_t},S^t),
  \label{eq:post_mean}
\end{equation}
which can be computed by any numerical optimization method (e.g., Newton's method). The precision matrix takes the following form,
\begin{align*}
  \bOmega_{t+1}  & =  - \nabla^2 \ln  p(\bw | y_{i_t}, S^t) \big|_{\bw=\bmu_{t+1}}
               = \bOmega_t + \sigma(\bmu_{t+1}' \bx_{i_{t+1}})(1-\sigma(\bmu_{t+1}' \bx_{i_{t+1}})) \bx_{i_{t+1}}  \bx_{i_{t+1}}'.
\end{align*}
By Sherman-Morrison formula, the covariance matrix can be computed as,
\begin{align*}
   & \bSigma_{t+1} = (\bOmega_{t+1})^{-1}
  =  \bSigma_{t} -  \frac{ \sigma(\bmu_{t+1}'\bx_{i_{t}})(1-\sigma(\bmu_{t+1}\bx_{i_{t}}))}{1+ \sigma(\bmu_{t+1}'\bx_{i_{t}})(1-\sigma(\bmu_{t+1}'\bx_{i_{t}}))\bx_{i_{t}}'  \bSigma_{t} \bx_{i_{t}} }\bSigma_{t}  \bx_{i_{t+1}}\bx_{i_{t}}'\bSigma_{t}.
\end{align*}

We also calculate the transition probability of $y_{i_t}=1$ and $y_{i_t}=-1$  using the technique from Bayesian logistic regression (see Chapter 4.5 in \cite{Bishop:PRML}):
\begin{align*}
  \Pr(y_{i_t}=1 | S^t, i_t)  = \int p(y_{i_t}=1 | \bw) p(\bw | S^t ) \mathrm{d} \bw
                            = \int \sigma(\bw' \bx_i )  p(\bw | S^t) \mathrm{d} \bw
                             \approx \sigma(\mu_i\kappa(s_i^2)),
\end{align*}
where $\kappa(s_i^2)  = (1+\pi s_i^2/8)^{-1/2}$ and $\mu_i = \langle \bmu_t,  \bx_i \rangle$  and $s_i^2= \bx_i' \bSigma_t \bx_i$.

To calculate the reward function, in addition to the transition probability, we also need to compute:
\begin{align*}
  P_i^t &=\Pr( \theta_i \geq 0.5 | \calF_t) \\
        & = \Pr\left( \frac{1}{1+\exp\{-\bw'_t\bx_i\}} \geq 0.5 \Big| \bw_t \sim N(\bmu_t, \bSigma_t)\right)\\
        & = \Pr( \bw'_t \bx_i \geq 0 | \bw_t \sim N(\bmu_t, \bSigma_t)) \\
        & = \int_{0}^{\infty} \left( \int_{\bw} \delta(c- \langle \bw, \bx_i \rangle) N(\bw | \bmu_t, \bSigma_t) \mathrm{d} \bw \right)  \mathrm{d} c,
\end{align*}
where $\delta(\cdot)$ is the Dirac delta function. Let
\begin{eqnarray*}
   p(c)= \int_{\bw} \delta(c- \langle \bw, \bx_i \rangle) N(\bw | \bmu_t, \bSigma_t) \mathrm{d} \bw.
\end{eqnarray*}
Since the marginal of a Gaussian distribution is still a Gaussian, $p(c)$ is a univariate-Gaussian distribution with the mean and variance:
\begin{eqnarray*}
\mu_i & = & \E(c)=\langle  \E(\bw), \bx_i \rangle = \langle \bmu_t, \bx_i \rangle  \\
s_i^2 & = & \V(c)= (\bx_i)' \text{Cov} (\bw, \bw)  \bx_i = (\bx_i)' \bSigma_t   \bx_i.
\end{eqnarray*}
Therefore, we have:
\begin{eqnarray}
  P_i^t = \int_{0}^{\infty} p(c)  \mathrm{d} c  = 1  - \Phi \left(-\frac{\mu_i}{s_i} \right),
  \label{eq:P_i_t_feature}
\end{eqnarray}
where $\Phi(\cdot)$ is the CDF of the standard Gaussian distribution.

With $P_i^t$ and transition probability in place, the expected reward in value function takes the following form :
\begin{eqnarray}
 R(S^t, i_t) =  \E\left(\sum_{i=1}^K h(P_i^{t+1}) -  \sum_{i=1}^K h(P_i^{t}) \Big|S^{t}, i_t \right).
 \label{eq:reward_feature}
\end{eqnarray}
We note that since  $\bw$ will affect all $P_i^t$, the summation from 1 to $K$ in \eqref{eq:reward_feature} can not be omitted and hence \eqref{eq:reward_feature} cannot be written as $\E \left( h(P_{i_t}^{t+1}) -   h(P_{i_t}^{t}) |S^{t}, i_t \right)$ in \eqref{eq:exp_reward}. In this problem,  KG or Opt-KG need to solve $O(2TK)$ optimization problems to compute the mean of the posterior as in \eqref{eq:post_mean}, which could be computationally quite expensive. One possibility to address this problem is to use the variational Bayesian logistic regression \cite{Jaakkola:00}, which could lead to a faster optimization procedure.

\subsection*{Multi-Class Categorization}

Given the model and notations introduced in Section \ref{sec:multi}, at the final stage $T$ when all budget is used up, we construct the set $H^T_c$ for each class $c$ to maximize the conditional expected classification accuracy:

\begin{align}
  \{H_c^T\}_{c=1}^C & =  \argmax_{H_c\subseteq\{1,\dots,C\}, H_c \cap H_{\tilde{c}}=\emptyset} \E\left( \sum_{i=1}^K \sum_{c=1}^C  I(i \in H_c)  I(i \in H^*_c) \Bigg| \calF_T \right)  \nonumber \\
   &= \argmax_{H_c\subseteq\{1,\dots,C\}, H_c \cap H_{\tilde{c}}=\emptyset}   \sum_{i=1}^K \sum_{c=1}^C  I(i \in H_c)  \Pr \left(i \in H^*_c | \calF_T \right).
  \label{eq:multi_H}
\end{align}

Here, $H^*_c=\{i: \theta_{ic} \geq \theta_{ic'}, \forall c' \neq c  \}$ is the true set of instances in the  class $c$. The set $H_c^T$ consists of instances that belong to class $c$. Therefore, $\{H_c^T\}_{c=1}^C$ should form a partition of all instances $\{1,\ldots, K\}$. Let

\begin{eqnarray}
P^T_{ic}= \Pr(i \in H^*_c | \calF_T) = \Pr (  \theta_{ic} \geq \theta_{i\tilde{c}}, \;\; \forall \;\; \tilde{c}\neq c| \calF_T).
\label{eq:P_T_c}
\end{eqnarray}
To maximize the right hand side of \eqref{eq:multi_H}, we have
\begin{equation}
  H_c^T= \{i:  P^T_{ic}\geq P^T_{i\tilde{c}}, \forall \tilde{c} \neq c\}.
\end{equation}
If there is $i$ belongs to more than one $H_c^T$, we only assign it to the one with the smallest index $c$.
The maximum conditional expected accuracy takes the form: $\sum_{i=1}^K \left( \max_{c \in \{1\ldots, C\}} P_{ic}^T \right).$

Then the value function can be defined as:
\begin{align}
  V(S^0) & \doteq \sup_{\pi }\E^{\pi} \left( \E \Biggl(\sum_{i=1}^K \sum_{c=1}^C I(i\in H_{c}^T) I(i\in H^*_c) \Big| \calF_T \Biggr )  \right)= \sup_{\pi } \E^{\pi} \left( \sum_{i=1}^K h(\bP_i^T)  \right), \nonumber
\end{align}
where $\bP_i^T=(P_{i1}^T, \ldots, P_{iC}^T)$ and $h(\bP_i^T) \doteq  \max_{c \in \{1\ldots, C\}} P_{ic}^T.$ Following Proposition \ref{prop:reward}, let $P^t_{ic}= \Pr(i \in H^*_c | \calF_t)$ and $\bP_i^t =(P_{i1}^t, \ldots, P_{iC}^t)$, we define incremental reward function at each stage:
\begin{equation*}
R(S^t, i_t)=\E \left( h(\bP_{i_t}^{t+1}) -   h(\bP_{i_t}^{t}) |S^{t}, i_t \right).
\end{equation*}
The value function can be re-written as:
\begin{align*}
 V(S^0)   = G_0(S^0) + \sup_{\pi} \E^{\pi} \left( \sum_{t=0}^{T-1}  R(S^{t}, i_t ) \Big| S^0  \right),
\end{align*}
where $G_0(S^0)=\sum_{i=1}^K h(\bP_i^0) $. Since the reward function only depends on $S_{i_t}^t=\balpha_{i_t}^t \in \mathbb{R}_{+}^C$, we can define the reward function in a more explicit way by defining:
\begin{align*}
  R(\balpha)&=\sum_{c=1}^C \frac{\alpha_c}{\sum_{\tilde{c}=1}^C \alpha_{\tilde{c}}} h(I(\balpha+\bdelta_c))  -h(I(\balpha)).
\end{align*}
Here $\bdelta_c$ be a row vector of length $C$ with one at the $c$-th entry and zeros at all other entries; and $I(\balpha) =(I_1(\balpha), \ldots,  I_C(\balpha))$ where
\begin{eqnarray}
I_c(\balpha) =\Pr(\theta_c \geq \theta_{\tilde{c}}, \forall \tilde{c} \neq c | \theta \sim \Dir(\balpha)).
\label{eq:multi_I}
\end{eqnarray}
Therefore, we have $R(S^t, i_t)=R(\balpha_{i_t}^t)$.

To evaluate the reward $R(\balpha)$, the major bottleneck is how to compute $I_c(\balpha)$ efficiently.  Directly taking the $C$-dimensional integration on the region $\{\theta_c \geq \theta_{\tilde{c}}, \forall \tilde{c} \neq c\} \cap \Delta_C$ will be computationally very expensive, where $\Delta_C$ denotes the $C$-dimensional simplex. Therefore, we propose a method to convert the computation of  $I_c(\balpha)$ into a one-dimensional integration. It is known that to generate $\theta \sim \Dir(\balpha)$, it is equivalent to generate $\{X_c\}_{c=1}^C$ with $X_c \sim \text{Gamma}(\alpha_c, 1)$ and let $\theta_c \equiv \frac{X_c}{\sum_{c=1}^C X_c}$.  Then $\theta=(\theta_1, \ldots, \theta_C)$ will follow $\Dir(\balpha)$. Therefore, we have:
\begin{eqnarray}
  I_c(\balpha) =\Pr(X_c \geq X_{\tilde{c}}, \forall \tilde{c} \neq c | X_c \sim \text{Gamma}(\alpha_c, 1)).
\end{eqnarray}
It is easy to see that
\begin{align}
  \label{eq:I_C}
   I_c(\balpha)
    = &  \int_{0 \leq x_1 \leq x_c} \cdots   \int_{x_c \geq 0} \cdots  \int_{0 \leq  x_C \leq x_c} \prod_{c=1}^C f_\text{Gamma}(x_c ; \alpha_c, 1) \mathrm{d}x_1 \ldots \mathrm{d}x_C  \\
  = & \int_{x_c \geq 0} f_\text{Gamma}(x_c ; \alpha_c, 1) \prod_{\tilde{c} \neq c}  F_{\text{Gamma}}(x_c; \alpha_{\tilde{c}}, 1) \mathrm{d} x_c, \nonumber
\end{align}
where $f_{\text{Gamma}}(x; \alpha_{c}, 1)$  is the density function of Gamma distribution with the parameter $(\alpha_c, 1)$ and $F_{\text{Gamma}}(x_c; \alpha_{\tilde{c}}, 1)$ is  the CDF of Gamma distribution at $x_c$ with the parameter $(\alpha_{\tilde{c}}, 1)$. In many softwares,  $F_{\text{Gamma}}(x_c; \alpha_{\tilde{c}}, 1)$ can be calculated very efficiently without an explicit integration. Therefore, we can evaluate $I_c(\balpha)$ by  performing  only a one-dimensional numerical integration as in \eqref{eq:I_C}.
We could also use Monte-Carlo approximation to further accelerate the computation in \eqref{eq:I_C}.

\vskip 0.2in
\bibliographystyle{abbrv}
\bibliography{ref}

\end{document}